\documentclass[twoside,11pt]{article}

\usepackage[T1]{fontenc}							
\usepackage[utf8]{inputenc}
\usepackage{amsmath}
\usepackage{amssymb}
\usepackage{graphicx}
\usepackage{xcolor}
\usepackage{xspace}
\usepackage{tikz}
\usepackage{times}
\usepackage{geometry}
\usepackage{color,colortbl}

\usetikzlibrary{decorations.pathreplacing}
\usetikzlibrary{fit,backgrounds}
\usetikzlibrary{arrows,decorations.pathmorphing,backgrounds,positioning,fit,matrix,calc}

\usepackage{pgfplots}
\usepackage{geometry}
\usepackage{algorithmicx}
\usepackage{subfigure}
\usetikzlibrary{external}
\usepackage{jmlr2e}
\usepackage[pdftex]{hyperref}


\ShortHeadings{Cells in MDRNNs <\the\day.\the\month.\the\year>}{Gundram Leifert et al. <\the\day.\the\month.\the\year>}
\jmlrheading{<no>}{<jear>}{<pagerange>}{5/14; Revised 12/15}{<publishdate>}{Gundram Leifert, Tobias Strau\ss, Tobias Gr\"uning, Welf Wustlich, Roger Labahn} 
\title{Cells in Multidimensional Recurrent Neural Networks} 
\author{\name Gundram Leifert \email gundram.leifert@uni-rostock.de\\
\name Tobias Strau\ss \email tobias.strauss@uni-rostock.de\\
\name Tobias Gr\"uning \email tobias.gruening@uni-rostock.de\\
\name Welf Wustlich \email welf.wustlich@uni-rostock.de\\
\name Roger Labahn \email roger.labahn@uni-rostock.de\\
\addr University of Rostock\\
Institute of Mathematics\\
18051 Rostock, Germany}
\editor{Yoshua Bengio}


\newcommand{\LTD}{NVG\xspace}
\newcommand{\NGEC}{NEG\xspace}

\newcommand{\COD}{COD\xspace}
\newcommand{\IR}{\mathbb{R}}

\newcommand{\IZ}{\mathbb{Z}}
\newcommand{\IN}{\mathbb{N}}

\newcommand{\ZZ}{\mathcal{Z}}
\newcommand{\FF}{\mathcal{F}}

\newcommand{\fp}{{\pmb{p}}}

\newcommand{\fq}{{\pmb{q}}}

\newcommand{\fy}{\pmb{y}_\Gamma}

\newcommand{\la}{\lambda}

\newcommand{\eps}{\varepsilon}
\newcommand{\fconv}[1]{
\ifthenelse{\equal{#1}{}}{g_{conv}}{g_{conv}\left(#1\right)}
}
\newcommand{\frein}[1]{
\ifthenelse{\equal{#1}{}}{g_{int}}{g_{int}\left(#1\right)}
}
\newcommand{\fraus}[1]{
\ifthenelse{\equal{#1}{}}{g_{out}}{g_{out}\left(#1\right)}
}
\newcommand{\fl}{f_{log}}
\newcommand{\foutshort}{h_c}
\newcommand{\fout}[1]{h_c\left(#1\right)}
\newcommand{\foutder}[1]{h'_c\left(#1\right)}
\newcommand{\oneD}{\(1\)D\xspace}

\newcommand{\multiD}{MD\xspace}
\newcommand{\der}{derivative\xspace}
\newcommand{\asignal}{a signal\xspace}

\newcommand{\eqtr}{\underset{tr}{=}}
\newcommand{\intr}{\underset{tr}{\in}}
\newcommand{\cin}{{c_{in}}}
\newcommand{\yin}{y_{\cin}}
\newcommand{\mc}[3]{\multicolumn{#1}{#2}{#3}}
\definecolor{color1}{rgb}{0.7,0.7,0.7}
\definecolor{color2}{rgb}{1.0,0.0,0.0}
\definecolor{color3}{rgb}{0.0,0.0,1.0}
\definecolor{colorb}{rgb}{1.0,1.0,1.0}

\def\sigmoid[#1]{
\draw[shift={#1}] (-0.3,-0.3) parabola (0,0) parabola [bend at end]  (0.3,0.3);}
\newcommand{\drawtris}[8]
{
  \draw[->,shift={(#1,#2)},rotate=#4] (-#5-#8,.9*#7) -- (-#5,.9*#7);
  \draw[->,shift={(#1,#2)},rotate=#4] (-#5-#8,.6*#7) -- (-#5,.6*#7);
  \draw[->,shift={(#1,#2)},rotate=#4] (-#5-#8,.3*#7) -- (-#5,.3*#7);
  \draw[->,shift={(#1,#2)},rotate=#4] (-#5-#8,-.9*#7) -- (-#5,-.9*#7);
  \draw[->,shift={(#1,#2)},rotate=#4] (-#5-#8,-.6*#7) -- (-#5,-.6*#7);
  \draw[->,shift={(#1,#2)},rotate=#4] (-#5-#8,-.3*#7) -- (-#5,-.3*#7);
  \draw[->,shift={(#1,#2)},rotate=#4] (-#5-#8,0) -- (-#5,0);
  \draw[shift={(#1,#2)},rotate=#4] (-#5,-#7) -- (#6,0) -- (-#5,#7) -- cycle;
  \node[shift={(#1,#2)}] at (0,0) {#3};
   \node[shift={(#1,#2)},rotate=#4,shift={(-#5-#8-0.8,-.6*#7)},rotate=-#4] at (0,0) {$\mathbf{y}_I(t)$};
   \node[shift={(#1,#2)},rotate=#4,shift={(-#5-#8-0.8,.6*#7)},rotate=-#4] at (0,0) {$\mathbf{y}_H(t-1)$};
}
\newcommand{\drawtri}[3]
{
  \draw[->,rotate=\picrotation,shift={(#1,#2)}] (-\picleftside-\picarrowlength,.9*\picheight) -- (-\picleftside,.9*\picheight);
  \draw[->,rotate=\picrotation,shift={(#1,#2)}] (-\picleftside-\picarrowlength,.6*\picheight) -- (-\picleftside,.6*\picheight);
  \draw[->,rotate=\picrotation,shift={(#1,#2)}] (-\picleftside-\picarrowlength,.3*\picheight) -- (-\picleftside,.3*\picheight);
  \draw[->,rotate=\picrotation,shift={(#1,#2)}] (-\picleftside-\picarrowlength,-.9*\picheight) -- (-\picleftside,-.9*\picheight);
  \draw[->,rotate=\picrotation,shift={(#1,#2)}] (-\picleftside-\picarrowlength,-.6*\picheight) -- (-\picleftside,-.6*\picheight);
  \draw[->,rotate=\picrotation,shift={(#1,#2)}] (-\picleftside-\picarrowlength,-.3*\picheight) -- (-\picleftside,-.3*\picheight);
  \draw[->,rotate=\picrotation,shift={(#1,#2)}] (-\picleftside-\picarrowlength,0) -- (-\picleftside,0);
  \draw[rotate=\picrotation,shift={(#1,#2)}] (-\picleftside,-\picheight) -- (\picrightside,0) -- (-\picleftside,\picheight) -- cycle;
  \node[rotate=\picrotation,shift={(#1,#2)},rotate=-\picrotation] at (0,0) {#3};
  \node[rotate=\picrotation,shift={(#1-\picleftside-\picarrowlength-1.4,.7*\picheight+#2)},rotate=-\picrotation] at (0,0) {$\mathbf{y}_H(t-1)$};
  \node[rotate=\picrotation,shift={(#1-\picleftside-\picarrowlength-1.4,-.7*\picheight+#2)},rotate=-\picrotation] at (0,0) {$\mathbf{y}_I(t)$};
}
\newcommand{\drawuniteasy}[4]{\drawtris{#1}{#2}{#3}{#4}{0.35}{0.8}{1.2}{1}}


\pgfplotstableread[col sep=tab]{
f	omega	H_1_m05	H_1_01	H_1_02	H_1_05	H_1_08	H_1_09	ID	H_2_01	H_2_02	H_2_05	H_m05_01	H_01_01	H_02_01	H_05_01	H_08_01	H_09_01	H_m05_02	H_01_02	H_02_02	H_05_02	H_08_02	H_09_02	H_m05_05	H_01_05	H_02_05	H_05_05	H_08_05	H_09_05	PID_2_09	PID_2_05	PID_2_01	PID_05_09	PID_05_05	PID_05_01	PID_01_09	PID_01_05	PID_01_01	PID_09_09	PID_09_05	PID_09_01
0	0	1	1	1	1	1	1	1	1	1	1	1	1	1	1	1	1	1	1	1	1	1	1	1	1	1	1	1	1	0.1	0.5	0.9	0.1	0.5	0.9	0.1	0.5	0.9	0.1	0.5	0.9
0.01	0.0628318531	1.0004387934	0.9997564752	0.9993839224	0.996076667	0.9627270885	0.8590136893	1	0.9998223898	0.9996842267	0.9995065604	1.0002611053	0.9995789082	0.9992064216	0.9958997537	0.9625560984	0.8588611197	1.0001228816	0.9994407788	0.9990683437	0.9957621326	0.962423085	0.8587424357	0.9999451373	0.9992631557	0.9988907868	0.9955851633	0.9622520408	0.8585898179	0.1164125802	0.5019693931	0.9002192257	0.1159558549	0.5	0.8966873659	0.1163842309	0.5018471511	0.9	0.1	0.4311985803	0.7733006382
0.02	0.1256637061	1.0017569079	0.9990279256	0.997544915	0.9845929276	0.8719048811	0.6429101936	1	0.9992900711	0.9987375553	0.9980267284	1.0010457317	0.9983186868	0.9968367291	0.9838939366	0.8712858907	0.6424537731	1.0004922452	0.9977667081	0.9962855697	0.9833499335	0.8708041495	0.6420985551	0.9997801694	0.9970565722	0.995576488	0.9826500583	0.870184376	0.6416415572	0.1555427196	0.5078240824	0.9008757183	0.1531462617	0.5	0.8869958608	0.1553915205	0.5073304396	0.9	0.1	0.3264852791	0.5791821824
0.03	0.1884955592	1.0039595601	0.9978203877	0.9945103042	0.9663522837	0.7650525044	0.4886314038	1	0.9984045799	0.9971619328	0.9955619646	1.0023578229	0.9962284449	0.9929236424	0.9648105458	0.7638319243	0.4878518314	1.0011102554	0.9949885064	0.9916878171	0.9636097109	0.762881234	0.487244635	0.9995039521	0.9933920255	0.9900966322	0.962063578	0.7616571743	0.4864628403	0.2046532401	0.5174096532	0.9019659361	0.1977671259	0.5	0.8716168421	0.2042071754	0.5162819008	0.9	0.1	0.2528226052	0.4407288815
0.04	0.2513274123	1.0070554934	0.9961437986	0.9903244947	0.9425295265	0.6656801858	0.3876367634	1	0.9971684757	0.994960608	0.9921147013	1.0042039914	0.9933231933	0.9875203668	0.9398607312	0.6637952962	0.3865391605	1.0019805461	0.9911238395	0.9853338614	0.9377797508	0.6623255624	0.3856833098	0.9991145601	0.9882889072	0.9825154902	0.9350973996	0.6604310987	0.3845801317	0.2579734676	0.5304873598	0.9034840163	0.2431476102	0.5	0.8515603621	0.2569786699	0.5284416937	0.9	0.1	0.2056364031	0.3502236198
0.05	0.3141592654	1.0110570505	0.9940118148	0.9850473476	0.9144826257	0.5814602765	0.3192782426	1	0.9955853419	0.9921381382	0.9876883406	1.0065935793	0.9896235925	0.9806987003	0.9104454976	0.5788933282	0.3178687383	1.0031082597	0.9861970313	0.9773030415	0.9072930897	0.5768889162	0.3167681211	0.9986092605	0.9817738799	0.9729197802	0.9032238271	0.5743015357	0.3153473976	0.3132064346	0.5467572439	0.9054218336	0.2864218427	0.5	0.8279925358	0.3113308965	0.5434831602	0.9	0.1	0.1745676919	0.2890814918
0.06	0.3769911184	1.0159802775	0.9914415678	0.9787520722	0.8835749499	0.5123867635	0.2707628725	1	0.9936597846	0.9887003972	0.9822872507	1.0095387436	0.9851556147	0.9725465732	0.8779728944	0.509138121	0.2690461776	1.0045001039	0.9802386719	0.9676925626	0.873590904	0.5065969966	0.2677033596	0.9979844736	0.9738804119	0.9614156821	0.8679244084	0.5033109852	0.2659669176	0.3693268544	0.5658829509	0.9077690801	0.3263279569	0.5	0.8020820195	0.3661659956	0.56103988	0.9	0.1	0.1532200933	0.2457901637
0.07	0.4398229715	1.0218450594	0.9884533645	0.9715227722	0.8510432447	0.4561069056	0.2348483739	1	0.9913974326	0.9846545876	0.9759167619	1.0130545684	0.9799501279	0.9631651821	0.8437220878	0.4521832152	0.2328280749	1.0061644255	0.97328514	0.9566143546	0.8379836351	0.449107757	0.2312445287	0.9972357215	0.9646482069	0.948125358	0.8305473676	0.4451223744	0.2291924646	0.4258066528	0.5875142105	0.9105133659	0.3623798754	0.5	0.7748862493	0.4208900186	0.5807303981	0.9	0.1	0.137976757	0.2138325834
0.08	0.5026548246	1.028675289	0.9850703437	0.9634517991	0.8179228424	0.4100381592	0.2073243731	1	0.9888049365	0.9800092538	0.9685831611	1.0171592038	0.9740424186	0.952665895	0.8087661442	0.405447756	0.2050033635	1.0081113024	0.9653780525	0.9441916788	0.8015719545	0.4018411905	0.2031798041	0.9963575632	0.9541225474	0.9331831892	0.7922262922	0.3971560565	0.2008108966	0.4823359575	0.6113046049	0.9136403362	0.3945135974	0.5	0.7472873007	0.4751348474	0.6021780372	0.9	0.1	0.126738344	0.1894199099
0.09	0.5654866776	1.0364990712	0.9813180976	0.9546370612	0.7850231207	0.3719841958	0.1856282252	1	0.9858899668	0.9747743001	0.9602936857	1.0218740349	0.9674716666	0.9411671005	0.7739464184	0.3667354864	0.1830090048	1.0103526567	0.9565636617	0.9305556732	0.765220363	0.3626006341	0.1809456233	0.9953435133	0.9423535727	0.916731942	0.7538527459	0.3572140744	0.1782576125	0.5387111787	0.6369239158	0.9171338043	0.4229007307	0.5	0.7199712411	0.5286470291	0.6250249653	0.9	0.1	0.1182310561	0.1702459203
0.1	0.6283185307	1.0453489648	0.9772242725	0.9451794243	0.7529377602	0.3402202929	0.1681279282	1	0.9826612127	0.9689610096	0.9510565163	1.0272238814	0.9602803887	0.9287911593	0.7398827325	0.3343212856	0.1652127938	1.0129023883	0.9468922177	0.9158420093	0.7295673323	0.3296601985	0.162909407	0.9941859447	0.9293955122	0.8989190506	0.7160863632	0.3235687265	0.1598991617	0.5947851798	0.6640655131	0.9209758961	0.4478362211	0.5	0.6934375283	0.5812385146	0.6489409379	0.9	0.1	0.1116479589	0.1548417693
0.11	0.6911503838	1.0552622648	0.9728181579	0.9351803142	0.7220739806	0.313427725	0.1537413611	1	0.9791283796	0.962582068	0.940880769	1.0332372314	0.9525138665	0.9156615857	0.7070031266	0.3068859805	0.1505325297	1.0157765331	0.9364173142	0.9001878008	0.6950554655	0.3016999077	0.1479886773	0.9928759711	0.9153058964	0.8798931731	0.6793855221	0.2948981189	0.14465229	0.6504430513	0.6924498229	0.9251472053	0.4696680032	0.5	0.6680247252	0.632762811	0.6736277611	0.9	0.1	0.1064581782	0.1422333905
0.12	0.7539822369	1.0662813284	0.968130276	0.9247396016	0.6926886432	0.290603697	0.1417254752	1	0.9753021854	0.9556515896	0.9297764859	1.0399465099	0.9442195739	0.9019005544	0.6755807475	0.2834264207	0.1382251657	1.0189934465	0.9251952372	0.8837288703	0.661969003	0.2777158849	0.1354401757	0.9914033065	0.9001447659	0.8598011372	0.6440456125	0.2701964841	0.1317730143	0.7055894491	0.7218250291	0.9296269545	0.4887537981	0.5	0.6439420338	0.6831025081	0.6988206646	0.9	0.1	0.1023009953	0.1317518219
0.13	0.8168140899	1.0784539475	0.9631919815	0.913953822	0.6649236631	0.2709826523	0.1315544416	1	0.971194357	0.9481851475	0.9177546257	1.0473883881	0.9354466172	0.8876267945	0.6457701094	0.2631768227	0.1277649313	1.0225740153	0.913284331	0.8665974395	0.6304707415	0.2569417261	0.1247379676	0.9897560989	0.8839738965	0.8387853478	0.6102367675	0.2486955826	0.1207346973	0.7601415719	0.7519660192	0.9343931607	0.5054361184	0.5	0.6213001232	0.7321622669	0.7242876401	0.9	0.1	0.0989244697	0.1229235705
0.14	0.879645943	1.0918337723	0.9580350802	0.9029147514	0.6388369269	0.2539753459	0.122846339	1	0.966817624	0.9401998068	0.9048270525	1.0556041336	0.9262452	0.8729538947	0.6176387998	0.2455478405	0.1187700056	1.0265419018	0.9007443973	0.8489202748	0.6006343553	0.2387875711	0.1155001042	0.987920734	0.8668560577	0.8169816931	0.5780369336	0.2298037636	0.1111546908	0.8140250724	0.7826723518	0.9394228026	0.5200292757	0.5	0.6001379763	0.7798645755	0.7498275693	0.9	0.1	0.0961484331	0.1154046521
0.15	0.9424777961	1.106480789	0.9526914727	0.8917083379	0.6144274079	0.2391236895	0.1153174326	1	0.9621857125	0.9317141626	0.8910065242	1.0646400064	0.9166661235	0.8579890224	0.5911932733	0.2300813976	0.110956786	1.0309238217	0.8876361377	0.8308172873	0.5724707178	0.2227949281	0.1074428851	0.9858816019	0.8488543178	0.7945179467	0.5474588291	0.2130607675	0.1027485848	0.8671715781	0.8137657818	0.9446919866	0.532813985	0.5	0.5804446486	0.8261469679	0.7752677212	0.9	0.1	0.0938413807	0.1089394545
0.16	1.0053096491	1.1224618561	0.9471928309	0.8804139662	0.5916545768	0.2260677825	0.1087527309	1	0.9573133359	0.9227483809	0.87630668	1.0745477039	0.9067603287	0.8428320309	0.5663988166	0.216417703	0.1041104396	1.0357498603	0.8740206511	0.8124005618	0.5459483028	0.2086036803	0.1003514064	0.9836208226	0.830031405	0.7715126398	0.5184708579	0.1981047079	0.0953007446	0.9195171391	0.8450876907	0.9501761105	0.5440365238	0.5	0.5621760445	0.870960042	0.8004610022	0.9	0.1	0.0919055942	0.1033342469
0.17	1.0681415022	1.1398513035	0.9415703081	0.8691040234	0.5704528554	0.2145218855	0.1029864814	1	0.9522161841	0.9133242446	0.860742027	1.0853848587	0.8965784859	0.8275749168	0.5431944412	0.2042712112	0.0980653943	1.0410538307	0.8599589903	0.7937737756	0.5210084233	0.195928039	0.0940600503	0.9811179214	0.8104491355	0.7480743588	0.4910127471	0.1846480026	0.0886447928	0.9710012289	0.8764966206	0.9558500223	0.5539104237	0.5	0.5452673746	0.9142659263	0.8252831931	0.9	0.1	0.0902673029	0.0984396305
0.18	1.1309733553	1.1587315961	0.9358542878	0.8578437222	0.5507420583	0.2042568347	0.0978889188	1	0.9469109105	0.9034652031	0.8443279255	1.0972155907	0.8861706358	0.8123015801	0.5215036639	0.1934130253	0.0926920852	1.0468736769	0.8455117842	0.7750319527	0.4975762856	0.1845389426	0.0884392319	0.9783494447	0.7901679094	0.7243014104	0.4650068996	0.1724597495	0.0826503478	1.0215660895	0.907866019	0.9616881728	0.5626194109	0.5	0.5296421238	0.9560370051	0.8496303066	0.9	0.1	0.088870023	0.0941386155
0.19	1.1938052084	1.1791940664	0.9300741707	0.8466911377	0.532434733	0.1950870716	0.0933570578	1	0.9414151154	0.8931964268	0.8270805743	1.1101111181	0.8755858827	0.7970878351	0.5012421056	0.1836579181	0.0878877453	1.0532519266	0.8307389259	0.7562614988	0.4755688011	0.1742510753	0.0833861904	0.9752885056	0.7692462792	0.7002817924	0.4403664248	0.1613527272	0.077213809	1.0711562935	0.9390822367	0.9676647609	0.5703208152	0.5	0.5152183286	0.9962548014	0.8734161325	0.9	0.1	0.0876699546	0.090338335
0.2	1.2566370614	1.2013397144	0.9242582	0.8356974125	0.5154411821	0.1868609908	0.0893081648	1	0.9357473265	0.8825448647	0.8090169944	1.1241504259	0.8648721396	0.7820016195	0.4823227081	0.1748546726	0.0835698765	1.0602361957	0.8156993281	0.7375404599	0.4548999683	0.1649132078	0.0788184622	0.9719042449	0.747740591	0.6760934089	0.4170006759	0.1511737171	0.0722518231	1.1197184513	0.9700427855	0.9737538709	0.5771490021	0.5	0.5019128463	1.0349089603	0.8965699988	0.9	0.1	0.086632741	0.0869641712
0.21	1.3194689145	1.2252800733	0.9184333234	0.8249070883	0.4996727933	0.1794536857	0.0856750456	1	0.9299269755	0.8715393071	0.7901550124	1.1394209927	0.8540759226	0.7671033537	0.4646592094	0.1668788232	0.0796715361	1.0678797462	0.8004507423	0.7189389522	0.43548448	0.1564009409	0.0746691699	0.9681611915	0.725704694	0.6518044706	0.3948189621	0.1417962292	0.0676965667	1.1672010123	1.000654842	0.9799296008	0.5832185901	0.5	0.4896441608	1.0719963048	0.9190347521	0.9	0.1	0.0857311493	0.0839555133
0.22	1.3823007676	1.2511381347	0.9126250901	0.8143585263	0.4850441602	0.1727614442	0.082402587	1	0.9239743701	0.8602104514	0.7705132428	1.15601957	0.8432421927	0.7524464064	0.4481683724	0.1596271466	0.0761378784	1.0762420996	0.7850496407	0.7005197155	0.417240056	0.1486111999	0.0708835665	0.9640185014	0.7031897176	0.6274740289	0.3737329487	0.1331149806	0.0634922845	1.213554133	1.0308339757	0.9861661813	0.5886273453	0.5	0.4783341472	1.1075199499	0.9407649499	0.9	0.1	0.0849433863	0.0812626445
0.23	1.4451326207	1.2790493176	0.9068575796	0.8040843849	0.4714743531	0.1666975302	0.0794451809	1	0.9179106613	0.8485909702	0.7501110696	1.1740530049	0.8324142406	0.7380776295	0.4327713352	0.1530134402	0.0729235786	1.0853897014	0.7695511533	0.6823387483	0.4000888787	0.1414580189	0.0674164632	0.9594290518	0.680243909	0.603152598	0.3536581313	0.1250416627	0.0595927097	1.2587295895	1.0605030723	0.9924380854	0.593458719	0.5	0.4679091043	1.1414884689	0.9617252493	0.9	0.1	0.0842518585	0.0788444233
0.24	1.5079644737	1.309162459	0.90115336	0.7941121276	0.4588876041	0.1611889211	0.0767647807	1	0.9117578042	0.8367155827	0.7289686274	1.193639089	0.8216336088	0.7240379298	0.4183943543	0.1469652568	0.0699908879	1.0953966297	0.7540090587	0.6644459915	0.3839584091	0.1348692821	0.0642302882	0.9543383608	0.656912528	0.5788828277	0.3345146669	0.1175016666	0.0559591168	1.3026807225	1.0895914282	0.998720129	0.5977840357	0.5	0.4583002872	1.1739151102	0.9818889766	0.9	0.1	0.0836422471	0.0766665317
0.25	1.5707963268	1.3416407865	0.8955334712	0.7844645406	0.4472135955	0.1561737619	0.0743294146	1	0.9055385138	0.8246211251	0.7071067812	1.2149074039	0.8109400486	0.7103628542	0.4049691346	0.1414213562	0.0673081477	1.1063453349	0.7384758186	0.6468860321	0.3687817783	0.1287841832	0.0612936055	0.9486832981	0.6332377903	0.5547001962	0.316227766	0.1104315261	0.0525588331	1.3453624047	1.1180339887	1.0049875621	0.6016643583	0.5	0.4494441011	1.2048170643	1.0012368589	0.9	0.1	0.0831028119	0.0747001372
0.26	1.6336281799	1.3766628124	0.8900174302	0.7751602432	0.4363874863	0.1515993606	0.0721120377	1	0.8992762126	0.8123466217	0.6845471059	1.23800012	0.8003715037	0.6970831676	0.3924328859	0.1363296988	0.0648486402	1.1183273849	0.7230026526	0.6296988048	0.3544979003	0.1231512284	0.0585799702	0.9423905441	0.609258856	0.5306337011	0.2987277908	0.1037769036	0.0493640867	1.3867310248	1.1457707098	1.0112161509	0.6051520662	0.5	0.4412820743	1.234214783	1.0197559027	0.9	0.1	0.0826238607	0.0729208572
0.27	1.6964600329	1.4144230593	0.8846232556	0.7662141804	0.4263497725	0.1474205988	0.070089635	1	0.8929949708	0.7999333505	0.6613118653	1.2630726786	0.7899641184	0.6842254097	0.3807282027	0.1316458533	0.0625896915	1.1314441769	0.7076396448	0.6129202765	0.341051402	0.1179266535	0.0560670365	0.9353747517	0.5850118553	0.5067065289	0.2819501634	0.0974909912	0.0463511072	1.4267444832	1.1727460226	1.017382252	0.6082921859	0.5	0.4337606917	1.2621313497	1.0374384046	0.9	0.1	0.0821973406	0.0713079507
0.28	1.759291886	1.4551324805	0.8793675073	0.7576380875	0.4170460449	0.14359866	0.0682425139	1	0.8867194389	0.7874249039	0.6374239897	1.2902942567	0.7797522627	0.6718124199	0.369802835	0.1273317232	0.0605119636	1.1458075537	0.692435875	0.5965830983	0.3283924419	0.113073161	0.0537358549	0.9275363513	0.560529945	0.4829366925	0.2658351539	0.0915332308	0.0434994155	1.4653621963	1.1989083846	1.0234628781	0.6111235083	0.5	0.4268311454	1.2885919019	1.0542810777	0.9	0.1	0.0818165221	0.0698436797
0.29	1.8221237391	1.4990183798	0.8742653381	0.7494409238	0.4084266911	0.140100006	0.0665537424	1	0.8804747698	0.7748672377	0.6129070537	1.3198478629	0.7697685723	0.6598638249	0.3596093968	0.1233545206	0.058598891	1.1615402313	0.6774395676	0.5807172185	0.316476462	0.1085589047	0.0515703146	0.9187589385	0.5358433925	0.4593376285	0.2503275999	0.0858682819	0.0407912582	1.5025451064	1.2242099032	1.0294357568	0.6136795261	0.5	0.4204490399	1.3136231055	1.070284285	0.9	0.1	0.0814757506	0.0685128022
0.3	1.8849555922	1.5463235462	0.8693305552	0.7416292705	0.4004465715	0.1368955476	0.065008698	1	0.8742865326	0.7623087051	0.5877852523	1.3519298514	0.7600439968	0.6483964833	0.3501050444	0.1196859336	0.0568362291	1.1787759001	0.6626982498	0.5653504488	0.3052639073	0.1043566676	0.0495566964	0.9089061757	0.5109796797	0.4359187479	0.235376589	0.080465184	0.0382111539	1.5382556972	1.2486060205	1.035279382	0.61598922	0.5	0.414574079	1.3372526793	1.085451365	0.9	0.1	0.0811702517	0.0673021646
0.31	1.9477874452	1.5973041991	0.8645756889	0.7342076928	0.3930646888	0.1339599668	0.0635947029	1	0.8681806151	0.7498000688	0.5620833779	1.3867485421	0.7506078534	0.6374248864	0.3412511433	0.1163014464	0.0552116883	1.1976587983	0.648258911	0.5505089786	0.2947199307	0.1004431923	0.0476833126	0.8978181397	0.4859636236	0.41268594	0.220935128	0.0752966706	0.0357455254	1.5724580105	1.2720552475	1.0409730595	0.6180777186	0.5	0.4091697517	1.3595089677	1.0997880419	0.9	0.1	0.0808959755	0.0662003724
0.32	2.0106192983	1.652226175	0.8600120653	0.7271790664	0.3862438654	0.1312711622	0.0623007285	1	0.8621831172	0.7373944851	0.535826795	1.4245215138	0.7414878833	0.6269615142	0.3330129398	0.1131797799	0.0537146363	1.2183424697	0.6341681541	0.5362178333	0.2848140962	0.0967986311	0.0459402136	0.885307056	0.4608175086	0.3896420285	0.2069598125	0.0703386061	0.0333823997	1.605117667	1.2945189421	1.0464969462	0.6199668521	0.5	0.4042030256	1.3804205598	1.113301909	0.9	0.1	0.0806494732	0.0651975221
0.33	2.0734511514	1.7113585663	0.8556498826	0.7205448693	0.3799504345	0.1288097908	0.0611171524	1	0.8563202314	0.72514745	0.5090414158	1.4654709635	0.7327103055	0.6170171492	0.325359244	0.1103024298	0.0523358541	1.2409873005	0.6204723305	0.5225012746	0.2755200887	0.0934060913	0.0443189472	0.8711523875	0.4355612276	0.3667871804	0.1934105071	0.0655695183	0.0311111618	1.6362018865	1.3159611218	1.0518320849	0.6216756177	0.5	0.3996440577	1.4000159522	1.1260019794	0.9	0.1	0.0804277964	0.0642849818
0.34	2.1362830044	1.7749637303	0.851498288	0.7143054405	0.374153952	0.1265588874	0.0600355587	1	0.8506181146	0.7131166984	0.4817536741	1.5098163018	0.7242998683	0.6076011471	0.3182621293	0.1076532822	0.0510673337	1.2657562752	0.6072176479	0.5093831374	0.266815431	0.090251256	0.0428123594	0.8550952985	0.4102124287	0.3441192704	0.1802500411	0.060970209	0.028922351	1.6656795103	1.336348306	1.056960434	0.6232205716	0.5	0.3954659235	1.4183232513	1.1378982947	0.9	0.1	0.0802284172	0.0634552102
0.35	2.1991148575	1.8432822238	0.8475654541	0.7084602088	0.3688269297	0.1245035494	0.0590485732	1	0.8451027479	0.7013620458	0.4539904997	1.5577628726	0.7162798943	0.5987216692	0.3116966518	0.1052182917	0.0499021114	1.2928081914	0.5944502408	0.4968871014	0.2586812099	0.0873220641	0.0414144281	0.8368326179	0.3847866641	0.3216342042	0.1674439221	0.0565234286	0.0268074912	1.6935210226	1.355649384	1.0618648928	0.6246161591	0.5	0.3916443681	1.4353699146	1.1490015857	0.9	0.1	0.0800491618	0.0627016068
0.36	2.2619467106	1.9165107808	0.8438586546	0.7030078929	0.3639445908	0.1226306713	0.0581497255	1	0.8397997868	0.6899451596	0.4257792916	1.609485345	0.7086723182	0.5903858786	0.3056405897	0.1029852116	0.0488341271	1.3222873365	0.5822161941	0.4850368929	0.2511018088	0.0846084381	0.0401201217	0.8160106025	0.3592975401	0.2993262026	0.15496007	0.0522136003	0.0247589489	1.7196985729	1.3738355032	1.0665293235	0.6258749934	0.5	0.3881575782	1.451182524	1.1593229793	0.9	0.1	0.0798881574	0.0620183874
0.37	2.3247785637	1.9947709892	0.8403843367	0.6979466771	0.3594846492	0.1209287234	0.057333335	1	0.8347344014	0.6789292497	0.3971478906	1.6651039676	0.7014977163	0.5826001018	0.3000742035	0.1009433656	0.0478581071	1.354308371	0.5705615072	0.4738564138	0.2440646431	0.0821020475	0.0389252781	0.7922190907	0.3337568666	0.2771880506	0.1427685701	0.0480265874	0.022769813	1.7441859966	1.3908799754	1.0709385702	0.6270080911	0.5	0.3849859762	1.4657865918	1.1688737456	0.9	0.1	0.0797437875	0.0614004798
0.38	2.3876104167	2.0780659296	0.8371481912	0.6932743625	0.3554271095	0.1193875648	0.0565944141	1	0.8299311098	0.6683786645	0.3681245527	1.7246515632	0.6947753274	0.5753699611	0.2949800154	0.0990834542	0.0469694649	1.3889349309	0.5595319901	0.4633697926	0.2375598968	0.0797961011	0.0378264989	0.7649870908	0.3081748034	0.2552113146	0.1308414457	0.0439494939	0.0208337934	1.7669588364	1.4067581979	1.0750784741	0.6280250717	0.5	0.3821120345	1.4792063938	1.1776650809	0.9	0.1	0.0796146559	0.0608434363
0.39	2.4504422698	2.1662219419	0.8341552179	0.6889884989	0.3517540883	0.1179982858	0.055928588	1	0.8254136032	0.6583583844	0.3387379202	1.7880290583	0.688523064	0.5687004794	0.2903426094	0.0973973902	0.0461642173	1.4261503779	0.5491730816	0.453601355	0.2315802533	0.0776851608	0.0368210548	0.7337815154	0.2825600037	0.2333865312	0.1191524483	0.0399704939	0.0189451336	1.7879943616	1.4214475871	1.0789358871	0.6289343265	0.5	0.3795201093	1.4914648262	1.1857079217	0.9	0.1	0.0794995565	0.0603433607
0.4	2.5132741229	2.2588132805	0.8314097874	0.6850864978	0.3484496549	0.1167530747	0.0553320269	1	0.8212045671	0.6489334032	0.3090169944	1.8549477821	0.6827575146	0.5625961609	0.286148448	0.0958781582	0.0454389132	1.4658193893	0.5395295828	0.4445755125	0.2261206204	0.0757649701	0.0359068005	0.6980116908	0.2569197536	0.2117033704	0.1076768651	0.0360786842	0.0170985366	1.8072715872	1.4349275223	1.0824986831	0.6297431609	0.5	0.3771962926	1.5025832861	1.1930127863	0.9	0.1	0.0793974482	0.0598968462
0.41	2.5761059759	2.355069362	0.8289156987	0.6815657292	0.3454996904	0.1156451057	0.0548013882	1	0.8173255003	0.6401679966	0.278991106	1.9248582445	0.6774939381	0.5570610506	0.2823857073	0.0945196939	0.0447905721	1.5076400353	0.5306453021	0.4363165674	0.2211778446	0.0740322956	0.0350820949	0.6570434061	0.2312601076	0.1901507766	0.0963913408	0.0322639559	0.0152890999	1.8247712914	1.4471792997	1.0857557668	0.6304579162	0.5	0.3751282813	1.5125815699	1.1995896403	0.9	0.1	0.0793074347	0.0595009233
0.42	2.638937829	2.4537685272	0.8266762307	0.6784236043	0.342891763	0.1146684435	0.0543337692	1	0.8137965333	0.6321248788	0.2486898872	1.9968683209	0.6727462507	0.5520987773	0.279044128	0.0933167818	0.044216633	1.5510881328	0.5225626121	0.4288484386	0.2167504141	0.0724847759	0.0343457273	0.6102274182	0.2055860185	0.1687170896	0.0852737138	0.0285168823	0.0135122589	1.8404760319	1.4581860924	1.0886970818	0.6310840713	0.5	0.3733052618	1.5214777888	1.2054477825	0.9	0.1	0.0792287466	0.059153016
0.43	2.7017696821	2.5531297341	0.8246941906	0.675657645	0.3406150198	0.1138179631	0.0539266658	1	0.8106362505	0.6248642598	0.2181432414	2.0696595146	0.6685270064	0.54771258	0.2761148825	0.0922649669	0.0437149102	1.5953595215	0.515321925	0.4221943143	0.2128381522	0.0711207773	0.0336968461	0.5569479959	0.1799014639	0.1473901488	0.0743028645	0.0248286194	0.0117637377	1.8543701611	1.4679329182	1.0913136169	0.6316263291	0.5	0.3717178092	1.5292882991	1.2105957499	0.9	0.1	0.0791607279	0.0588509047
0.44	2.7646015352	2.6507248498	0.8229719557	0.6732655433	0.3386600925	0.1130892832	0.0535779391	1	0.8078615182	0.6184428223	0.1873813146	2.1414186014	0.6648473735	0.5439053239	0.2735904565	0.09136048	0.0432835552	1.6393217574	0.508961099	0.4163762428	0.2094419034	0.0699392555	0.0331348919	0.496696307	0.1542095669	0.1261573826	0.0634585733	0.0211908185	0.0100395047	1.8664398395	1.4764066127	1.0935974109	0.6320886887	0.5	0.3703578003	1.5360276449	1.2150412374	0.9	0.1	0.0791028235	0.0585926954
0.45	2.8274333882	2.7434478772	0.8215115111	0.6712452093	0.3370190168	0.1124787092	0.0532857868	1	0.8054873227	0.6129126486	0.156434465	2.2098124856	0.6617171076	0.5406795066	0.2714645456	0.0906001743	0.0429210258	1.6814939047	0.5035147961	0.4114146791	0.2065632182	0.0689396235	0.0326595327	0.429169801	0.1285127138	0.1050058852	0.0527213896	0.0175955467	0.0083357336	1.8766730481	1.4835958062	1.0955415571	0.6324745056	0.5	0.3692183385	1.5417085115	1.2187910326	0.9	0.1	0.0790545699	0.0583767939
0.46	2.8902652413	2.8275885794	0.820314482	0.6695948125	0.3356851645	0.1119831877	0.0530487206	1	0.8035266212	0.6083201365	0.1253332336	2.2720426974	0.6591445241	0.5380372572	0.269731966	0.0899814725	0.0426260592	1.7200790705	0.4990138176	0.4073280077	0.2042040451	0.068121628	0.032270605	0.3543908198	0.1028126666	0.083922483	0.0420725071	0.014035215	0.0066487677	1.8850595985	1.4894909067	1.0971402063	0.6327865414	0.5	0.3682936906	1.5463416881	1.2218509616	0.9	0.1	0.079015587	0.0582018843
0.47	2.9530970944	2.899056851	0.8193821611	0.6683128131	0.3346531867	0.111600269	0.0528655468	1	0.8019902087	0.6047049527	0.0941083133	2.325015209	0.6571364704	0.5359803324	0.2683885791	0.089502323	0.0423976509	1.7530740358	0.4954844509	0.404132068	0.2023664394	0.0674852354	0.0319680579	0.2728253505	0.0771106731	0.0628937916	0.0314936469	0.0105025131	0.0049750874	1.8915911427	1.4940840842	1.0983885697	0.6330270039	0.5	0.3675792351	1.5499360385	1.2242258458	0.9	0.1	0.078985572	0.0580669123
0.48	3.0159289474	2.9537787827	0.8187155307	0.6673979885	0.3339189693	0.1113280769	0.0527353516	1	0.8008866048	0.6020990746	0.0627905195	2.3656418606	0.6556983017	0.5345101091	0.2674312296	0.0891611655	0.0422350367	1.7784674715	0.4929478634	0.4018397113	0.2010523024	0.0670305321	0.0317519064	0.1854693043	0.0514075735	0.0419062664	0.0209669456	0.0069903478	0.0033112801	1.8962611799	1.4973692602	1.0992829209	0.6331975787	0.5	0.3670714199	1.5524984782	1.2259194685	0.9	0.1	0.0789642944	0.0579710713
0.49	3.0787608005	2.9882300011	0.8183152802	0.6668494521	0.3334795978	0.1111652871	0.0526574902	1	0.8002219623	0.6005259752	0.0314107591	2.3912472752	0.6548338592	0.5336275771	0.2668576981	0.0889569042	0.0421376802	1.7945097356	0.4914195816	0.4004604176	0.2002631607	0.0667576424	0.0316221907	0.0938625726	0.0257039041	0.0209462475	0.0104748473	0.0034917861	0.0016540117	1.8990650624	1.4993420985	1.099820597	0.6332994532	0.5	0.3667677304	1.5540339586	1.2269345494	0.9	0.1	0.0789515919	0.0579137924
0.5	3.1415926536	3	0.8181818182	0.6666666667	0.3333333333	0.1111111111	0.0526315789	1	0.8	0.6	6.04901474817726E-016	2.4	0.6545454545	0.5333333333	0.2666666667	0.0888888889	0.0421052632	1.8	0.4909090909	0.4	0.2	0.0666666667	0.0315789474	1.81470442445318E-015	4.94919388487231E-016	4.03267649878484E-016	2.01633824939242E-016	6.72112749797473E-017	3.18369197272487E-017	1.9	1.5	1.1	0.6333333333	0.5	0.3666666667	1.5545454545	1.2272727273	0.9	0.1	0.0789473684	0.0578947368
0.51	3.2044245067	2.9882300011	0.8183152802	0.6668494521	0.3334795978	0.1111652871	0.0526574902	1	0.8002219623	0.6005259752	0.0314107591	2.3912472752	0.6548338592	0.5336275771	0.2668576981	0.0889569042	0.0421376802	1.7945097356	0.4914195816	0.4004604176	0.2002631607	0.0667576424	0.0316221907	0.0938625726	0.0257039041	0.0209462475	0.0104748473	0.0034917861	0.0016540117	1.8990650624	1.4993420985	1.099820597	0.6332994532	0.5	0.3667677304	1.5540339586	1.2269345494	0.9	0.1	0.0789515919	0.0579137924
0.52	3.2672563597	2.9537787827	0.8187155307	0.6673979885	0.3339189693	0.1113280769	0.0527353516	1	0.8008866048	0.6020990746	0.0627905195	2.3656418606	0.6556983017	0.5345101091	0.2674312296	0.0891611655	0.0422350367	1.7784674715	0.4929478634	0.4018397113	0.2010523024	0.0670305321	0.0317519064	0.1854693043	0.0514075735	0.0419062664	0.0209669456	0.0069903478	0.0033112801	1.8962611799	1.4973692602	1.0992829209	0.6331975787	0.5	0.3670714199	1.5524984782	1.2259194685	0.9	0.1	0.0789642944	0.0579710713
0.53	3.3300882128	2.899056851	0.8193821611	0.6683128131	0.3346531867	0.111600269	0.0528655468	1	0.8019902087	0.6047049527	0.0941083133	2.325015209	0.6571364704	0.5359803324	0.2683885791	0.089502323	0.0423976509	1.7530740358	0.4954844509	0.404132068	0.2023664394	0.0674852354	0.0319680579	0.2728253505	0.0771106731	0.0628937916	0.0314936469	0.0105025131	0.0049750874	1.8915911427	1.4940840842	1.0983885697	0.6330270039	0.5	0.3675792351	1.5499360385	1.2242258458	0.9	0.1	0.078985572	0.0580669123
0.54	3.3929200659	2.8275885794	0.820314482	0.6695948125	0.3356851645	0.1119831877	0.0530487206	1	0.8035266212	0.6083201365	0.1253332336	2.2720426974	0.6591445241	0.5380372572	0.269731966	0.0899814725	0.0426260592	1.7200790705	0.4990138176	0.4073280077	0.2042040451	0.068121628	0.032270605	0.3543908198	0.1028126666	0.083922483	0.0420725071	0.014035215	0.0066487677	1.8850595985	1.4894909067	1.0971402063	0.6327865414	0.5	0.3682936906	1.5463416881	1.2218509616	0.9	0.1	0.079015587	0.0582018843
0.55	3.4557519189	2.7434478772	0.8215115111	0.6712452093	0.3370190168	0.1124787092	0.0532857868	1	0.8054873227	0.6129126486	0.156434465	2.2098124856	0.6617171076	0.5406795066	0.2714645456	0.0906001743	0.0429210258	1.6814939047	0.5035147961	0.4114146791	0.2065632182	0.0689396235	0.0326595327	0.429169801	0.1285127138	0.1050058852	0.0527213896	0.0175955467	0.0083357336	1.8766730481	1.4835958062	1.0955415571	0.6324745056	0.5	0.3692183385	1.5417085115	1.2187910326	0.9	0.1	0.0790545699	0.0583767939
0.56	3.518583772	2.6507248498	0.8229719557	0.6732655433	0.3386600925	0.1130892832	0.0535779391	1	0.8078615182	0.6184428223	0.1873813146	2.1414186014	0.6648473735	0.5439053239	0.2735904565	0.09136048	0.0432835552	1.6393217574	0.508961099	0.4163762428	0.2094419034	0.0699392555	0.0331348919	0.496696307	0.1542095669	0.1261573826	0.0634585733	0.0211908185	0.0100395047	1.8664398395	1.4764066127	1.0935974109	0.6320886887	0.5	0.3703578003	1.5360276449	1.2150412374	0.9	0.1	0.0791028235	0.0585926954
0.57	3.5814156251	2.5531297341	0.8246941906	0.675657645	0.3406150198	0.1138179631	0.0539266658	1	0.8106362505	0.6248642598	0.2181432414	2.0696595146	0.6685270064	0.54771258	0.2761148825	0.0922649669	0.0437149102	1.5953595215	0.515321925	0.4221943143	0.2128381522	0.0711207773	0.0336968461	0.5569479959	0.1799014639	0.1473901488	0.0743028645	0.0248286194	0.0117637377	1.8543701611	1.4679329182	1.0913136169	0.6316263291	0.5	0.3717178092	1.5292882991	1.2105957499	0.9	0.1	0.0791607279	0.0588509047
0.58	3.6442474782	2.4537685272	0.8266762307	0.6784236043	0.342891763	0.1146684435	0.0543337692	1	0.8137965333	0.6321248788	0.2486898872	1.9968683209	0.6727462507	0.5520987773	0.279044128	0.0933167818	0.044216633	1.5510881328	0.5225626121	0.4288484386	0.2167504141	0.0724847759	0.0343457273	0.6102274182	0.2055860185	0.1687170896	0.0852737138	0.0285168823	0.0135122589	1.8404760319	1.4581860924	1.0886970818	0.6310840713	0.5	0.3733052618	1.5214777888	1.2054477825	0.9	0.1	0.0792287466	0.059153016
0.59	3.7070793312	2.355069362	0.8289156987	0.6815657292	0.3454996904	0.1156451057	0.0548013882	1	0.8173255003	0.6401679966	0.278991106	1.9248582445	0.6774939381	0.5570610506	0.2823857073	0.0945196939	0.0447905721	1.5076400353	0.5306453021	0.4363165674	0.2211778446	0.0740322956	0.0350820949	0.6570434061	0.2312601076	0.1901507766	0.0963913408	0.0322639559	0.0152890999	1.8247712914	1.4471792997	1.0857557668	0.6304579162	0.5	0.3751282813	1.5125815699	1.1995896403	0.9	0.1	0.0793074347	0.0595009233
0.6	3.7699111843	2.2588132805	0.8314097874	0.6850864978	0.3484496549	0.1167530747	0.0553320269	1	0.8212045671	0.6489334032	0.3090169944	1.8549477821	0.6827575146	0.5625961609	0.286148448	0.0958781582	0.0454389132	1.4658193893	0.5395295828	0.4445755125	0.2261206204	0.0757649701	0.0359068005	0.6980116908	0.2569197536	0.2117033704	0.1076768651	0.0360786842	0.0170985366	1.8072715872	1.4349275223	1.0824986831	0.6297431609	0.5	0.3771962926	1.5025832861	1.1930127863	0.9	0.1	0.0793974482	0.0598968462
0.61	3.8327430374	2.1662219419	0.8341552179	0.6889884989	0.3517540883	0.1179982858	0.055928588	1	0.8254136032	0.6583583844	0.3387379202	1.7880290583	0.688523064	0.5687004794	0.2903426094	0.0973973902	0.0461642173	1.4261503779	0.5491730816	0.453601355	0.2315802533	0.0776851608	0.0368210548	0.7337815154	0.2825600037	0.2333865312	0.1191524483	0.0399704939	0.0189451336	1.7879943616	1.4214475871	1.0789358871	0.6289343265	0.5	0.3795201093	1.4914648262	1.1857079217	0.9	0.1	0.0794995565	0.0603433607
0.62	3.8955748905	2.0780659296	0.8371481912	0.6932743625	0.3554271095	0.1193875648	0.0565944141	1	0.8299311098	0.6683786645	0.3681245527	1.7246515632	0.6947753274	0.5753699611	0.2949800154	0.0990834542	0.0469694649	1.3889349309	0.5595319901	0.4633697926	0.2375598968	0.0797961011	0.0378264989	0.7649870908	0.3081748034	0.2552113146	0.1308414457	0.0439494939	0.0208337934	1.7669588364	1.4067581979	1.0750784741	0.6280250717	0.5	0.3821120345	1.4792063938	1.1776650809	0.9	0.1	0.0796146559	0.0608434363
0.63	3.9584067435	1.9947709892	0.8403843367	0.6979466771	0.3594846492	0.1209287234	0.057333335	1	0.8347344014	0.6789292497	0.3971478906	1.6651039676	0.7014977163	0.5826001018	0.3000742035	0.1009433656	0.0478581071	1.354308371	0.5705615072	0.4738564138	0.2440646431	0.0821020475	0.0389252781	0.7922190907	0.3337568666	0.2771880506	0.1427685701	0.0480265874	0.022769813	1.7441859966	1.3908799754	1.0709385702	0.6270080911	0.5	0.3849859762	1.4657865918	1.1688737456	0.9	0.1	0.0797437875	0.0614004798
0.64	4.0212385966	1.9165107808	0.8438586546	0.7030078929	0.3639445908	0.1226306713	0.0581497255	1	0.8397997868	0.6899451596	0.4257792916	1.609485345	0.7086723182	0.5903858786	0.3056405897	0.1029852116	0.0488341271	1.3222873365	0.5822161941	0.4850368929	0.2511018088	0.0846084381	0.0401201217	0.8160106025	0.3592975401	0.2993262026	0.15496007	0.0522136003	0.0247589489	1.7196985729	1.3738355032	1.0665293235	0.6258749934	0.5	0.3881575782	1.451182524	1.1593229793	0.9	0.1	0.0798881574	0.0620183874
0.65	4.0840704497	1.8432822238	0.8475654541	0.7084602088	0.3688269297	0.1245035494	0.0590485732	1	0.8451027479	0.7013620458	0.4539904997	1.5577628726	0.7162798943	0.5987216692	0.3116966518	0.1052182917	0.0499021114	1.2928081914	0.5944502408	0.4968871014	0.2586812099	0.0873220641	0.0414144281	0.8368326179	0.3847866641	0.3216342042	0.1674439221	0.0565234286	0.0268074912	1.6935210226	1.355649384	1.0618648928	0.6246161591	0.5	0.3916443681	1.4353699146	1.1490015857	0.9	0.1	0.0800491618	0.0627016068
0.66	4.1469023027	1.7749637303	0.851498288	0.7143054405	0.374153952	0.1265588874	0.0600355587	1	0.8506181146	0.7131166984	0.4817536741	1.5098163018	0.7242998683	0.6076011471	0.3182621293	0.1076532822	0.0510673337	1.2657562752	0.6072176479	0.5093831374	0.266815431	0.090251256	0.0428123594	0.8550952985	0.4102124287	0.3441192704	0.1802500411	0.060970209	0.028922351	1.6656795103	1.336348306	1.056960434	0.6232205716	0.5	0.3954659235	1.4183232513	1.1378982947	0.9	0.1	0.0802284172	0.0634552102
0.67	4.2097341558	1.7113585663	0.8556498826	0.7205448693	0.3799504345	0.1288097908	0.0611171524	1	0.8563202314	0.72514745	0.5090414158	1.4654709635	0.7327103055	0.6170171492	0.325359244	0.1103024298	0.0523358541	1.2409873005	0.6204723305	0.5225012746	0.2755200887	0.0934060913	0.0443189472	0.8711523875	0.4355612276	0.3667871804	0.1934105071	0.0655695183	0.0311111618	1.6362018865	1.3159611218	1.0518320849	0.6216756177	0.5	0.3996440577	1.4000159522	1.1260019794	0.9	0.1	0.0804277964	0.0642849818
0.68	4.2725660089	1.652226175	0.8600120653	0.7271790664	0.3862438654	0.1312711622	0.0623007285	1	0.8621831172	0.7373944851	0.535826795	1.4245215138	0.7414878833	0.6269615142	0.3330129398	0.1131797799	0.0537146363	1.2183424697	0.6341681541	0.5362178333	0.2848140962	0.0967986311	0.0459402136	0.885307056	0.4608175086	0.3896420285	0.2069598125	0.0703386061	0.0333823997	1.605117667	1.2945189421	1.0464969462	0.6199668521	0.5	0.4042030256	1.3804205598	1.113301909	0.9	0.1	0.0806494732	0.0651975221
0.69	4.335397862	1.5973041991	0.8645756889	0.7342076928	0.3930646888	0.1339599668	0.0635947029	1	0.8681806151	0.7498000688	0.5620833779	1.3867485421	0.7506078534	0.6374248864	0.3412511433	0.1163014464	0.0552116883	1.1976587983	0.648258911	0.5505089786	0.2947199307	0.1004431923	0.0476833126	0.8978181397	0.4859636236	0.41268594	0.220935128	0.0752966706	0.0357455254	1.5724580105	1.2720552475	1.0409730595	0.6180777186	0.5	0.4091697517	1.3595089677	1.0997880419	0.9	0.1	0.0808959755	0.0662003724
0.7	4.398229715	1.5463235462	0.8693305552	0.7416292705	0.4004465715	0.1368955476	0.065008698	1	0.8742865326	0.7623087051	0.5877852523	1.3519298514	0.7600439968	0.6483964833	0.3501050444	0.1196859336	0.0568362291	1.1787759001	0.6626982498	0.5653504488	0.3052639073	0.1043566676	0.0495566964	0.9089061757	0.5109796797	0.4359187479	0.235376589	0.080465184	0.0382111539	1.5382556972	1.2486060205	1.035279382	0.61598922	0.5	0.414574079	1.3372526793	1.085451365	0.9	0.1	0.0811702517	0.0673021646
0.71	4.4610615681	1.4990183798	0.8742653381	0.7494409238	0.4084266911	0.140100006	0.0665537424	1	0.8804747698	0.7748672377	0.6129070537	1.3198478629	0.7697685723	0.6598638249	0.3596093968	0.1233545206	0.058598891	1.1615402313	0.6774395676	0.5807172185	0.316476462	0.1085589047	0.0515703146	0.9187589385	0.5358433925	0.4593376285	0.2503275999	0.0858682819	0.0407912582	1.5025451064	1.2242099032	1.0294357568	0.6136795261	0.5	0.4204490399	1.3136231055	1.070284285	0.9	0.1	0.0814757506	0.0685128022
0.72	4.5238934212	1.4551324805	0.8793675073	0.7576380875	0.4170460449	0.14359866	0.0682425139	1	0.8867194389	0.7874249039	0.6374239897	1.2902942567	0.7797522627	0.6718124199	0.369802835	0.1273317232	0.0605119636	1.1458075537	0.692435875	0.5965830983	0.3283924419	0.113073161	0.0537358549	0.9275363513	0.560529945	0.4829366925	0.2658351539	0.0915332308	0.0434994155	1.4653621963	1.1989083846	1.0234628781	0.6111235083	0.5	0.4268311454	1.2885919019	1.0542810777	0.9	0.1	0.0818165221	0.0698436797
0.73	4.5867252742	1.4144230593	0.8846232556	0.7662141804	0.4263497725	0.1474205988	0.070089635	1	0.8929949708	0.7999333505	0.6613118653	1.2630726786	0.7899641184	0.6842254097	0.3807282027	0.1316458533	0.0625896915	1.1314441769	0.7076396448	0.6129202765	0.341051402	0.1179266535	0.0560670365	0.9353747517	0.5850118553	0.5067065289	0.2819501634	0.0974909912	0.0463511072	1.4267444832	1.1727460226	1.017382252	0.6082921859	0.5	0.4337606917	1.2621313497	1.0374384046	0.9	0.1	0.0821973406	0.0713079507
0.74	4.6495571273	1.3766628124	0.8900174302	0.7751602432	0.4363874863	0.1515993606	0.0721120377	1	0.8992762126	0.8123466217	0.6845471059	1.23800012	0.8003715037	0.6970831676	0.3924328859	0.1363296988	0.0648486402	1.1183273849	0.7230026526	0.6296988048	0.3544979003	0.1231512284	0.0585799702	0.9423905441	0.609258856	0.5306337011	0.2987277908	0.1037769036	0.0493640867	1.3867310248	1.1457707098	1.0112161509	0.6051520662	0.5	0.4412820743	1.234214783	1.0197559027	0.9	0.1	0.0826238607	0.0729208572
0.75	4.7123889804	1.3416407865	0.8955334712	0.7844645406	0.4472135955	0.1561737619	0.0743294146	1	0.9055385138	0.8246211251	0.7071067812	1.2149074039	0.8109400486	0.7103628542	0.4049691346	0.1414213562	0.0673081477	1.1063453349	0.7384758186	0.6468860321	0.3687817783	0.1287841832	0.0612936055	0.9486832981	0.6332377903	0.5547001962	0.316227766	0.1104315261	0.0525588331	1.3453624047	1.1180339887	1.0049875621	0.6016643583	0.5	0.4494441011	1.2048170643	1.0012368589	0.9	0.1	0.0831028119	0.0747001372
0.76	4.7752208335	1.309162459	0.90115336	0.7941121276	0.4588876041	0.1611889211	0.0767647807	1	0.9117578042	0.8367155827	0.7289686274	1.193639089	0.8216336088	0.7240379298	0.4183943543	0.1469652568	0.0699908879	1.0953966297	0.7540090587	0.6644459915	0.3839584091	0.1348692821	0.0642302882	0.9543383608	0.656912528	0.5788828277	0.3345146669	0.1175016666	0.0559591168	1.3026807225	1.0895914282	0.998720129	0.5977840357	0.5	0.4583002872	1.1739151102	0.9818889766	0.9	0.1	0.0836422471	0.0766665317
0.77	4.8380526865	1.2790493176	0.9068575796	0.8040843849	0.4714743531	0.1666975302	0.0794451809	1	0.9179106613	0.8485909702	0.7501110696	1.1740530049	0.8324142406	0.7380776295	0.4327713352	0.1530134402	0.0729235786	1.0853897014	0.7695511533	0.6823387483	0.4000888787	0.1414580189	0.0674164632	0.9594290518	0.680243909	0.603152598	0.3536581313	0.1250416627	0.0595927097	1.2587295895	1.0605030723	0.9924380854	0.593458719	0.5	0.4679091043	1.1414884689	0.9617252493	0.9	0.1	0.0842518585	0.0788444233
0.78	4.9008845396	1.2511381347	0.9126250901	0.8143585263	0.4850441602	0.1727614442	0.082402587	1	0.9239743701	0.8602104514	0.7705132428	1.15601957	0.8432421927	0.7524464064	0.4481683724	0.1596271466	0.0761378784	1.0762420996	0.7850496407	0.7005197155	0.417240056	0.1486111999	0.0708835665	0.9640185014	0.7031897176	0.6274740289	0.3737329487	0.1331149806	0.0634922845	1.213554133	1.0308339757	0.9861661813	0.5886273453	0.5	0.4783341472	1.1075199499	0.9407649499	0.9	0.1	0.0849433863	0.0812626445
0.79	4.9637163927	1.2252800733	0.9184333234	0.8249070883	0.4996727933	0.1794536857	0.0856750456	1	0.9299269755	0.8715393071	0.7901550124	1.1394209927	0.8540759226	0.7671033537	0.4646592094	0.1668788232	0.0796715361	1.0678797462	0.8004507423	0.7189389522	0.43548448	0.1564009409	0.0746691699	0.9681611915	0.725704694	0.6518044706	0.3948189621	0.1417962292	0.0676965667	1.1672010123	1.000654842	0.9799296008	0.5832185901	0.5	0.4896441608	1.0719963048	0.9190347521	0.9	0.1	0.0857311493	0.0839555133
0.8	5.0265482457	1.2013397144	0.9242582	0.8356974125	0.5154411821	0.1868609908	0.0893081648	1	0.9357473265	0.8825448647	0.8090169944	1.1241504259	0.8648721396	0.7820016195	0.4823227081	0.1748546726	0.0835698765	1.0602361957	0.8156993281	0.7375404599	0.4548999683	0.1649132078	0.0788184622	0.9719042449	0.747740591	0.6760934089	0.4170006759	0.1511737171	0.0722518231	1.1197184513	0.9700427855	0.9737538709	0.5771490021	0.5	0.5019128463	1.0349089603	0.8965699988	0.9	0.1	0.086632741	0.0869641712
0.81	5.0893800988	1.1791940664	0.9300741707	0.8466911377	0.532434733	0.1950870716	0.0933570578	1	0.9414151154	0.8931964268	0.8270805743	1.1101111181	0.8755858827	0.7970878351	0.5012421056	0.1836579181	0.0878877453	1.0532519266	0.8307389259	0.7562614988	0.4755688011	0.1742510753	0.0833861904	0.9752885056	0.7692462792	0.7002817924	0.4403664248	0.1613527272	0.077213809	1.0711562935	0.9390822367	0.9676647609	0.5703208152	0.5	0.5152183286	0.9962548014	0.8734161325	0.9	0.1	0.0876699546	0.090338335
0.82	5.1522119519	1.1587315961	0.9358542878	0.8578437222	0.5507420583	0.2042568347	0.0978889188	1	0.9469109105	0.9034652031	0.8443279255	1.0972155907	0.8861706358	0.8123015801	0.5215036639	0.1934130253	0.0926920852	1.0468736769	0.8455117842	0.7750319527	0.4975762856	0.1845389426	0.0884392319	0.9783494447	0.7901679094	0.7243014104	0.4650068996	0.1724597495	0.0826503478	1.0215660895	0.907866019	0.9616881728	0.5626194109	0.5	0.5296421238	0.9560370051	0.8496303066	0.9	0.1	0.088870023	0.0941386155
0.83	5.215043805	1.1398513035	0.9415703081	0.8691040234	0.5704528554	0.2145218855	0.1029864814	1	0.9522161841	0.9133242446	0.860742027	1.0853848587	0.8965784859	0.8275749168	0.5431944412	0.2042712112	0.0980653943	1.0410538307	0.8599589903	0.7937737756	0.5210084233	0.195928039	0.0940600503	0.9811179214	0.8104491355	0.7480743588	0.4910127471	0.1846480026	0.0886447928	0.9710012289	0.8764966206	0.9558500223	0.5539104237	0.5	0.5452673746	0.9142659263	0.8252831931	0.9	0.1	0.0902673029	0.0984396305
0.84	5.277875658	1.1224618561	0.9471928309	0.8804139662	0.5916545768	0.2260677825	0.1087527309	1	0.9573133359	0.9227483809	0.87630668	1.0745477039	0.9067603287	0.8428320309	0.5663988166	0.216417703	0.1041104396	1.0357498603	0.8740206511	0.8124005618	0.5459483028	0.2086036803	0.1003514064	0.9836208226	0.830031405	0.7715126398	0.5184708579	0.1981047079	0.0953007446	0.9195171391	0.8450876907	0.9501761105	0.5440365238	0.5	0.5621760445	0.870960042	0.8004610022	0.9	0.1	0.0919055942	0.1033342469
0.85	5.3407075111	1.106480789	0.9526914727	0.8917083379	0.6144274079	0.2391236895	0.1153174326	1	0.9621857125	0.9317141626	0.8910065242	1.0646400064	0.9166661235	0.8579890224	0.5911932733	0.2300813976	0.110956786	1.0309238217	0.8876361377	0.8308172873	0.5724707178	0.2227949281	0.1074428851	0.9858816019	0.8488543178	0.7945179467	0.5474588291	0.2130607675	0.1027485848	0.8671715781	0.8137657818	0.9446919866	0.532813985	0.5	0.5804446486	0.8261469679	0.7752677212	0.9	0.1	0.0938413807	0.1089394545
0.86	5.4035393642	1.0918337723	0.9580350802	0.9029147514	0.6388369269	0.2539753459	0.122846339	1	0.966817624	0.9401998068	0.9048270525	1.0556041336	0.9262452	0.8729538947	0.6176387998	0.2455478405	0.1187700056	1.0265419018	0.9007443973	0.8489202748	0.6006343553	0.2387875711	0.1155001042	0.987920734	0.8668560577	0.8169816931	0.5780369336	0.2298037636	0.1111546908	0.8140250724	0.7826723518	0.9394228026	0.5200292757	0.5	0.6001379763	0.7798645755	0.7498275693	0.9	0.1	0.0961484331	0.1154046521
0.87	5.4663712172	1.0784539475	0.9631919815	0.913953822	0.6649236631	0.2709826523	0.1315544416	1	0.971194357	0.9481851475	0.9177546257	1.0473883881	0.9354466172	0.8876267945	0.6457701094	0.2631768227	0.1277649313	1.0225740153	0.913284331	0.8665974395	0.6304707415	0.2569417261	0.1247379676	0.9897560989	0.8839738965	0.8387853478	0.6102367675	0.2486955826	0.1207346973	0.7601415719	0.7519660192	0.9343931607	0.5054361184	0.5	0.6213001232	0.7321622669	0.7242876401	0.9	0.1	0.0989244697	0.1229235705
0.88	5.5292030703	1.0662813284	0.968130276	0.9247396016	0.6926886432	0.290603697	0.1417254752	1	0.9753021854	0.9556515896	0.9297764859	1.0399465099	0.9442195739	0.9019005544	0.6755807475	0.2834264207	0.1382251657	1.0189934465	0.9251952372	0.8837288703	0.661969003	0.2777158849	0.1354401757	0.9914033065	0.9001447659	0.8598011372	0.6440456125	0.2701964841	0.1317730143	0.7055894491	0.7218250291	0.9296269545	0.4887537981	0.5	0.6439420338	0.6831025081	0.6988206646	0.9	0.1	0.1023009953	0.1317518219
0.89	5.5920349234	1.0552622648	0.9728181579	0.9351803142	0.7220739806	0.313427725	0.1537413611	1	0.9791283796	0.962582068	0.940880769	1.0332372314	0.9525138665	0.9156615857	0.7070031266	0.3068859805	0.1505325297	1.0157765331	0.9364173142	0.9001878008	0.6950554655	0.3016999077	0.1479886773	0.9928759711	0.9153058964	0.8798931731	0.6793855221	0.2948981189	0.14465229	0.6504430513	0.6924498229	0.9251472053	0.4696680032	0.5	0.6680247252	0.632762811	0.6736277611	0.9	0.1	0.1064581782	0.1422333905
0.9	5.6548667765	1.0453489648	0.9772242725	0.9451794243	0.7529377602	0.3402202929	0.1681279282	1	0.9826612127	0.9689610096	0.9510565163	1.0272238814	0.9602803887	0.9287911593	0.7398827325	0.3343212856	0.1652127938	1.0129023883	0.9468922177	0.9158420093	0.7295673323	0.3296601985	0.162909407	0.9941859447	0.9293955122	0.8989190506	0.7160863632	0.3235687265	0.1598991617	0.5947851798	0.6640655131	0.9209758961	0.4478362211	0.5	0.6934375283	0.5812385146	0.6489409379	0.9	0.1	0.1116479589	0.1548417693
0.91	5.7176986295	1.0364990712	0.9813180976	0.9546370612	0.7850231207	0.3719841958	0.1856282252	1	0.9858899668	0.9747743001	0.9602936857	1.0218740349	0.9674716666	0.9411671005	0.7739464184	0.3667354864	0.1830090048	1.0103526567	0.9565636617	0.9305556732	0.765220363	0.3626006341	0.1809456233	0.9953435133	0.9423535727	0.916731942	0.7538527459	0.3572140744	0.1782576125	0.5387111787	0.6369239158	0.9171338043	0.4229007307	0.5	0.7199712411	0.5286470291	0.6250249653	0.9	0.1	0.1182310561	0.1702459203
0.92	5.7805304826	1.028675289	0.9850703437	0.9634517991	0.8179228424	0.4100381592	0.2073243731	1	0.9888049365	0.9800092538	0.9685831611	1.0171592038	0.9740424186	0.952665895	0.8087661442	0.405447756	0.2050033635	1.0081113024	0.9653780525	0.9441916788	0.8015719545	0.4018411905	0.2031798041	0.9963575632	0.9541225474	0.9331831892	0.7922262922	0.3971560565	0.2008108966	0.4823359575	0.6113046049	0.9136403362	0.3945135974	0.5	0.7472873007	0.4751348474	0.6021780372	0.9	0.1	0.126738344	0.1894199099
0.93	5.8433623357	1.0218450594	0.9884533645	0.9715227722	0.8510432447	0.4561069056	0.2348483739	1	0.9913974326	0.9846545876	0.9759167619	1.0130545684	0.9799501279	0.9631651821	0.8437220878	0.4521832152	0.2328280749	1.0061644255	0.97328514	0.9566143546	0.8379836351	0.449107757	0.2312445287	0.9972357215	0.9646482069	0.948125358	0.8305473676	0.4451223744	0.2291924646	0.4258066528	0.5875142105	0.9105133659	0.3623798754	0.5	0.7748862493	0.4208900186	0.5807303981	0.9	0.1	0.137976757	0.2138325834
0.94	5.9061941887	1.0159802775	0.9914415678	0.9787520722	0.8835749499	0.5123867635	0.2707628725	1	0.9936597846	0.9887003972	0.9822872507	1.0095387436	0.9851556147	0.9725465732	0.8779728944	0.509138121	0.2690461776	1.0045001039	0.9802386719	0.9676925626	0.873590904	0.5065969966	0.2677033596	0.9979844736	0.9738804119	0.9614156821	0.8679244084	0.5033109852	0.2659669176	0.3693268544	0.5658829509	0.9077690801	0.3263279569	0.5	0.8020820195	0.3661659956	0.56103988	0.9	0.1	0.1532200933	0.2457901637
0.95	5.9690260418	1.0110570505	0.9940118148	0.9850473476	0.9144826257	0.5814602765	0.3192782426	1	0.9955853419	0.9921381382	0.9876883406	1.0065935793	0.9896235925	0.9806987003	0.9104454976	0.5788933282	0.3178687383	1.0031082597	0.9861970313	0.9773030415	0.9072930897	0.5768889162	0.3167681211	0.9986092605	0.9817738799	0.9729197802	0.9032238271	0.5743015357	0.3153473976	0.3132064346	0.5467572439	0.9054218336	0.2864218427	0.5	0.8279925358	0.3113308965	0.5434831602	0.9	0.1	0.1745676919	0.2890814918
0.96	6.0318578949	1.0070554934	0.9961437986	0.9903244947	0.9425295265	0.6656801858	0.3876367634	1	0.9971684757	0.994960608	0.9921147013	1.0042039914	0.9933231933	0.9875203668	0.9398607312	0.6637952962	0.3865391605	1.0019805461	0.9911238395	0.9853338614	0.9377797508	0.6623255624	0.3856833098	0.9991145601	0.9882889072	0.9825154902	0.9350973996	0.6604310987	0.3845801317	0.2579734676	0.5304873598	0.9034840163	0.2431476102	0.5	0.8515603621	0.2569786699	0.5284416937	0.9	0.1	0.2056364031	0.3502236198
0.97	6.094689748	1.0039595601	0.9978203877	0.9945103042	0.9663522837	0.7650525044	0.4886314038	1	0.9984045799	0.9971619328	0.9955619646	1.0023578229	0.9962284449	0.9929236424	0.9648105458	0.7638319243	0.4878518314	1.0011102554	0.9949885064	0.9916878171	0.9636097109	0.762881234	0.487244635	0.9995039521	0.9933920255	0.9900966322	0.962063578	0.7616571743	0.4864628403	0.2046532401	0.5174096532	0.9019659361	0.1977671259	0.5	0.8716168421	0.2042071754	0.5162819008	0.9	0.1	0.2528226052	0.4407288815
0.98	6.157521601	1.0017569079	0.9990279256	0.997544915	0.9845929276	0.8719048811	0.6429101936	1	0.9992900711	0.9987375553	0.9980267284	1.0010457317	0.9983186868	0.9968367291	0.9838939366	0.8712858907	0.6424537731	1.0004922452	0.9977667081	0.9962855697	0.9833499335	0.8708041495	0.6420985551	0.9997801694	0.9970565722	0.995576488	0.9826500583	0.870184376	0.6416415572	0.1555427196	0.5078240824	0.9008757183	0.1531462617	0.5	0.8869958608	0.1553915205	0.5073304396	0.9	0.1	0.3264852791	0.5791821824
0.99	6.2203534541	1.0004387934	0.9997564752	0.9993839224	0.996076667	0.9627270885	0.8590136893	1	0.9998223898	0.9996842267	0.9995065604	1.0002611053	0.9995789082	0.9992064216	0.9958997537	0.9625560984	0.8588611197	1.0001228816	0.9994407788	0.9990683437	0.9957621326	0.962423085	0.8587424357	0.9999451373	0.9992631557	0.9988907868	0.9955851633	0.9622520408	0.8585898179	0.1164125802	0.5019693931	0.9002192257	0.1159558549	0.5	0.8966873659	0.1163842309	0.5018471511	0.9	0.1	0.4311985803	0.7733006382
1	6.2831853072	1	1	1	1	1	1	1	1	1	1	1	1	1	1	1	1	1	1	1	1	1	1	1	1	1	1	1	1	0.1	0.5	0.9	0.1	0.5	0.9	0.1	0.5	0.9	0.1	0.5	0.9
}\tablefrequencyresponse

\begin{document}

\begin{titlepage}
\maketitle
\begin{keywords}LSTM, MDRNN, CTC, handwriting recognition, neural network\end{keywords}
\begin{abstract}%
The transcription of handwritten text on images is one task in machine learning and one solution to solve it is using multi-dimensional recurrent neural networks (MDRNN) with connectionist temporal classification (CTC). The RNNs can contain special units, the long short-term memory (LSTM) cells. They are able to learn long term dependencies but they get unstable when the dimension is chosen greater than one. We defined some useful and necessary properties for the one-dimensional LSTM cell and extend them in the multi-dimensional case. Thereby we introduce several new cells with better stability. We present a method to design cells using the theory of linear shift invariant systems. The new cells are compared to the LSTM cell on the IFN/ENIT and Rimes database, where we can improve the recognition rate compared to the LSTM cell. So each application where the LSTM cells in MDRNNs are used could be improved by substituting them by the new developed cells.
\end{abstract}
\end{titlepage}

\begin{section}{Introduction}
Since the last decade, artificial neural networks (NN) became state-of-the-art in many fields of machine learning, for example they can be applied to pattern recognition. Typical NN are feedforward NN (FFNN) or recurrent NN (RNN), whereas the latter contain recurrent connections. 
When nearby inputs depend on each other, providing these inputs as additional information to the NN can improve its recognition result.  FFNNs obtain these dependencies by making this nearby inputs accessible. If RNNs are used, the recurrent connections can be used to learn if the surrounding input is relevant, but these connections result in a vanishing dependency over time. In \citet{sh97} the authors develop the long short-term memory (LSTM) which is able to have a long term dependency. This LSTM is extended in \citet{ag07} to the multi-dimensional (\multiD) case and is used in a hierarchical multi-dimensional RNN (MDRNN) which performed best in three competitions at the International Conference on Document Analysis and Recognition (ICDAR) in 2009 without any feature extraction and knowledge of the recognized language model.\\
In this paper we analyse these \multiD LSTM regarding the ability to provide long term dependencies in MDRNNs and show that it can easily have an unwanted growing dependency for higher dimensions. We define a more general description of an LSTM---a cell---and change the LSTM architecture which leads to new \multiD cell types, which also can provide long term dependencies. In two experiments we show that substituting the LSTM in MDRNNs by these cells works well. Due to this we assume that substituting the LSTM cell by the best performing cell, the \emph{LeakyLP cell}, will improve the performance of an MDRNN also in other scenarios. Furthermore the new cell types could also be used for the one-dimensional (\oneD) case, so using them in a bidirectional RNN with LSTMs (BLSTM) could lead to better recognition rates.\\
In Section \ref{S:previous_work} we introduce the reader to the development of the LSTM cells \citep{sh97} and its extension \citep{js99}.
Based on that in Section \ref{S:cell_properties} we define two properties that probably lead to the good performance of the \oneD LSTM cells. Both together guarantee that the cell can have a long term dependency. A third property ensures that gradient cannot explode over time.
In Section \ref{S:discussion} we show that the \multiD version of the LSTM is still able to provide long term dependency whereas the gradient can explode easily for dimension greater than \(1\). In Section \ref{S:Stable} we change the architecture of the \multiD LSTM cell and reduce it to the \oneD LSTM cell so that the cell fulfills the two properties for any dimension.
Nevertheless the internal cell state can linearly grow over time. This problem is solved in Section \ref{S:leaky} using a trainable convex combination of the input and the previous internal cell states. The new cell type can provide long term dependencies and does not suffer from exploding gradients.
Motivated by the last sections we introduce a more general way to define \multiD cells in Section \ref{S:13_leakyLP}. Using the theory of linear shift-invariant systems and their frequency analysis we are able to get a new interpretation of the cells and we create \(5\) new cell types.
To test the performance of the cells in Section \ref{S:expriments} we take two data sets from the ICDAR 2009 competitions, where the MDRNNs with LSTM cell won. On these data sets we compare the recognition results of the MDRNNs when we substitute the LSTM cells by the new developed cells. On both data sets, the IFN/ENIT data set and the RIMES data set we can improve the recognition rate using the new developed cells.\\

\end{section}

\begin{section}{Previous Work}
\label{S:previous_work}
In this section we briefly want to introduce a recurrent neural network (RNN) and the development of the LSTM cell. In previous literature there are various notation to describe the update equations of RNNs an LSTMs. To unify the notations we will refer to their notation using ``\(\triangleq\)'' \citep{js99,sh97,ag08}. Therefore we concentrate on a simple hierarchical RNN with one input layer with the set of neurons \(I\), one recurrent hidden layer with the set of neurons \(H\) and one output layer with the set of neurons \(O\). For each time step \(t\in\IN\) the layers are updated asynchronously in the order \(I,H,O\). In one specific layer all neurons can be updated synchronously. In the hidden layer for one neuron \(c\in H\) at time \(t\in\IN\) we calculate the neuron's \emph{input activation} \(net_c\) by
\begin{align}
\label{E:03_unit}
   \left(a_c(t)\triangleq\right)\quad net_c(t) =\sum_{i\in I}w_{c,i}y_i(t)+\sum_{h\in H}w_{c,h}y_h(t-1).
\end{align}
with weights \(w_{\text{[target neuron]},\text{[source neuron]}}\). A bias in \eqref{E:03_unit} can be added by extending the set \(I:=I\cup\left\{bias\right\}\) with \(y_{bias}(t)=1\forall t\in\IN\) and hence we will not write the bias in the equations, but we use them in our RNNs in Section \ref{S:expriments}. The neuron's \emph{output activation} is calculated by
\begin{align*}
 \left(y^c(t),b_c(t)\triangleq \right)\quad  y_c(t)=f_ c\left(net_c(t)\right)
\end{align*}
{
\begin{figure}
\centering
\begin{tikzpicture}[scale=0.5,transform shape,line width =1pt]

\newcommand{\picleftside}{0.35};
\newcommand{\picrightside}{0.8};
\newcommand{\picheight}{1.2};
\newcommand{\picarrowlength}{1.6};
\newcommand{\picrotation}{0};

\huge
\draw [fill=black!10] (-15,2) rectangle (-12.5,-2);
\draw [fill=black!10] (-1.5,2) rectangle (6.5,-2);
{
\node at (-3.5,.6) {$\mathbf{y}_H(t-1)$};
\node at (-3.5,-.6) {$\mathbf{y}_I(t)$};
 \node (in) at (0,0) [circle,draw,inner sep=2mm] {$\sum$};
 \draw[->] (-2,.8*\picheight) -- (in);
 \draw[->] (-2,.4*\picheight) -- (in);
 \draw[->] (-2,.0) -- (in);
 \draw[->] (-2,-.4*\picheight) -- (in);
 \draw[->] (-2,-.8*\picheight) -- (in);
 \node (input) at (2.5,0) {$net_\gamma(t)$};
 \node (squash_in) at (5,0) [circle,draw,inner sep=2mm] {$f_\gamma$};
 \node (output) at (8,0) {$y_\gamma(t)$};
 \draw (in) -- (input);
 \draw[->] (input) -- (squash_in);
 \draw[->] (squash_in) -- (output);
}
\drawtri{-14}{0}{$\gamma$};
 \node (output) at (-10.5,0) {$y_\gamma(t)$};
\draw[->] (0.8-14,0) -- (output);
{\Huge \node at (-7.5,0) {$:=$};
}\end{tikzpicture}
\caption{{Schematic diagram of a unit:} The unit \(\gamma\in H\) is a simple neuron with the network's  feed forward input \(\mathbf{y}_I(t)=\big(y_i(t)\big)_{i\in I}\) and recurrent input \(\mathbf{y}_H(t-1)=\big(y_h(t-1)\big)_{h\in H}\) and an output activation \(y_\gamma(t)\). {\em Right:} A unit has an input activation \(net_\gamma(t)\), which is a linear combination of the source activations \(\mathbf{y}_I(t),\mathbf{y}_H(t-1)\). The output activation \(y_\gamma(t)\) is computed by applying the activation function \(f_\gamma\) to the input activation. {\em Left:} The short notation of a unit.}
\label{Fig:unit}
\end{figure}
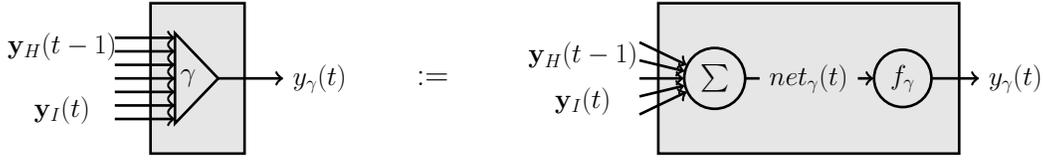
}
with a differentiable sigmoid \emph{activation function} \(f_c\). To make \eqref{E:03_unit} suitable for \(t\leq0\) we define \(\forall h\in H,\forall t\in\IZ\setminus\IN:y_h(t)=0\). This simple neuron with a linear function of activations as input and one activation function we call \emph{unit} (compare to Figure \ref{Fig:unit}). In \eqref{E:03_unit} the activation of the unit is dependent on the current activations of the layer below and the previous activations of the units from the same layer. When there are no recurrent connections (\(\forall c,h\in H:w_{c,h}=0\)), the layer is called feed-forward layer, otherwise recurrent layer. 
\subsection{The Long Short-Term Memory}
\label{SS:LSTM}
A \emph{standard LSTM} cell \(c\) has one input with an input activation \(\yin(t)\) a set of \emph{gates}, one \emph{internal state \(s_c\)} and one output(-activation) \(y_c\) (\(\triangleq y^c\)). The gates are also units and their task is to learn whether \asignal should pass the gate or not. They almost always have the logistic activation function \(\fl(x):=\frac{1}{1+\exp(-x)}\) (\(\triangleq f_1(x)\)). The input of the standard LSTM cell is calculated from a unit with an odd activation function with a slope of \(1\) at \(x=0\). We use \(f_c(x)=\tanh\left(x\right)\) in this paper, another solution could be \(f_c(x)=2\tanh\left(\frac{x}{2}\right)\) \citep[see][]{sh97}. The standard LSTM has two gates: The \emph{input gate} (IG or \(\iota\)) and the \emph{output gate} (OG or \(\omega\)).
These both gates are calculated like a unit, so that
\begin{align*}
net_\iota(t)&=\sum_{i\in I}w_{\iota,i}y_i(t)+\sum_{h\in H}w_{\iota,h}y_h(t-1)\\
\left(y^{in_c}(t),b_\iota(t)\triangleq\right)\quad y_\iota(t)&=\fl\left(net_\iota(t)\right)
\end{align*}
and
\begin{align*}
net_\omega(t)&=\sum_{i\in I}w_{\omega,i}y_i(t)+\sum_{h\in H}w_{\omega,h}y_h(t-1)\\
\left(y^{out_c}(t),b_\omega(t)\triangleq\right)\quad y_\omega(t)&=\fl\left(net_\omega(t)\right).
\end{align*}
The input of an LSTM is defined like in \eqref{E:03_unit} by
\begin{align*}
\left(net_c(t)\triangleq\right)\quad net_\cin(t)&=\sum_{i\in I}w_{c,i}y_i(t)+\sum_{h\in H}w_{c,h}y_h(t-1),\\
  \left(g\left(net_c(t)\right),f_2\left(net_c(t)\right)\triangleq\right)\quad \yin(t)&=f_c\left(net_\cin(t)\right).
\end{align*}
The internal state \(s_c(t)\) is calculated by
\begin{align}
\label{E:state_lstmd1}
   s_c(t)=\yin(t)\cdot y_\iota(t)+s_c(t-1),
\end{align}
the output activation \(y_c(t)\) of the LSTM is calculated from
\begin{align}
\label{E:out_lstmd1}
 \left(y^c(t),b_c(t)\triangleq\right)\quad y_c(t)=\fout{s_c(t)}\cdot y_\omega(t)
\end{align}
with \(\fout{x}:=\tanh(x)\) (\(\triangleq f_3(x)\)). The LSTM can be interpreted as a kind of memory module where the internal state stores the information. For a given input \(\yin(t)\in(-1,1)\) the IG ``decides'' if the new input is relevant for the internal state. If so, the input is added to the internal state. The information of the input is now saved in the activation of the internal state. The OG determines whether or not the internal activation should be displayed to the rest of the network. So the information, stored in the LSTM is just ``readable'' when the OG is active. To sum up, an open IG can be seen as a ``write''-operation into the memory and an open OG as a ``read''-operation of the memory.\\

Another way to understand the LSTM is to take a look at the gradient propagated through it. To analyse the LSTM properly, we have to ignore gradients comming from recurrent weights. We define the \emph{truncated gradient} similar to \citet{sh97} and \citet{js99}.
\begin{definition}[truncated gradient]
\label{defi:truncated_gradient}
Let \(\gamma\in\left\{\cin,\iota,\omega\right\}\) be any input or gate unit and \(y_c(t-1)\) any previous output activation.
The \emph{truncated gradient} differs from the exact gradient only by setting recurrent weighted gradient propagation \(\frac{\partial net_\gamma(t)}{\partial y_c(t-1)}\) to zero. We write 
\begin{align*}
\frac{\partial net_\gamma(t)}{\partial y_c(t-1)} \Big(=w_{\gamma,c}\Big)&\eqtr0.
\end{align*}
\end{definition}
Now, let \(E\) be an arbitrary error which is used to train the RNN and \(\frac{\partial E(t)}{\partial y_c(t)}\) the resulting \der at the output of the LSTM. The OG can eliminate the gradient coming from the output, because
\begin{align*}
   \frac{\partial y_c(t)}{\partial s_c(t)}=\underbrace{\foutder{s_c(t)}}_{\in(0,1]}\cdot \underbrace{y_\omega(t)}_{\in(0,1)},
\end{align*}
so the OG decides when the gradient should go into the internal state. Especially for \(\left|s_c(t)\right|\ll 1\) we get
\begin{align*}
   \frac{\partial y_c(t)}{\partial s_c(t)}\approx y_\omega(t).
\end{align*}
The key idea of the LSTMs is that an error that occurs at the internal state neither explode nor vanish over time. Therefore, we take a look at the partial \der \(\frac{\partial s_c(t)}{\partial s_c(t-1)}\), which is also known as error carousel \citep[for more details see][]{sh97}. Using the truncated gradient of Definition \ref{defi:truncated_gradient} for this \der, we get
\begin{align}
\frac{\partial s_c(t)}{\partial s_c(t-1)}
=&\yin(t)\cdot\frac{\partial y_\iota(t)}{\partial s_c(t-1)}
+y_\iota(t)\cdot\frac{\partial \yin(t)}{\partial s_c(t-1)}+1\notag\\
=&\yin(t)\cdot\frac{\partial y_\iota(t)}{\partial y_c(t-1)}\frac{\partial y_c(t-1)}{\partial s_c(t-1)}
+y_\iota(t)\cdot\frac{\partial \yin(t)}{\partial y_c(t-1)}\frac{\partial y_c(t-1)}{\partial s_c(t-1)}+1\notag\\
=&\yin(t)\cdot\frac{\partial y_\iota(t)}{\partial net_\iota(t)}\underbrace{\frac{\partial net_\iota(t)}{\partial y_c(t-1)}}_{\eqtr0}\frac{\partial y_c(t-1)}{\partial s_c(t-1)}\notag\\
&+y_\iota(t)\cdot\frac{\partial \yin(t)}{\partial net_\cin(t)}\underbrace{\frac{\partial net_\cin(t)}{\partial y_c(t-1)}}_{\eqtr0}\frac{\partial y_c(t-1)}{\partial s_c(t-1)}+1\notag\\
\label{E:derivate_state_lstmd1}
\Rightarrow\frac{\partial s_c(t)}{\partial s_c(t-1)}\eqtr&1.
\end{align}
So, once having a gradient at the internal state we can use the chain rule and get \(\forall \tau\in\IN:\frac{\partial s_c(t)}{\partial s_c(t-\tau)}\eqtr1\). This is called \emph{constant error carousel}.\\
Like the OG can eliminate the gradient coming from the LSTM output, the IG can do the same with the gradient coming from the internal state, that means it decides when the gradient should be injected to the source activations. This can be seen by taking a look at the partial \der
\begin{align*}
   \frac{\partial s_c(t)}{\partial net_\cin(t)}&=\frac{\partial s_c(t)}{\partial \yin(t)}\frac{\partial \yin(t)}{\partial net_\cin(t)}=y_\iota(t)f'_c\left(net_\cin(t)\right).
\end{align*}
If there is a small input \(\left|net_\cin(t)\right|\ll 1\), we get \(f'_c\left(net_\cin(t)\right)\approx1\) and can estimate
\begin{align*}
    \frac{\partial s_c(t)}{\partial net_\cin(t)}\approx y_\iota(t).
\end{align*}
All in all, this LSTM is able to store information and learn long-term dependencies, but it has one drawback which will be discussed in \ref{SS:Learning_to_forget}.
\subsection{Learning to Forget}
\label{SS:Learning_to_forget}
For long time series the internal state is unbounded \citep[compare with][2.1]{js99}. Assuming a positive or negative input and a non zero activation of the IG, the absolute activation of the internal state grows over time. Using the weight-space symmetries in a network with at least one hidden layer \citep[5.1.1]{cb06} we assume without loss of generality \(\yin(t)\geq 0\), so \(s_c(t)\xrightarrow{t\to \infty}\infty\). Hence, the activation function \(\foutshort\) saturates and \eqref{E:out_lstmd1} can be simplified to
\begin{align*}
   y_c(t)= \underbrace{\fout{s_c(t)}}_{\to1} y_\omega(t)\approx y_\omega(t).
\end{align*}
Thus, for great activations of \(s_c(t)\) the whole LSTM works like a unit with a logistic activation function. A similar problem can be observed for the gradient. The gradient coming from the output is multiplied by the activation of the OG and the \der of \(\foutshort\). For great values of \(s_c(t)\) we get \(\foutder{s_c(t)}\to0\) and we can estimate the partial \der
\begin{align*}
   \frac{\partial y_c(t)}{\partial s_c(t)}=\foutder{(s_c(t)}\cdot y_\omega(t)\approx 0,
\end{align*}
which can be interpreted that the OG is not able to propagate back the gradient into the LSTM. Some solutions to solve the linear growing state problem are introduced in \citet{js99}. They tried to stabilize the LSTM with a ``state decay'' by multiplying the internal state in each time step with a value \(\in\left(0,1\right)\), which did not improve the performance. Another solution was to add an additional gate, the forget gate (FG or \(\phi\)). The last state \(s_c(t-1)\) is multiplied by the activation of the FG before it is added to the current state \(s_c(t)\). So we can substitute \eqref{E:state_lstmd1} by
\begin{align*}
    s_c(t)=\yin(t)\cdot y_\iota(t)+s_c(t-1)\cdot y_\phi(t),
\end{align*}
so that the truncated gradient in \eqref{E:derivate_state_lstmd1} is changed to
\begin{align*}
   \frac{\partial s_c(t)}{\partial s_c(t-1)}
&=\yin(t)\cdot\frac{\partial y_\iota(t)}{\partial s_c(t-1)}
+y_\iota(t)\cdot\frac{\partial \yin(t)}{\partial s_c(t-1)}+y_\phi(t)\\
&\eqtr y_\phi(t)
\end{align*}
and for longer time series we get \(\forall \tau\in\IN\)
\begin{align*}
     \frac{\partial s_c(t)}{\partial s_c(t-\tau)}\eqtr\prod_{t'=0}^{\tau-1}y_\phi(t-t').
\end{align*}
Now, the \emph{Extended LSTM} is able to learn to forget its previous state. However, an Extended LSTM is still able to work like an standard LSTM without FG by having an activation \(y_\phi(t)\approx 1\). In this paper we denote the Extended LSTM as \emph{LSTM}\\
Another point of view was introduce in \citet{be94}:
To learn long-term dependencies a system must have an architecture to that an input can be saved over long time and does not suffer from the ``vanishing gradient'' problem.
On the other hand the system should avoid an ``exploding gradient'', which means that a small disturbance has a growing influence over time. In this paper we do not want to solve the problem of vanishing and exploding gradient for a whole system, we want to solve this problem only for one single cell. But we think that it is an necessary condition to provide long time dependencies of a system.
\end{section}

\begin{section}{Cells and Their Properties}
\label{S:cell_properties}
\begin{figure}
\centering
\begin{tikzpicture}[scale=1,line width =1pt]
\large
\draw[fill,color=black!15,rounded corners=8mm] (-1,-3) rectangle  (9,4.5);
\draw[fill,color=black!30,rounded corners=8mm] (1,1.2) rectangle  (9,4.5);
\node (bgamma) at (5.2,2)  [rectangle,draw,inner xsep=28mm] {$\mathbf{y}_{\Gamma}(t)$};
\node at (1.5,2.8) {\Large{$\Gamma$}};
\node at (-0.5,3.5) {\Huge{$c$}};
\drawuniteasy{3.3}{3.5}{$\gamma_1$}{-90};
\node (gamma1) at (3.3,3.5-0.8) [inner sep=0mm, outer sep=-0.5pt] {};
\node at (5.2,3.5) {\huge{$\cdots$}};
\drawuniteasy{7.1}{3.5}{$\gamma_{\left|\Gamma\right|}$}{-90};
\node (gamma2) at (7.1,3.5-0.8) [inner sep=0mm, outer sep=-0.5pt] {};
\drawuniteasy{0}{0}{$\cin$}{0};
\node (frein) at (3.3,0)  [rectangle,draw,inner sep=3mm,rounded corners=3mm] {$\frein{\cdot}$};
\node (fraus) at (7.1,0)  [rectangle,draw,inner sep=3mm,rounded corners=3mm] {$\fraus{\cdot}$};
\node (memory) at (5.2,-1.5)  [rectangle,draw,inner sep=3mm] {$\substack{\text{memory}\\s_c(t-1),\dots, s_c(t-k)}$};
\node (point) at (5.2,0) [circle,fill,inner sep=0.5mm, outer sep=-0.5pt] {};
 \node (pointrein) at (3.3,-1.5) [inner sep=0mm, outer sep=-0.5pt] {};
 \node (pointraus) at (7.1,-1.5) [inner sep=0mm, outer sep=-0.5pt] {};
\draw[->] (gamma1) -- (bgamma.169);
\draw[->] (gamma2) -- (bgamma.11);
\draw[->] (0+0.8,0) -- (frein);
\node (point1) at (2,0) [circle,fill,inner sep=0.5mm, outer sep=-0.5pt] {};
\node (point2) at (2,-2.5) [inner sep=0mm, outer sep=-0.5pt] {};
\node (point3) at (7.55,-2.5) [inner sep=0mm, outer sep=-0.5pt] {};
\draw (point1) -- (point2);
\draw (point2) -- (point3);
\draw[->] (point3) -- (fraus.310);

\draw[->] (bgamma.349) -- (fraus);
\draw[->] (bgamma.191) -- (frein);
\draw[->] (frein) -- (fraus);
\draw[->] (fraus) -- (10,0);
\draw[->] (point) -- (memory);
 \draw (memory) -- (pointrein);
 \draw (memory) -- (pointraus);
 \draw[->] (pointrein) -- (frein);
 \draw[->] (pointraus) -- (fraus);
\node at (1.4,0.5) {$\yin(t)$};
\node at (5.2,0.5) {$s_c(t)$};
\node at (10,0.5) {$y_c(t)$};
\end{tikzpicture}
\caption{{Schematic diagram of a cell:} The function \(g_{int}\) calculates the internal state \(s_c(t)\)
from the previous internal states \(s_c(t-1),\dots, s_c(t-k)\) and the cell input \(y_{c_{in}}(t)\) using the gate activations $\mathbf{y}_{\Gamma}(t)$. The function \(g_{out}\) calculates the output \(y_c(t)\) of the cell from the actual and previous internal states \(s_c(t),\dots, s_c(t-k)\), the cell input \(y_{c_{in}}(t)\) also using the gate activations $\mathbf{y}_{\Gamma}(t)$.}
\label{Fig:cell}
\end{figure}
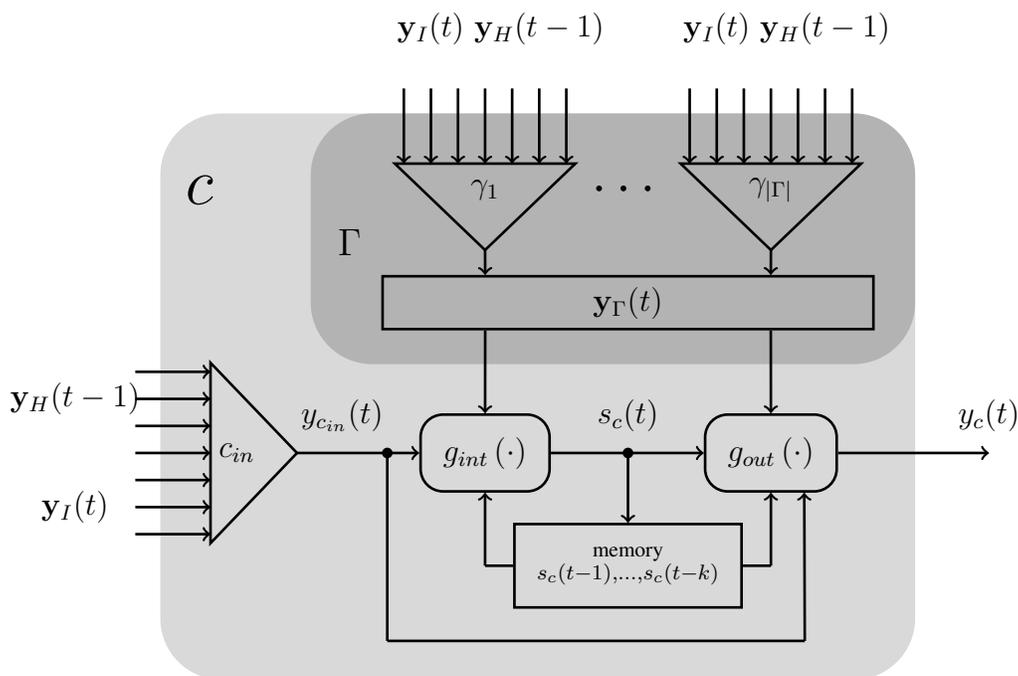

In this section we want to introduce a general cell and figure out properties for these cells which probably lead to the good performance observed by LSTM cells.
\begin{definition}[Cell, cf. Fig. \ref{Fig:cell}]
\label{defi:cell_d1}
	A \emph{cell}, $c$, \emph{of order $k$} consists of
	\begin{itemize}
		\item one designated \emph{input} unit, $\cin$, with sigmoid activation function $f_c$ (typically $f_c=\tanh$ unless specified otherwise);
		\item a set $\Gamma$ (not containing $\cin$) of units called \emph{gates} $\gamma_1, \gamma_2, \dots$ with sigmoid activation functions $f_{\gamma_i}$, $i=1,\dots$ (typically logistic $f_{\gamma_i}=\fl$ unless specified otherwise);
		\item an arbitrary function, $\frein{}$, and a cell activation function, $\fraus{}$, mapping into $[-1,1]$.
	\end{itemize}
	Each unit of \(\Gamma\cup\{\cin\}\) receives the same set of input activations.
	The cell update in time step $t\in\IN$ is performed in three subsequent phases:
	\begin{enumerate}
		\item Following the classical update scheme of neurons (see Section \ref{S:previous_work}), all units in $\Gamma\cup\{\cin\}$ calculate synchronously their activations, which will be denoted by $\fy(t):=\big(y_\gamma(t)\big)_{\gamma\in\Gamma}$ and $\yin(t)$. Furthermore, we call \(\yin(t)\) the \emph{input activation} of the cell.
		\item Then, the cell computes it's so-called \emph{internal state} 
			\[ s_c(t):=\frein{\fy(t),\yin(t),s_c(t-1),\ldots,s_c(t-k)} \,.\]
		\item Finally, the cell computes it's so-called \emph{output activation} 
			\[ y_c(t):=\fraus{\fy(t),\yin(t),s_c(t),s_c(t-1),\ldots,s_c(t-k)} \,.\]
	\end{enumerate}
\end{definition}
In this paper we concentrate on first order cells (\(k=1\)). Now, we use Definition \ref{defi:cell_d1} to re-introduce the (Extended) LSTM cell.
\begin{remark}[LSTM cell]
  An \emph{LSTM cell} is a cell of order \(1\) where \(\foutshort=\tanh\) and
	\begin{itemize}
		\item $\Gamma = \{\iota, \phi, \omega\}$
		\item $s_c(t):=\frein{\fy(t),\yin(t),s_c(t-1)}:=\yin(t)y_\iota(t)+s_c(t-1)y_\phi(t)$
		\item $y_c(t):=\fraus{\fy(t),s_c(t)}:= \fout{s_c(t)}y_\omega(t)$
	\end{itemize}
\end{remark}
{
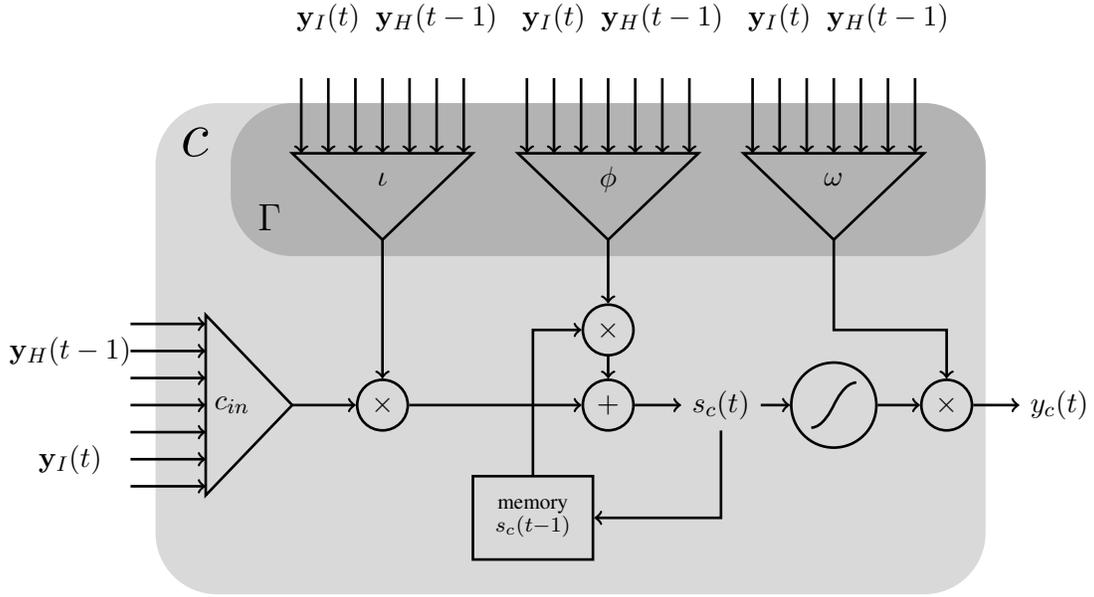
\begin{figure}
\centering
\begin{tikzpicture}[scale=1,line width =1pt]
\draw[fill,color=black!15,rounded corners=8mm] (1,-2.5) rectangle  (12,4);
\draw[fill,color=black!30,rounded corners=8mm] (2,2) rectangle  (12,4);
\node at (2.5,2.5) {\Large{$\Gamma$}};
\node at (1.5,3.5) {\Huge{$c$}};
\drawuniteasy{2}{0}{$\cin$}{0};
\drawuniteasy{4}{3}{$\iota$}{-90};
\drawuniteasy{7}{3}{$\phi$}{-90};
\drawuniteasy{10}{3}{$\omega$}{-90};
 \node (times_in) at (4,0) [circle,draw,inner sep=1mm] {$\times$};
 \node (past_1_plus) at (7,0) [circle,draw,inner sep=1mm] {$+$};
 \node (currstate) at (8.5,0) {$s_c(t)$};
 \node (squash_state) at (10,0) [circle,draw,inner sep=4mm] {};
 \sigmoid[(10,0)]
 \node (times_og) at (11.5,0) [circle,draw,inner sep=1mm] {$\times$};
 \node (out) at (13,0) {$y_c(t)$};
\node (state1) at (6,-1.5)  [rectangle,draw,inner sep=3mm] {$\substack{\text{memory}\\s_c(t-1)}$};
 \node (times_fg1) at (7,1)   [circle,draw,inner sep=1mm] {$\times$};
 
\draw[->] (2+0.8,0) -- (times_in);
\draw[->] (4,3-0.8) -- (times_in);
\draw[->] (7,3-0.8) -- (times_fg1);
\draw[->] (10,3-0.8) -- (10,1) -- (11.5,1) -- (times_og);

\draw[->]  (times_in) -- (past_1_plus); 
 \draw[->] (past_1_plus) -- (currstate);
\draw[->] (currstate) -- (squash_state) ;
\draw[->] (squash_state) -- (times_og); 
\draw[->] (times_og) -- (out);
\draw[->] (state1) -- (6,1) -- (times_fg1);
\draw[->] (currstate) -- (8.5,-1.5) -- (state1) ;
\draw[->] (times_fg1) -- (past_1_plus);
\end{tikzpicture}
\caption{{Schematic diagram of a one-dimensional LSTM cell:} The input (\(\cin\)) is multiplied by the IG (\(\iota\)). The previous state \(s_c(t-1)\) is gated by the FG (\(\phi\)) and added to the activation coming from the IG and input. The output of the cell is the squashed internal state (squashed by \(\fout{x}=\tanh(x)\)) and gated by the OG (\(\omega\)).}
\label{Fig:lstm_1d}
\end{figure}
}
\paragraph{Properties of cells.} Developing the \oneD LSTM cells, the main idea is to save \emph{exactly one piece of information} over a long time series and to propagate the gradient back over this long time, so that the system can learn precise storage of this piece of information. In instance a given input \(\yin\) (which represent the information) at time \(t_{in}\) should be stored into the cell state \(s_c\) until the information is required at time \(t_{out}\).\\
To be able to prove the following properties, we will assume the truncated gradient defined in Definition \ref{defi:truncated_gradient}. Nevertheless we will use the full gradient in our Experiments, because it turned out that it works much better. The next two properties of a cell ensure the ability to work as such a memory.\\
The first property should ensure that an input \(\yin\) at time \(t_{in}\) can be memorized (the cell input is open) in the internal activation \(s_c\) until \(t_{out}\) (the cell memorizes) and has a negligibly influence on the internal activation for \(t> t_{out}\) (the cell forgets). In addition, the cell is able to prevent influence of other inputs at time steps \(t\neq t_{in}\) (the cell input is closed).
\begin{definition}[Not vanishing gradient (\LTD)]
\label{defi:LTD}
   A cell \(c\) allows an \LTD \(:\Leftrightarrow\)\\
For arbitrary \(t_{in},t_{out}\in\IN,t_{in}\leq t_{out},\forall\delta>0\) there exist gate activations \(\fy(t)\) such that for any \(t_1,t_2\in\IN\)
\begin{align}
\label{E:LTD_1D}
\frac{\partial s_c\left(t_2\right)}{\partial \yin\left(t_1\right)}&\intr\left\{
\begin{array}{ccl}
 \left[1-\delta,1\right]&\text{ for }&t_1=t_{in}\text{ and } t_{in}\leq t_2\leq t_{out}\\
\left[0,\delta\right]&\text{ otherwise }&\quad 
\end{array}\right.
\end{align} 
holds.
\end{definition}
The next definition guaranties that at any time \(t\in\IN\) the gate activations can (the cell output is open) or not (the cell output is closed) distribute the piece of information saved in \(s_c\) to the network. This is an important property because the piece of information can be memorized in the cell without presenting it to the network. Note that the decision is just dependent on gate activations at time \(t\) and there are no constraints to previous gate activations. In Definition \ref{defi:cell_d1} we require \(y_c(t)\in\left[-1,1\right]\) whereas \(s_c(t)\in\IR\). So we cannot have arbitrarily small intervals of the \der as in \eqref{E:LTD_1D}, but we can ensure two distinct intervals for open and closed cell output.
When we take Definition \ref{defi:LTD} and \ref{defi:COG} together, a cell is able to save an input over long term series, can decide at each time step whether or not it is presented to the network and can forget the saved input.\\
\begin{definition}[Controllable output dependency (\COD)]
\label{defi:COG}
  A cell \(c\) of order \(k\) allows an \COD \(:\Leftrightarrow\)\\
There exist \(\delta_1,\delta_2\in(0,1),\delta_2<\delta_1\) so that for any time \(t\in\IN\) there exists a gate vector \(\fy(t)\) leading to open output dependency
\begin{align}
\label{E:05_COG_1}
\frac{\partial y_c\left(t\right)}{\partial s_c\left(t\right)}\in\left[\delta_1,1\right]
\end{align}
and there exists another gate vector \(\fy(t)\) leading to a closed output dependency
\begin{align}
\label{E:05_COG_2}
\frac{\partial y_c\left(t\right)}{\partial s_c\left(t\right)}\in\left[0,\delta_2\right].
\end{align}
\end{definition}
The third property is a kind of stability criterion. An unwanted case is that a small change (caused by any noisy signal) at time step \(t_{in}\) has a growing influence at later time steps. This is equivalent to an exploding gradient over time. Controlling the gradient of the whole system and avoiding him not to explode is a hard problem. But we can at least avoid the exploding gradient in one cell. This should be prohibited for \emph{any} gate activations.  
\begin{definition}[Not exploding gradient (\NGEC)]
\label{defi:not_growing_1d}
  A cell \(c\) has an \NGEC \(:\Leftrightarrow\)\\
For any time steps \(t_{in},t\in\IN, t_{in}< t\) and any gate activations \(\fy(t)\) the truncated gradient in bounded by
\begin{align*}
\frac{\partial s_c\left(t\right)}{\partial s_c\left(t_{in}\right)}\intr\left[0,1\right].
\end{align*}
\end{definition}
We think that a cell fulfilling these three properties can work as stable memory. To be able to prove these properties for the LSTM cell we have to considerate the gate activations. In general, the activation function of the gates does not have to be the logistic activation function \(\fl\), whereas for this paper we set \(\forall\gamma\in\Gamma:f_\gamma:=\fl\). So the activation of gates can never be exactly \(0\) or \(1\), because of a finite input activation \(net_\gamma(t)\) to the gate activation function. But a gate can have an activation \(y_\gamma(t)\in\left[1-\eps,1\right)\) if it is opened or \(y_\gamma(t)\in\left(0,\eps\right]\) if it is closed, because for a realistic large input activation \(net_\gamma(t)\geq 7\) (low input activation \(net_\gamma(t)<-7\)) we get an activation within the interval \(y_\gamma(t)\in[1-\eps,1)\) (\(y_\gamma(t)\in(0,\eps]\)) with \(\eps<\frac{1}{1000}\). Handling with these activation intervals we can prove the definitions for the LSTM cell.
Now we can prove whether or not the LSTM cell has these properties.
\begin{theorem}[Properties of the LSTM cell]
\label{theo:ltd_1D}
   The \oneD LSTM cell allows \LTD and has an \NGEC, but does not allow \COD.
\end{theorem}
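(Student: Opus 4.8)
The plan is to verify the three properties one at a time, reusing the truncated-gradient computations from Sections \ref{SS:LSTM} and \ref{SS:Learning_to_forget}. For \NGEC and \LTD the engine is the same: unroll the state recurrence $s_c(t)=\yin(t)y_\iota(t)+s_c(t-1)y_\phi(t)$ and exploit that under the truncated gradient every gate behaves as a constant, since its dependence on the cell output, and hence on earlier states, is set to zero by Definition \ref{defi:truncated_gradient}. For the failure of \COD the mechanism is different: I compute the read sensitivity $\partial y_c(t)/\partial s_c(t)$ explicitly and show that the saturating factor $\foutder{s_c(t)}$ forbids any uniform positive lower bound.

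For \NGEC I start from the forget-gate computation already carried out in Section \ref{SS:Learning_to_forget}, namely $\frac{\partial s_c(t)}{\partial s_c(t-1)}\eqtr y_\phi(t)$. Since $f_\phi=\fl$, each $y_\phi(t)\in(0,1)$. Iterating the chain rule over $t_{in}<t'\le t$ gives $\frac{\partial s_c(t)}{\partial s_c(t_{in})}\eqtr\prod_{t'=t_{in}+1}^{t}y_\phi(t')$, a finite product of numbers in $(0,1)$, hence in $[0,1]$ for every choice of gate activations. This is exactly \NGEC.

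For \LTD, unrolling the state recurrence and differentiating with the gates frozen yields, for $t_1\le t_2$, $\frac{\partial s_c(t_2)}{\partial \yin(t_1)}\eqtr y_\iota(t_1)\prod_{j=t_1+1}^{t_2}y_\phi(j)$, while the derivative is $0$ for $t_1>t_2$ because a future input cannot influence a past state. Given $t_{in}\le t_{out}$ and $\delta>0$, I choose gate activations so that the input gate is open only at $t_{in}$ (so $y_\iota(t_{in})\approx 1$ and $y_\iota(t)\approx 0$ for $t\ne t_{in}$) and the forget gate is open exactly on $(t_{in},t_{out}]$ and closed afterwards ($y_\phi(j)\approx 1$ for $t_{in}<j\le t_{out}$, $y_\phi(j)\approx 0$ for $j>t_{out}$). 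For $t_1=t_{in}$ and $t_{in}\le t_2\le t_{out}$ the derivative is a product of at most $t_{out}-t_{in}$ factors each close to $1$, so it lies in $[1-\delta,1]$; in every other case either a closed input gate ($t_1\ne t_{in}$), a closed forget gate after $t_{out}$ ($t_2>t_{out}$), or the vanishing future term ($t_1>t_2$) drives the product into $[0,\delta]$. The only quantitative care needed is to pick the gate-input magnitudes so that, with $N=t_{out}-t_{in}$ factors, $(1-\eps)^N\ge 1-\delta$; since logistic activations approach $0$ and $1$ arbitrarily closely this is always achievable.

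The genuinely different, and delicate, part is the failure of \COD. Differentiating the output equation \eqref{E:out_lstmd1} gives $\frac{\partial y_c(t)}{\partial s_c(t)}=\foutder{s_c(t)}\,y_\omega(t)=\big(1-\tanh^2 s_c(t)\big)y_\omega(t)$. The closed direction is harmless: taking $y_\omega(t)$ small forces the product below any $\delta_2$. The obstruction is the open direction. Because $y_\omega(t)<1$, the sensitivity is bounded above by $\foutder{s_c(t)}$, which tends to $0$ as $|s_c(t)|\to\infty$. As emphasised in Section \ref{SS:Learning_to_forget}, the internal state of the LSTM is unbounded, so it can reach values at which $\foutder{s_c(t)}<\delta_1$ for any prescribed $\delta_1\in(0,1)$; at such a time no gate vector can satisfy \eqref{E:05_COG_1}, so no fixed pair $\delta_1>\delta_2$ can work uniformly in $t$ and \COD fails. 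I expect this to be the main obstacle, precisely because one must fix the reading convention: the state $s_c(t)$ appearing in the read derivative is the stored quantity one wishes to output and is therefore treated as given, the input and forget gates governing writing and forgetting rather than the read sensitivity, so they cannot be invoked to reset $s_c(t)$ to a small value. Making this convention explicit, together with invoking the unbounded-state phenomenon, is the crux of the argument.
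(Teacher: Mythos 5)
Your proposal is correct and follows essentially the same route as the paper's proof: the same truncated-gradient product formulas for \NGEC and \LTD with the same open/closed gate schedule, and the same mechanism (unbounded growth of \(s_c\) forcing \(\foutder{s_c(t)}\to 0\)) to defeat \COD. The only cosmetic difference is that the paper makes the \COD contradiction quantitative via the explicit chain \(\eps\le\delta_2<\delta_1\le\exp\left(-c\frac{1-\eps}{\eps}\right)\), whereas you argue directly that no uniform \(\delta_1\) can exist; the reading convention you flag explicitly is the one the paper adopts implicitly.
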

\begin{proof}
   see \ref{proof:theo:ltd_1D} in appendix.
\end{proof}

\end{section}

\begin{section}{Expanding to More Dimensions}
\label{S:discussion}
In \citet{ag07} the \oneD LSTM cell is extended to an arbitrary number of dimensions; this is solved by using one FG for each dimension.
In many publications using the \multiD LSTM cell in MDRNNs outperform state-of-the-art recognition systems \citep[for example see][]{ag08}.\\
But by expanding the cell to the \multiD case, the absolute value of the internal state \(\left|s_c\right|\) can grow faster than linear over time.
When \(\left|s_c^\fp\right|\to\infty\) and there are peephole connections \cite[for peephole connection details see][]{fg02}, the cells have an output activation of \(y_c^\fp\in\left\{-1,0,1\right\}\): The internal state multiplied by the peephole weight overlays the other activation-weight-products and this leads to an activation of the OG \(y_\omega^\fp\in\left\{0,1\right\}\) and a squashed internal state \(\fout{s_c^\fp}\in\left\{-1,1\right\}\). So the output of the cell is \(y_\omega^\fp \fout{s_c^\fp}=y_c^\fp\in\left\{-1,0,1\right\}\). But also without peephole connections the internal state can grow, which leads to \(\fout{s_c^\fp}\in\left\{-1,1\right\}\) and the cell works like a conventional unit with a logistic activation function \(y_c(t)\approx \pm y_\omega(t)\).\\
Our goal is to transfer the Definitions \ref{defi:LTD}, \ref{defi:COG} and \ref{defi:not_growing_1d} defined in Section \ref{S:cell_properties} into the \multiD case and we will see that the \multiD LSTM cell has an exploding gradient. In the next sections we will provide alternative cell types, that fulfill two or all of these definitions.\\
In the \oneD case it is clear, that there is just one way to come from date \(t_1\) to date \(t_2\), when \(t_1<t_2\), by incrementing \(t_1\) as long as \(t_2\) is reached. For the \multiD case the number of paths depends on the number of dimensions and the distance between these two dates. An \multiD path is defined as follows.
\begin{definition}[\multiD path]
\label{defi:md_path}
   Let \(\fp,\fq\in\IN^D\) be two dates. A \(\fp\)-\(\fq\)-path \(\pi\) of length \(k\geq0\) is a sequence
\begin{align*}
   \pi:=\{\fp=\fp_0,\fp_1,\ldots,\fp_k=\fq\}
\end{align*}
with \(\forall i\in\{1,\ldots,k\} \exists!d\in\{1,\ldots, D\} : \left(\fp_i\right)_d^-=\fp_{i-1}\). Further, let \(\pi_i:=\fp_i\).
\end{definition}
We can define the distance vector
\begin{align*}
   \overrightarrow{\fp\fq}:=\fq-\fp=\begin{pmatrix}
                       \fq_1-\fp_1\\\vdots\\\fq_D-\fp_D
                    \end{pmatrix}=\begin{pmatrix}
                       \overrightarrow{\fp\fq}_1\\\vdots\\\overrightarrow{\fp\fq}_D
                    \end{pmatrix}
\end{align*}
between the dates \(\fp\) and \(\fq\). When \(\overrightarrow{\fp\fq}\) has at least one negative component, there exists no \(\fp\)-\(\fq\)-path. Otherwise there exist exactly
\begin{align*}
\#\{\overrightarrow{\fp\fq}\}:=   \begin{pmatrix}
\sum_{i=1}^D\overrightarrow{\fp\fq}_i\\[10px]
\overrightarrow{\fp\fq}_1,\ldots,\overrightarrow{\fp\fq}_D
   \end{pmatrix}=\frac{\left(\sum_{i=1}^D\overrightarrow{\fp\fq}_i\right)!}{\prod_{i=1}^D\overrightarrow{\fp\fq}_i!}
\end{align*}
\(\fp\)-\(\fq\)-paths (compare with the multinomial coefficient). We write \(\fp<\fq\) when \(\#\{\overrightarrow{\fp\fq}\}\geq1\) and \(\fp\leq\fq\) when \(\fp=\fq \lor \fp<\fq\). Now we can extend the definitions of the \oneD case to the \multiD case, whereas we concentrate on the \multiD cells of order 1.
\begin{definition}[\multiD cell]
\label{Def:md_cell}
An \multiD cell, \(c\), of order \(1\) and dimension \(D\) consists of the same parts as a \oneD cell of order \(1\). The cell update in date $\fp\in\IN^D$ is performed in three subsequent phases:
\begin{enumerate}
\item Following the classical update scheme of neurons (see Section \ref{S:previous_work}), all units in $\Gamma\cup\{c_{in}\}$ synchronously calculate their activations, which will be denoted by $\fy^\fp=\big(y_\gamma^\fp\big)_{\gamma\in\Gamma}$. Furthermore, we call \(\yin^\fp\) the \emph{input activation} of the cell.
\item Then, the cell computes it's so-called \emph{internal state} 
\[ s_c^\fp:=\frein{\fy^\fp,\yin^\fp,s_c^{\fp_1^-},\ldots,s_c^{\fp_D^-}} \,.\]
\item Finally, the cell computes it's so-called \emph{output activation} 
\[ y_c^\fp:=\fraus{\fy^\fp,\yin^\fp,s_c^\fp,s_c^{\fp_1^-},\ldots,s_c^{\fp_D^-}} \,.\]
\end{enumerate}
\end{definition}
Using this, we can reintroduce the LSTM cell as well as Definition \ref{defi:LTD}, \ref{defi:COG} and \ref{defi:not_growing_1d} for the \multiD case:
\begin{definition}[\multiD LSTM cell]
\label{defi:md_LSTM}
   An \multiD LSTM cell is a cell of dimension \(D\) and order \(1\) where \(\foutshort=\tanh\) and
\begin{itemize}
\item \(\Gamma=\left\{\iota,\left(\phi,1\right),\ldots,\left(\phi,D\right),\omega\right\}\)
\item \(s_c^\fp=\frein{\fy^\fp,\yin^\fp,s_c^{\fp_1^-},\ldots,s_c^{\fp_D^-}}=y_\iota^\fp \yin^\fp +\sum\limits_{d=1}^D s_c^{\fp_d^-}y_{\phi,d}^\fp\)
\item \(y_c^\fp=\fraus{\fy^\fp,s_c^\fp}=\fout{s_c^\fp}y_\omega^\fp\)
\end{itemize}
\end{definition}
\begin{definition}[\multiD Not vanishing gradient (\LTD)]
\label{defi:LTD_MD}
An \multiD cell \(c\) allows an \LTD \(:\Leftrightarrow\)\\
For arbitrary \(\fp_{in},\fp_{out}\in\IN^D,\fp_{in}\leq \fp_{out},\forall \delta>0\) there exist \(\forall\fp\in\IN^D\) gate activations \(\fy^\fp\) such that for any \(\fp_1,\fp_2\in\IN^D\)\begin{align}
\label{E:LTD_MD}
\frac{\partial s_c^{\fp_2}}{\partial \yin^{\fp_1}}&\intr\left\{ \begin{array}{ccl}
 \left[1-\delta,1\right]&\text{for}&\fp_1=\fp_{in}\text{ and }\fp_{in}\leq \fp_2\leq \fp_{out}\\
\left[0,\delta\right]&\text{otherwise}&\quad
\end{array}\right.
\end{align}
holds.
\end{definition}
\begin{definition}[\multiD Controllable output dependency (\COD)]
\label{defi:COG_MD}
  An \multiD cell \(c\) allows an \COD \(:\Leftrightarrow\)\\
There exist \(\delta_1,\delta_2\in(0,1),\delta_2<\delta_1\) so that for any time \(t\in\IN\) there exists a gate vector \(\fy^\fp\) leading to open output dependency
\begin{align}
\label{E:07_COG_1}
\frac{\partial y_c^\fp}{\partial s_c^\fp}\in\left[\delta_1,1\right]
\end{align}
and there exists another gate vector \(\fy^\fp\) leading to a closed output dependency
\begin{align}
\label{E:07_COG_2}
\frac{\partial y_c^\fp}{\partial s_c^\fp}\in\left[0,\delta_2\right].
\end{align}
\end{definition}

\begin{definition}[\multiD Not exploding gradient (\NGEC)]
\label{defi:not_growing_MD}
  An \multiD cell \(c\) has an \NGEC \(:\Leftrightarrow\)\\
For any time steps \(\fp_{in},\fp\in\IN^D, \fp_{in}< \fp\) and any gate activations \(\fy^\fp\) the truncated gradient in bounded by
\begin{align*}
\frac{\partial s_c^\fp}{\partial s_c^{\fp_{in}}}\intr\left[0,1\right]
\end{align*}
\end{definition}
We can now consider these definitions for the \multiD LSTM cell.
\begin{theorem}[\LTD of \multiD LSTM cells]
\label{theo:07_LTD_MD_LSTM}
   An \multiD LSTM cell allows an \LTD.
\end{theorem}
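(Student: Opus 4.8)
The plan is to reduce the claim to a single combinatorial estimate for the \multiD error carousel and then to construct the required gate activations by routing the whole gradient along one spanning path inside the box \([\fp_{in},\fp_{out}]\).

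First I would unroll the truncated state recursion \(s_c^\fp=y_\iota^\fp\yin^\fp+\sum_{d=1}^D y_{\phi,d}^\fp\,s_c^{\fp_d^-}\). Exactly as in the \oneD computation leading to \eqref{E:derivate_state_lstmd1}, the truncated gradient discards the recurrent dependence of the gates, so the elementary factors are \(\partial s_c^\fp/\partial s_c^{\fp_d^-}\eqtr y_{\phi,d}^\fp\) and \(\partial s_c^\fp/\partial\yin^\fp\eqtr y_\iota^\fp\). Summing the chain rule over all \(\fp_1\)-\(\fp_2\)-paths of Definition \ref{defi:md_path} (all of common length \(k=\sum_d(\fp_2-\fp_1)_d\)) yields the closed form
\begin{align*}
\frac{\partial s_c^{\fp_2}}{\partial \yin^{\fp_1}}\eqtr y_\iota^{\fp_1}\sum_{\pi:\fp_1\to\fp_2}\;\prod_{i=1}^{k}y_{\phi,d_i(\pi)}^{\pi_i},
\end{align*}
where \(d_i(\pi)\) is the dimension in which \(\pi\) advances at its \(i\)-th step; for \(\fp_1\not\le\fp_2\) the path set is empty and the gradient is \(0\). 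The inner sum is precisely the \multiD error carousel \(\partial s_c^{\fp_2}/\partial s_c^{\fp_1}\).

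Next, given \(\fp_{in}\le\fp_{out}\) and \(\delta\in(0,1)\), I would force the gradient onto one distinguished path. Set \(L:=\sum_d\big((\fp_{out})_d-(\fp_{in})_d\big)\) and let \(N\) be the (finite, since the box is bounded) maximum of \(\#\{\overrightarrow{\fp_{in}\fp_2}\}\) over all \(\fp_2\) in the box. Open the input gate only at the source, \(y_\iota^{\fp_{in}}=\beta\) and \(y_\iota^{\fp}=\eps\) for \(\fp\neq\fp_{in}\). For each box node \(\fp\neq\fp_{in}\) pick the largest coordinate \(d^*(\fp)\) with \(\fp_{d^*}>(\fp_{in})_{d^*}\), open the forget gate there, \(y_{\phi,d^*(\fp)}^{\fp}=\beta\), and set all other forget gates (and every forget gate outside the box) to \(\eps\); the output gate is irrelevant here. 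The pointers \(d^*\) define a spanning tree of the box rooted at \(\fp_{in}\): following them strictly decreases \(\sum_d(\fp_d-(\fp_{in})_d)\) to \(0\) without leaving the box, so from every box node \(\fp_2\) there is a unique all-\(\beta\) path back to \(\fp_{in}\), of length \(\ell:=\ell(\fp_2)\le L\), while every competing path uses at least one \(\eps\)-gate. Hence in the box (\(\fp_1=\fp_{in}\), \(\fp_{in}\le\fp_2\le\fp_{out}\)),
\begin{align*}
\beta^{\,\ell+1}\le\frac{\partial s_c^{\fp_2}}{\partial \yin^{\fp_{in}}}\le\beta\bigl(\beta^{\ell}+N\eps\bigr).
\end{align*}
Choosing \(\beta:=(1-\delta)^{1/(L+1)}\) makes the left side \(\ge1-\delta\), and then taking \(\eps\) small (so that \(\beta^2+\beta N\eps<1\), the case \(\ell=0\) giving the value \(\beta<1\) outright) keeps the right side below \(1\); this is the \([1-\delta,1]\) alternative. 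For every \emph{otherwise} case an \(\eps\)-factor is unavoidable: if \(\fp_1\neq\fp_{in}\) the prefactor \(y_\iota^{\fp_1}=\eps\) already bounds the whole expression, and if \(\fp_2\not\le\fp_{out}\) each path must cross the box boundary into the all-\(\eps\) region and pick up a factor \(\eps\). Bounding the carousel uniformly — inside the box by the path count \(N\), outside by the geometric decay coming from \(D\eps<1\) — shows the gradient is at most a fixed multiple of \(\eps\), hence \(\le\delta\) after shrinking \(\eps\) once more. Since all prescribed gate values lie in \((0,1)\), they are realizable by the logistic activation, which finishes the construction.

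The genuinely \multiD difficulty, and the step I expect to be hardest to make fully rigorous, is controlling the sum over the combinatorially many paths. In the \oneD setting there is a single path, so the carousel is just a product of forget gates; here the multinomial number \(\#\{\overrightarrow{\fp_{in}\fp_2}\}\) of paths grows, and keeping their total contribution inside \([1-\delta,1]\) instead of letting it blow up is exactly what the tree-routing buys. The two quantitative facts that must be nailed down are that the box is finite (so \(N\) and \(L\) exist and the in-box path sum is bounded independently of \(\eps\)), and that any step leaving the box can only multiply by \(\eps\), so that the off-target gradients decay geometrically and remain summably small uniformly in \(\fp_2\).
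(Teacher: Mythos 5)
Your proposal is correct in outline but takes a genuinely different route from the paper's own proof. Both arguments open the input gate only at \(\fp_{in}\), close all forget gates outside the box \([\fp_{in},\fp_{out}]\), and finish by shrinking \(\eps\); the difference is how the forget gates are set \emph{inside} the box. The paper normalizes the gate mass across all back-pointing dimensions, choosing \(y_{\phi,d}^\fp\in\left[\frac{1-\eps}{|P_\fp|},\frac{1}{|P_\fp|}\right)\) for \(d\in P_\fp\), so that the error carousel becomes (up to a factor \(1-\eps\)) a convex combination of the predecessor derivatives; a short induction over \(k=\|\fp-\fp_{in}\|_1\) then keeps \(\partial s_c^\fp/\partial s_c^{\fp_{in}}\) inside \([(1-\eps)^k,1)\) with no reference to the number of paths at all. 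You instead concentrate the mass on a single spanning-tree edge per node, so that one distinguished path carries \(\beta^{\ell}\) while every competing path is damped by at least one \(\eps\)-factor; this forces you to control the in-box path count \(N\) (so your \(\eps\) must shrink like \(\delta/N\), i.e.\ inverse-multinomially in the box size, whereas the paper's \(\eps\) only shrinks polynomially, as \(\min\{\delta/D,\;1-(1-\delta)^{1/(L+1)}\}\)) and to verify separately that the total stays below \(1\). The step you flag as delicate---the decay outside the box---does close: once a path leaves the box it cannot re-enter, so it accumulates one \(\eps\)-factor per out-of-box step, and summing \(D^{k}\eps^{k-L}\) over path lengths \(k\) gives a bound of order \(D^{L+1}\eps\) uniformly in \(\fp_2\), which is absorbed by a final shrinking of \(\eps\). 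So both proofs are valid; the paper's convex-combination trick buys a cleaner induction and a much milder constraint on \(\eps\), while your routing argument makes the ``store one piece of information along one path'' intuition explicit and shows that the \LTD property does not actually require spreading the gradient over all paths.
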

\begin{proof}
see \ref{proof:theo:07_LTD_MD_LSTM} in appendix   
\end{proof}
For arbitrary activations of FGs the partial \der \(\frac{\partial s_c^\fp}{\partial s_c^{\fp_{in}}}\) can grow over time:
\begin{theorem}[\NGEC of \multiD LSTM cells]
\label{theo:07_NGEC_MD_LSTM}
   An \multiD LSTM cell can have an exploding gradient, when \(D\geq 2\).
\end{theorem}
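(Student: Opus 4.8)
The plan is to compute the truncated gradient $\frac{\partial s_c^\fp}{\partial s_c^{\fp_{in}}}$ in closed form by unrolling the state recurrence along all \multiD paths, and then to exhibit a concrete choice of gate activations for which this quantity exceeds $1$ — violating Definition \ref{defi:not_growing_MD} — and in fact grows without bound as the distance $\overrightarrow{\fp_{in}\fp}$ increases. First I would record the one-step truncated derivative. From the state update
\[ s_c^\fp = y_\iota^\fp \yin^\fp + \sum_{d=1}^D s_c^{\fp_d^-} y_{\phi,d}^\fp \]
and Definition \ref{defi:truncated_gradient} (which drops the recurrent contributions of the gate units), one gets $\frac{\partial s_c^\fp}{\partial s_c^{\fp_d^-}} \eqtr y_{\phi,d}^\fp$ for each dimension $d$. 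Writing $g(\fp) := \frac{\partial s_c^\fp}{\partial s_c^{\fp_{in}}}$, the chain rule then yields the recurrence $g(\fp_{in}) = 1$ and $g(\fp) \eqtr \sum_{d=1}^D y_{\phi,d}^\fp\, g(\fp_d^-)$, with the convention $g(\fq) = 0$ whenever $\fq \not\geq \fp_{in}$ (no path exists).

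Next I would solve this recurrence by unrolling it over every $\fp_{in}$-$\fp$-path $\pi$ of Definition \ref{defi:md_path}, which gives the key identity
\[ \frac{\partial s_c^\fp}{\partial s_c^{\fp_{in}}} \eqtr \sum_{\pi:\,\fp_{in}\to\fp}\; \prod_{i=1}^{k} y_{\phi,d_i(\pi)}^{\pi_i}, \]
where $d_i(\pi)$ is the dimension incremented at the $i$-th step of $\pi$ and $k = \sum_{i=1}^D \overrightarrow{\fp_{in}\fp}_i$ is the common length of all such paths. This is exactly the point where the \multiD case departs from the \oneD case: for $D=1$ there is a single path and the gradient reduces to the product of forget gates, which is automatically bounded by $1$; for $D\geq 2$ the gradient is instead a \emph{sum} over the $\#\{\overrightarrow{\fp_{in}\fp}\}$ distinct paths.

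Then I would open all forget gates. Since $f_\gamma = \fl$, no gate can equal $1$ exactly, but by the large-input-activation argument used throughout Section \ref{S:cell_properties} any gate can be forced into $[1-\eps,1)$ with $\eps < \frac{1}{1000}$. With every forget gate in this interval, each path product satisfies $\prod_i y_{\phi,d_i}^{\pi_i} \geq (1-\eps)^k$, hence
\[ \frac{\partial s_c^\fp}{\partial s_c^{\fp_{in}}} \geqtr \#\{\overrightarrow{\fp_{in}\fp}\}\,(1-\eps)^k. \]
For $D\geq 2$ this already breaks the bound of Definition \ref{defi:not_growing_MD}: taking $\overrightarrow{\fp_{in}\fp} = (1,1,0,\dots,0)$ gives $\#\{\overrightarrow{\fp_{in}\fp}\} = 2$ and $k=2$, so $\frac{\partial s_c^\fp}{\partial s_c^{\fp_{in}}} \geqtr 2(1-\eps)^2 > 1$. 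Keeping all forget gates near $1$ and letting $\fp = \fp_{in} + (n,\dots,n)$ makes $\#\{\overrightarrow{\fp_{in}\fp}\} = \frac{(nD)!}{(n!)^D}$ grow exponentially in $n$, so the gradient is unbounded — an exploding gradient.

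The main obstacle is the bookkeeping in the truncated chain rule rather than any hard estimate: one must verify that the only surviving factors are the forget-gate activations (the input-gate and output-gate terms, together with every recurrent gate dependency, vanish by Definition \ref{defi:truncated_gradient}), and that states whose date is not $\geq \fp_{in}$ drop out so that the recurrence enumerates precisely the paths of Definition \ref{defi:md_path}. Once the path-sum identity is secured, the remaining counting and the $(1-\eps)^k$ estimate are elementary.
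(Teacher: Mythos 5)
Your proposal is correct and follows essentially the same route as the paper: both derive the truncated gradient as a sum over all \(\fp_{in}\)-\(\fp\)-paths of products of forget-gate activations, bound each product below by \((1-\eps)^k\) with near-open gates, and observe that the multinomial path count \(\#\{\overrightarrow{\fp_{in}\fp}\}\) overwhelms this factor once \(D\geq2\). The only difference is cosmetic: you exhibit the minimal finite violation \(2(1-\eps)^2>1\) at displacement \((1,1,0,\dots,0)\) before passing to the asymptotic count, whereas the paper goes directly to the Stirling estimate \(\frac{(Dk)!}{(k!)^D}\) to quantify the growth rate.
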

\begin{proof}
   see \ref{proof:theo:07_NGEC_MD_LSTM} in appendix.
\end{proof}
The \multiD LSTM cell does not allow the \COD, because the \oneD case is a special case of the \multiD case.\\
Our idea for the next section is to change the \multiD LSTM layout, so that it has an \NGEC.
\end{section}

\begin{section}{Reducing the \multiD LSTM Cell to One Dimension}
\label{S:Stable}
In the last section, we showed that the \multiD LSTM cell can have an exploding gradient. We tried different ways to solve this problem. For example we divided the activation of the FG by the number of dimensions. Then the gradient cannot explode over time, but the gradient vanishes along some paths rapidly. Another approach was to give the cells the opportunity to learn to stabilize itself, when the internal state starts diverging. Therefore we add an additional peephole connection between the square value of the previous internal states \(\left(s_c^{\fp_d^-}\right)^2\) and the FGs so that the cell is able to learn that it has to close the FG for large internal states. This also does not make a significant difference. Also forcing the cell to learn to stabilize itself by adding an error 
\begin{align*}
   Loss_{state}=\eps \left\|s_c^\fp\right\|_p
\end{align*}
with \(p=\{1,2,3,4\}\) and different learning rates \(\eps\) does not work. So we tried to change the layout of the \multiD LSTM cell.
\subsection{\multiD LSTM Stable Cell}
In Section \ref{S:cell_properties} we realized that \oneD LSTM cells work good and the gradient does not explode, but in the \multiD case it does. Our idea is to combine the previous states \(s_c^{\fp_d^-}\) at date \(\fp\) to one previous state \(s_c^{\fp^-}\) and take the \oneD form of the LSTM cell. For this reason we call this cell \emph{LSTM Stable cell}.\\
Therefore, a function 
\begin{align*}
s_c^{\fp^-}=f\left(s_c^{\fp_1^-},\ldots,s_c^{\fp_D^-}\right)
\end{align*}
is needed, so that the following two benefits of the \oneD LSTM cell remain:
\begin{enumerate}
   \item The \multiD LSTM Stable cell has an \NGEC
   \item The \multiD LSTM Stable cell allows \LTD.
\end{enumerate}
The convex combination
\begin{align}
\label{E:convexcombination_}
s_c^{\fp^-}=f\left(s_c^{\fp_1^-},\ldots,s_c^{\fp_D^-}\right)=\sum_{d=1}^D \lambda_d^\fp  s_c^{\fp_d^-}, \forall d=1,\ldots,D: \lambda_d^\fp \geq0 ,\sum_{d=1}^D\lambda_d^\fp =1
\end{align}
of all states satisfies these both points (see Theorems \ref{theo:MD_LTD_Stable} and \ref{theo:MD_NGEC_Stable}).
To calculate these \(D\) coefficients we want to use the activation of \(D\) gates and we call them lambda gates (LG or \(\lambda\)).
\begin{definition}[\multiD LSTM Stable cell]
   An \multiD LSTM Stable cell is a cell of dimension \(D\) and order \(1\) where \(\foutshort=\tanh\) and
\begin{itemize}
\item \(\Gamma=\left\{\iota,\left(\lambda,1\right),\ldots,\left(\lambda,D\right),\phi,\omega\right\}\)
\item \(s_c^{\fp^-}=\fconv{\fy^\fp,s_c^{\fp_1^-},\ldots,s_c^{\fp_D^-}}=\sum\limits_{d=1}^D s_c^{\fp_d^-}\frac{y_{\la,d}^\fp}{\sum\limits_{d'=1}^D y_{\la,d'}^\fp}\)
\item \(s_c^\fp=\frein{\fy^\fp,\yin^\fp,s_c^{\fp^-}}=y_\iota^\fp \yin^\fp +s_c^{\fp^-}y_\phi^\fp\)
\item \(y_c^\fp=\fraus{\fy^\fp,s_c^\fp}=y_\omega^\fp \fout{s_c^\fp}\)
\end{itemize}
\end{definition}
Using these equations we can test the cell for its properties. The \multiD LSTM Stable cell does not have the \COD, because the \oneD LSTM cell also does not have this property. For the other propertiese we get:
\begin{theorem}[LTD of \multiD LSTM Stable cells]
\label{theo:MD_LTD_Stable}
   An \multiD LSTM Stable cell allows \LTD.
\end{theorem}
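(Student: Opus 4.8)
The plan is to imitate the proof of the \oneD result (Theorem \ref{theo:ltd_1D}) while exploiting the extra structure coming from the convex combination in $\fconv{}$: the \multiD state transition becomes \emph{substochastic}, so that the many \multiD paths from $\fp_1$ to $\fp_2$ can never amplify the gradient. First I would compute the elementary truncated transition derivative. Writing $\mu_d^\fp := \frac{y_{\la,d}^\fp}{\sum_{d'=1}^D y_{\la,d'}^\fp}$ for the normalised lambda weights (so $\sum_{d=1}^D \mu_d^\fp = 1$), the recursion $s_c^\fp = y_\iota^\fp\yin^\fp + y_\phi^\fp\sum_{d=1}^D \mu_d^\fp\, s_c^{\fp_d^-}$ gives, after discarding the recurrent paths through the gates as in Definition \ref{defi:truncated_gradient},
\begin{align*}
\frac{\partial s_c^\fp}{\partial s_c^{\fp_d^-}} \eqtr y_\phi^\fp\,\mu_d^\fp \geq 0, \qquad \sum_{d=1}^D \frac{\partial s_c^\fp}{\partial s_c^{\fp_d^-}} \eqtr y_\phi^\fp \leq 1 .
\end{align*}

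Summing the chain rule over all $\fp_1$-$\fp_2$-paths (Definition \ref{defi:md_path}) then yields $\frac{\partial s_c^{\fp_2}}{\partial \yin^{\fp_1}} \eqtr y_\iota^{\fp_1}\sum_{\pi}\prod_{i=1}^{k} y_\phi^{\pi_i}\mu_{d_i}^{\pi_i}$, where the $i$-th of the $k=|\pi|$ steps of $\pi$ changes dimension $d_i$. Substochasticity makes this path-sum lie in $[0,1]$ for every pair $\fp_1\leq\fp_2$ (this is precisely the estimate that Theorem \ref{theo:MD_NGEC_Stable} will make into the \NGEC). In particular $0 \leq \frac{\partial s_c^{\fp_2}}{\partial\yin^{\fp_1}} \leq y_\iota^{\fp_1}$, and peeling off only the last date of the chain gives the cheap bound $\frac{\partial s_c^{\fp_2}}{\partial s_c^{\fp_1}}\eqtr y_\phi^{\fp_2}\sum_{d=1}^D \mu_d^{\fp_2}\frac{\partial s_c^{(\fp_2)_d^-}}{\partial s_c^{\fp_1}} \leq y_\phi^{\fp_2}$.

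Next I would construct the gate activations for the fixed pair $\fp_{in}\leq\fp_{out}$, using that logistic gates can be forced into $[1-\eps,1)$ or $(0,\eps]$ by making $|net_\gamma|$ large, with $\eps$ as small as desired. Let $B := \{\fp : \fp_{in}\leq\fp\leq\fp_{out}\}$ be the index box. I set the input gate open ($y_\iota^\fp\geq1-\eps$) exactly at $\fp=\fp_{in}$ and closed ($\leq\eps$) elsewhere; the forget gate open on $B$ and closed off $B$; and for each $\fp\in B\setminus\{\fp_{in}\}$ I pick one dimension $d(\fp)$ with $(\fp)_{d(\fp)} > (\fp_{in})_{d(\fp)}$ (such a coordinate exists because $\fp\geq\fp_{in}$, $\fp\neq\fp_{in}$), open its lambda gate and close the others, so $\mu_{d(\fp)}^\fp\geq 1-D\eps$. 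Since decrementing coordinate $d(\fp)$ keeps the date inside $B$, iterating $d(\cdot)$ backwards defines, for every $\fp_2\in B$, a canonical path that stays in $B$ and reaches $\fp_{in}$, of length $\left\|\overrightarrow{\fp_{in}\fp_2}\right\|_1 \leq L := \sum_{i=1}^D \overrightarrow{\fp_{in}\fp_{out}}_i$.

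Finally I would verify the three regimes. For a wrong source $\fp_1\neq\fp_{in}$ the closed input gate gives $\frac{\partial s_c^{\fp_2}}{\partial\yin^{\fp_1}} \leq y_\iota^{\fp_1}\leq\eps$; for a target $\fp_2\notin B$ with $\fp_2\geq\fp_{in}$ the closed forget gate gives $\frac{\partial s_c^{\fp_2}}{\partial\yin^{\fp_{in}}}\leq y_\phi^{\fp_2}\leq\eps$; and when $\fp_2\not\geq\fp_{in}$ there is no path and the derivative is $0$ — so choosing $\eps\leq\delta$ settles the ``otherwise'' branch. For $\fp_1=\fp_{in}$ and $\fp_2\in B$ the canonical path alone contributes at least $y_\iota^{\fp_{in}}\big[(1-\eps)(1-D\eps)\big]^{L}\geq(1-\eps)\big[(1-\eps)(1-D\eps)\big]^{L}$, while every other path adds a nonnegative term and the total stays $\leq1$; hence $\frac{\partial s_c^{\fp_2}}{\partial\yin^{\fp_{in}}}\in[1-\delta,1]$ once $\eps$ is small enough relative to $\delta$ and $L$. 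The main obstacle is exactly this in-box lower bound: one must argue that concentrating the normalised lambda weight on a single admissible predecessor yields a fully in-box routing back to $\fp_{in}$, and that the bounded-length accumulation of $O(\eps)$ gate errors remains below $\delta$. The convex-combination normalisation is what does the double duty here — it simultaneously caps the gradient at $1$ and prevents the multiplicity of \multiD paths from inflating it.
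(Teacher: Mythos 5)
Your proof is correct, and its schedule for the input and forget gates coincides with the paper's: the IG is open only at $\fp_{in}$, the FG is open exactly on the box $\{\fp:\fp_{in}\leq\fp\leq\fp_{out}\}$ and closed elsewhere. Where you genuinely diverge is in the lambda gates and in how the in-box lower bound is obtained. The paper opens the lambda gate of \emph{every} dimension in $P_\fp=\{d\,|\,\fp_d^-\geq\fp_{in}\}$ and runs a two-sided induction over $k=\|\fp-\fp_{in}\|_1$, proving $\frac{\partial s_c^{\fp}}{\partial s_c^{\fp_{in}}}\intr\left[(1-(D-1)\eps)^{2k},1\right]$ by combining the inductive lower bound of all predecessors in $P_\fp$ with the estimate $\sum_{d\in P_\fp}y_{\la,d}^\fp\big/\sum_{d'}y_{\la,d'}^\fp\geq 1-(D-1)\eps$; the upper bound falls out of the same induction. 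You instead concentrate the normalised lambda weight on a \emph{single} admissible predecessor $d(\fp)$, lower-bound the nonnegative path sum by the one canonical in-box path this choice defines, and outsource the upper bound to the substochasticity estimate that is Theorem \ref{theo:MD_NGEC_Stable}. Both are valid existence arguments for the required gate activations. Yours is somewhat more economical and makes the mechanism transparent (convexity caps the total contribution of the many \multiD paths at $1$, while one surviving path carries the signal); the paper's version, by keeping all of $P_\fp$ open, aggregates every in-box path into the lower bound, which gives a slightly better constant and leads directly to the explicit choice $\eps=\min\{\delta,(1-(1-\delta)^{1/(2\|\fp_{in}-\fp_{out}\|_1+1)})/(D-1)\}$ stated at the end of the appendix proof, whereas you leave $\eps(\delta,D,L)$ implicit. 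That is a presentational rather than a substantive difference.
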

\begin{proof}
   See \ref{proof:theo:MD_LTD_Stable} in appendix.
\end{proof}
\begin{theorem}[\NGEC of \multiD LSTM Stable cells]
\label{theo:MD_NGEC_Stable}
   An \multiD LSTM Stable cell has an \NGEC.
\end{theorem}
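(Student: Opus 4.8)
The plan is to reduce the multivariate gradient $\partial s_c^\fp/\partial s_c^{\fp_{in}}$ to a sum over all $\fp_{in}$-$\fp$-paths and then to control it by a single induction that exploits the convexity built into the $\lambda$-gates. First I would compute the one-step truncated derivatives. In the truncated gradient the gate activations $y_\phi^\fp$ and the normalized weights $\lambda_d^\fp:=y_{\la,d}^\fp/\sum_{d'=1}^D y_{\la,d'}^\fp$ do not depend on the previous states, so differentiating $s_c^\fp=y_\iota^\fp\yin^\fp+y_\phi^\fp\sum_{d=1}^D\lambda_d^\fp s_c^{\fp_d^-}$ gives
\[ \frac{\partial s_c^\fp}{\partial s_c^{\fp_d^-}}\eqtr y_\phi^\fp\,\lambda_d^\fp\qquad(d=1,\dots,D). \]
These quantities are nonnegative, and since $\sum_{d=1}^D\lambda_d^\fp=1$ by \eqref{E:convexcombination_} and $y_\phi^\fp\in(0,1)$, they satisfy $\sum_{d=1}^D\partial s_c^\fp/\partial s_c^{\fp_d^-}\eqtr y_\phi^\fp\leq1$. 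This is exactly the difference to the plain \multiD LSTM cell of Definition \ref{defi:md_LSTM}, whose independent forget gates $y_{\phi,d}^\fp$ need not sum to at most $1$ and hence can produce the exploding gradient of Theorem \ref{theo:07_NGEC_MD_LSTM}.

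Next I would set $R(\fp):=\partial s_c^\fp/\partial s_c^{\fp_{in}}$ and argue by induction over the partial order, for instance along the level sets of $\sum_{i=1}^D\overrightarrow{\fp_{in}\fp}_i$. For $\fp=\fp_{in}$ one has $R(\fp_{in})\eqtr1$, and whenever no $\fp_{in}$-$\fp$-path exists (some component of $\overrightarrow{\fp_{in}\fp}$ is negative) the truncated derivative is $0$; both values lie in $[0,1]$. For every other $\fp$ the multivariate chain rule over the lattice of dates yields the recursion
\[ R(\fp)\eqtr\sum_{d=1}^D\frac{\partial s_c^\fp}{\partial s_c^{\fp_d^-}}\,R(\fp_d^-)\eqtr y_\phi^\fp\sum_{d=1}^D\lambda_d^\fp\,R(\fp_d^-), \]
which is precisely the path-sum over all paths of Definition \ref{defi:md_path}, organised one edge at a time.

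The inductive step is then immediate. Assuming $R(\fp_d^-)\in[0,1]$ for all $d$, nonnegativity of $y_\phi^\fp$ and of the $\lambda_d^\fp$ gives $R(\fp)\geq0$, while
\[ R(\fp)\eqtr y_\phi^\fp\sum_{d=1}^D\lambda_d^\fp\,R(\fp_d^-)\leq y_\phi^\fp\sum_{d=1}^D\lambda_d^\fp=y_\phi^\fp\leq1, \]
using $R(\fp_d^-)\leq1$ and $\sum_d\lambda_d^\fp=1$. Hence $R(\fp)\intr[0,1]$ for every $\fp>\fp_{in}$, which is the claim. The bound is obtained for \emph{arbitrary} gate activations, exactly as Definition \ref{defi:not_growing_MD} demands, because it never uses a particular choice of $y_\phi^\fp$ or $\lambda_d^\fp$ but only the convex-combination constraints $\lambda_d^\fp\geq0$ and $\sum_d\lambda_d^\fp=1$.

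The delicate points here are bookkeeping rather than substance. I must apply the truncated-gradient convention of Definition \ref{defi:truncated_gradient} consistently, so that $\partial y_\phi^\fp/\partial s_c^{\fp_d^-}$, $\partial\lambda_d^\fp/\partial s_c^{\fp_d^-}$ and $\partial\yin^\fp/\partial s_c^{\fp_d^-}$ are all set to zero; otherwise the clean one-step formula breaks down. I also have to treat the boundary dates carefully: predecessors $\fp_d^-$ that leave $\IN^D$ or that fail to dominate $\fp_{in}$ contribute $R$-value $0$, which is consistent with the induction hypothesis and does not spoil the bound. The step I would write out most carefully is stating the multivariate chain rule precisely over the date lattice, so that the single-step recursion genuinely equals the full path-sum of Definition \ref{defi:md_path}; but this presents no real obstacle once the key convexity bound $\sum_d\lambda_d^\fp=1$ is in hand, and it is this bound, not the specific LSTM form, that carries the whole argument.
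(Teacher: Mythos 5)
Your proof is correct and follows essentially the same route as the paper: induction over $\left\|\fp-\fp_{in}\right\|_1$, with the one-step truncated derivative $y_\phi^\fp\,y_{\la,d}^\fp/\sum_{d'}y_{\la,d'}^\fp$ and the convex-combination identity $\sum_d\lambda_d^\fp=1$ doing all the work in the inductive step. If anything, your upper bound $\sum_d\lambda_d^\fp R(\fp_d^-)\leq\sum_d\lambda_d^\fp=1$ is stated slightly more cleanly than the paper's max-based estimate, but the argument is the same.
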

\begin{proof}
   See \ref{proof:theo:MD_NGEC_Stable} in appendix.
\end{proof}
\paragraph{Reducing the number of gates by one.} When \(D\geq2\) an \multiD LSTM Stable cell has one more gate  than a classical \multiD LSTM (for \(D=1\) the both cells are equivalent). But it is possible to reduce the number of LGs by one. One solution is to choose one dimension \(d'\in\left\{1,\ldots,D\right\}\) which does not get an LG. Its activation is calculated by
\begin{align*}
   y_{\la,d'}^\fp=\prod_{d\in\left\{1,\ldots,D\right\}\setminus\left\{d'\right\}}\left(1-y_{\la,d}^\fp\right).
\end{align*}
In the special case of \(D=2\) we can choose \(d'=2\) and we get \(\sum_{d'=1}^2 y_{\la,d'}=y_{\la,1}+\left(1-y_{\la,1}\right)=1\) and the update equation of the internal state can be simplified to 
\begin{align*}
s_c^\fp=\frein{y_\iota^\fp,y_{\la,1}^\fp,y_\phi^\fp,s_c^{\fp_1^-},s_c^{\fp_2^-}} &=y_\iota^\fp \yin^\fp +y_{\la,1}^\fp s_c^{\fp_1^-}+\left(1-y_{\la,1}^\fp\right)s_c^{\fp_2^-}.
\end{align*}
\end{section}

\begin{section}{Bounding the Internal State}
\label{S:leaky}
In the last sections we discussed the growing of the EC over time and we found a solution to have a NGEC for higher dimensions. Nevertheless it is possible that the internal state grows linearly over time. When we take a look at Definition \ref{defi:md_LSTM}, we see that the partial \der for \(\fp=\fp_{out}\) depends on \(\foutder{s_c^\fp}\). So having the inequality
\begin{align*}
   \frac{\partial y_c^\fp}{\partial s_c^{\fp}}\leq \foutder{s_c^\fp} \quad\text{ with }\quad \foutder{s_c^\fp}\xrightarrow{\left|s_c^\fp\right|\to \infty}0
\end{align*}
the cell allows \LTD defined in Definition \ref{defi:LTD_MD}, but actually we have \(\frac{\partial y_c^{\fp_{out}}}{\partial \yin^{\fp_{in}}}\xrightarrow{\left|s_c^{\fp_{out}}\right|\to \infty}0\) for arbitrary gate activations. Again, ideas like state decay, additional peephole connections or additional loss functions like mentioned in Section \ref{S:discussion} either do not work or destroy the \LTD of the LSTM and LSTM Stable cell. So, our solution is to change the architecture of the \multiD LSTM Stable cell, so that it fulfills has an \NGEC and allows \LTD and \COD. The key idea is to bound the internal state, so that for all inputs \(\left|\yin^\fp\right|\leq1\), \(\fp\in\IN^D\) the internal state is bounded by \(\left|s_c^\fp\right|\leq 1\).\\
Note that this is comparable with the well-known Bounded-Input-Bounded-Output-Stability (BIBO-Stability). To create an \multiD cell that has an \NGEC, allows \LTD and has a bounded internal state, we take the \multiD LSTM Stable cell proposed in the last section and change its layout.
Therefore we calculate the activation of the IG as function of the FG, so that we achieve \(\left|s_c^\fp\right|\leq 1\) by choosing \(y_\iota^\fp:=1-y_\phi^\fp\). 
So the activation of the FG controls how much leaks from the previous states. The activation of the FG can also be interpreted as switch, if the internal activation, the new activation or a convex combination of these both activations should be stored in the cell. So the \(s_c\) can be seen as \emph{time-dependent exponential moving average} of \(\yin\).
\begin{definition}[\multiD Leaky cell]
   An \multiD Leaky cell is a cell of dimension \(D\) and order \(1\) where \(\foutshort=\tanh\) and
\begin{itemize}
\item \(\Gamma=\left\{\left(\lambda,1\right),\ldots,\left(\lambda,D\right),\phi,\omega\right\}\)
\item \(s_c^{\fp^-}=\fconv{\fy^\fp,s_c^{\fp_1^-},\ldots,s_c^{\fp_D^-}}=\sum\limits_{d=1}^D s_c^{\fp_d^-}\frac{y_{\la,d}^\fp}{\sum\limits_{d'=1}^D y_{\la,d'}^\fp}\)
\item \(s_c^\fp=\frein{\fy^\fp,\yin^\fp,s_c^{\fp^-}}=\left(1-y_\phi^\fp\right) \yin^\fp +s_c^{\fp^-}y_\phi^\fp\)
\item \(y_c^\fp=\fraus{\fy^\fp,s_c^\fp}=y_\omega^\fp \fout{s_c^\fp}\)
\end{itemize}
\end{definition}
Now we can prove that the resulting cell has all benefits.
\begin{theorem}
\label{theo:11_Leaky}
   The \multiD Leaky cell has an \NGEC and allows \LTD and \COD.
\end{theorem}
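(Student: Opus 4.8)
The plan is to exploit the one structural change that sets the \multiD Leaky cell apart from the \multiD LSTM Stable cell, the coupling $y_\iota^\fp=1-y_\phi^\fp$, and to show first that it keeps the internal state bounded. Concretely I would prove, by induction over the dates, that $\lvert\yin^\fp\rvert\le1$ for all $\fp$ forces $\lvert s_c^\fp\rvert\le1$: at a boundary date the previous states vanish by convention, and in the inductive step $s_c^{\fp^-}=\sum_{d=1}^D s_c^{\fp_d^-}\,\mu_d^\fp$ with $\mu_d^\fp:=y_{\la,d}^\fp/\sum_{d'}y_{\la,d'}^\fp$ is a convex combination of the $s_c^{\fp_d^-}\in[-1,1]$, after which $s_c^\fp=(1-y_\phi^\fp)\yin^\fp+y_\phi^\fp s_c^{\fp^-}$ is again a convex combination of two numbers in $[-1,1]$. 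This bounded-state lemma is the engine behind \COD and is what the whole section was built for.

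For the \NGEC I would differentiate the state update under the truncated gradient, where every gate activation and $\yin^\fp$ is constant in the previous states. This gives the one-step gradients $\frac{\partial s_c^\fp}{\partial s_c^{\fp_d^-}}\eqtr y_\phi^\fp\mu_d^\fp\ge0$, which sum over $d$ to $y_\phi^\fp\le1$. Setting $G(\fp):=\frac{\partial s_c^\fp}{\partial s_c^{\fp_{in}}}$, the chain rule over the DAG of dates yields $G(\fp)\eqtr\sum_{d=1}^D y_\phi^\fp\mu_d^\fp\,G(\fp_d^-)$ with $G(\fp_{in})=1$ and $G(\fp)=0$ for $\fp\not\ge\fp_{in}$; since the coefficients are nonnegative and sum to at most $1$, an induction on the distance $\sum_d\overrightarrow{\fp_{in}\fp}_d$ gives $G(\fp)\in[0,1]$, i.e. Definition \ref{defi:not_growing_MD}.

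The heart of the argument is \LTD. Fixing $\fp_{in}\le\fp_{out}$ and $\delta>0$, I would use $\frac{\partial s_c^{\fp_2}}{\partial\yin^{\fp_1}}\eqtr(1-y_\phi^{\fp_1})\,G(\fp_2)$ (with source $\fp_1$). Choosing $y_\phi^\fp\in[1-\eps,1)$ at every $\fp\ne\fp_{in}$ makes $1-y_\phi^{\fp_1}\le\eps$, so together with $G\le1$ this derivative is below $\delta$ for all $\fp_1\ne\fp_{in}$; at $\fp_{in}$ I put $y_\phi^{\fp_{in}}\in(0,\eps]$ to write the input. What remains is to steer $G(\fp_2)$ for source $\fp_{in}$, which with $y_\phi\approx1$ collapses to the convex recursion $G(\fp)\approx\sum_d\mu_d^\fp G(\fp_d^-)$ driven only by the lambda gates. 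Inside the box $B=\{\fp:\fp_{in}\le\fp\le\fp_{out}\}$ I route all lambda mass of $\fp$ onto one predecessor $\fp_{d^\ast}^-\in B$ (possible because $\fp>\fp_{in}$ always leaves an in-box predecessor), forcing $G\approx1$ on $B$ by induction; for $\fp\ge\fp_{in}$ with $\fp\not\le\fp_{out}$ I route the mass off the at most one surviving in-box predecessor (which appears only when a single coordinate has just stepped past $\fp_{out}$) onto a predecessor lying in the exit region or outside the reachable cone, where $G\approx0$ by induction, so $G\approx0$ on the whole exit region. As $B$ is finite and all paths have bounded length, a small enough $\eps$ keeps the compounded errors within $\delta$ and yields \eqref{E:LTD_MD}.

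Finally \COD is immediate from the bounded-state lemma: $\frac{\partial y_c^\fp}{\partial s_c^\fp}=y_\omega^\fp\,\foutder{s_c^\fp}$ and $\lvert s_c^\fp\rvert\le1$ give $\foutder{s_c^\fp}=1-\tanh^2(s_c^\fp)\in[1-\tanh^2(1),1]$, bounded away from $0$; an open gate $y_\omega^\fp\in[1-\eps,1)$ then forces the derivative into $[\delta_1,1]$ with $\delta_1:=(1-\eps)(1-\tanh^2(1))$, satisfying \eqref{E:07_COG_1}, and a closed gate $y_\omega^\fp\in(0,\eps]$ into $[0,\eps]$, satisfying \eqref{E:07_COG_2} with $\delta_2:=\eps<\delta_1$. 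I expect the forgetting step of \LTD to be the real obstacle: because $y_\iota^\fp=1-y_\phi^\fp$, closing the forget gate to erase the state would simultaneously open the input and manufacture a spurious large $\frac{\partial s_c^\fp}{\partial\yin^\fp}$ at that date, so---unlike the Stable cell, whose independent gates erase cleanly (Theorem \ref{theo:MD_LTD_Stable})---forgetting has to be performed purely through the lambda gates while holding $y_\phi\approx1$, and the care lies in checking that this routing annihilates $G$ on the entire exit region, especially at the corner dates, simultaneously for all $\fp_2$.
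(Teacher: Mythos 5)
Your \NGEC and \COD arguments coincide with the paper's: the paper proves \NGEC by observing that the gate bounds are those of the Stable cell (your convex recursion over predecessors is exactly the content of the proof of Theorem \ref{theo:MD_NGEC_Stable}), and it proves \COD by the same bounded-state induction $\left|s_c^\fp\right|\le\max_{\fq<\fp}\left|\yin^\fq\right|\le1$ followed by the same choice of $\delta_1=\foutder{1}(1-\eps)$ and $\delta_2=\eps$. Where you genuinely diverge is \LTD. The paper simply re-runs the proof of Theorem \ref{theo:MD_LTD_Stable}: it keeps the forget-gate schedule \eqref{E:09_values_fg_} (FG open on $(\fp_{in},\fp_{out}]$, closed elsewhere), sets $y_\iota^\fp=1-y_\phi^\fp$, and asserts that off $\fp_{in}$ ``the IG has no restriction.'' You instead hold $y_\phi^\fp\in[1-\eps,1)$ at every $\fp\neq\fp_{in}$ and perform all forgetting through the $\la$-gates by routing mass off the (at most one) in-box predecessor in the exit layer. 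Your closing worry is well placed and is in fact the weak point of the paper's own argument: under the coupling, closing the FG at some $\fp>\fp_{out}$ forces $\partial s_c^{\fp}/\partial \yin^{\fp}=1-y_\phi^{\fp}\approx1$, and the pair $\fp_1=\fp_2=\fp$ falls into the ``otherwise'' branch of \eqref{E:LTD_MD}, which demands a value in $[0,\delta]$. So the IG is \emph{not} unrestricted off $\fp_{in}$, and your schedule is the one that actually verifies that branch for $D\ge2$. (Your appeal to ``bounded path length'' should be sharpened to: each monotone path meets the boundary layer $\{\fp\ge\fp_{in}:\fp\not\le\fp_{out},\exists d:\fp_d^-\le\fp_{out}\}$ at most $\left\|\fp_{out}-\fp_{in}\right\|_1+1$ times, so the additive $O(\eps)$ leakages stay controllable.)

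The genuine gap in your proposal is the case $D=1$, which the theorem as stated covers. There the single $\la$-gate normalizes to $\mu_1^\fp\equiv1$, so there is nothing to route to, and your exit-region argument collapses. Worse, no gate schedule can work: the diagonal constraint forces $y_\phi(t)\ge1-\delta$ for all $t\neq t_{in}$, hence
\begin{align*}
\frac{\partial s_c(t_{out}+1)}{\partial \yin(t_{in})}\eqtr y_\iota(t_{in})\prod_{t=t_{in}+1}^{t_{out}+1}y_\phi(t)\ge(1-\delta)\cdot(1-\delta)\cdot y_\phi(t_{out}+1)\ge(1-\delta)^3,
\end{align*}
which cannot lie in $[0,\delta]$ for small $\delta$. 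So either Definition \ref{defi:LTD_MD} must be read as excluding $\fp_1\not<\fp_2$ off the diagonal write date, or the theorem's \LTD claim must be restricted to $D\ge2$; your proof should state explicitly which of these it relies on. Apart from this, your construction is sound and is more careful than the paper's on precisely the point where the Leaky cell differs from the Stable cell.
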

\begin{proof}
   See \ref{proof:theo:11_Leaky} in appendix.
\end{proof}
The \multiD Leaky cell can have one gate less than the \multiD LSTM cell and the \multiD LSTM Stable cell and because of this, the update path requires less computations.
\end{section}

\begin{section}{General Derivation of Leaky Cells}
\label{S:13_leakyLP}
So far we proposed cells for the \multiD case, which are able to provide long term memory.
But especially in MDRNNs with more than one \multiD layer it is hard to measure if and how much long term dependencies are used and even if it is useful.
Another way to interpret the cell is to consider them as kind of \multiD feature extractor like ``feature maps'' in Convolutional Neural Networks \citep{be95}.
Then the aim is to construct an \multiD cell which is able to generate useful features.
Having a hierarchical Neural Network like in \citet{be95} and \citet{ag08} over the hierarchies the number of features increases with a simultaneously decreasing feature resolution.
Features in a layer with low resolution can be seen as low frequency features in comparison to features in a layer with high resolution.
So it would be useful to construct a cell as feature extractor which produces a low frequency output in comparison to its input.
In appendix \ref{ss:theory_1st_order} we take a closer look at the theory of linear shift invariant (LSI)-systems and their frequency analysis and analyse a first order LSI-system regarding its free selectable parameters using the \(\FF\)- and \(\ZZ\)-transform. There, we derive the \emph{\multiD LeakyLP cell} (see Definition \ref{defi:md_leakylp_cell}) and 5 additional first order \multiD cells, which we do not test in Section \ref{S:expriments}.
\begin{definition}[\multiD LeakyLP cell]
\label{defi:md_leakylp_cell}
   An \multiD LeakyLP cell is a cell of dimension \(D\) and order \(1\) where \(\foutshort=\tanh\) and
\begin{itemize}
\item \(\Gamma=\left\{\left(\lambda,1\right),\ldots,\left(\lambda,D\right),\phi,\omega_0,\omega_1\right\}\)
\item \(s_c^{\fp^-}=\fconv{\fy^\fp,s_c^{\fp_1^-},\ldots,s_c^{\fp_D^-}}=\sum\limits_{d=1}^D s_c^{\fp_d^-}\frac{y_{\la,d}^\fp}{\sum\limits_{d'=1}^D y_{\la,d'}^\fp}\)
\item \(s_c^\fp=\frein{\fy^\fp,\yin^\fp,s_c^{\fp^-}}=\left(1-y_\phi^\fp\right) \yin^\fp +s_c^{\fp^-}y_\phi^\fp\)
\item \(y_c^\fp=\fraus{\fy^\fp,s_c^\fp,s_c^{\fp^-}}=\fout{s_c^\fp y_{\omega_0}^\fp+s_c^{\fp^-} y_{\omega_1}^\fp }\)
\end{itemize}
\end{definition}
Setting the second OG (\(y_{\omega_1}^\fp\)) to zero, the LeakyLP cell corresponds to the Leaky cell, hence it fulfills all three properties, but has one more gate, which is as much gates as the LSTM cell.
\end{section}

\begin{section}{Experiments}
\label{S:expriments}
RNNs with \oneD LSTM cells are well studied. In some experiments the activations of the gates and the internal state are observed and one can see that the cell can really learn, when to ``forget'' information and when the internal state should be accessible for the network \citep[see][]{fg02}. However, we did not find experiments like these for the MD case and we do not want to transfer these experiments into the MD case. Instead we compare the different cell types with each other in two scenarios where the MD RNNs with LSTM cells perform very well. In both benchmarks the task is to transcribe a handwritten text on an image, so we have a 2D RNN. In this case we compare the cells on the IFN/ENIT \citep{pe02} and the Rimes database \citep{au06}. Both tasks are solved with the MD RNN layout described in \citet{ag08} and shown in Figure \ref{Fig:MDRNN}. All networks are trained with \emph{Backpropagation through time (BPTT)} .
\newcommand{\featZ}[3]
{
    \begin{scope}   
        
        \pgftransformcm{1}{0}{0.35}{0.5}{\pgfpoint{#1 cm}{#2 cm}}
        \node at (0,0) [circle,fill=black,minimum size = 2mm,#3!100] {};
        \node at (0,0.75) [circle,fill=black,minimum size = 2mm,#3!80] {};
        \draw [loosely dotted,very thick,#3!60] (0,1.25)--(0,2.25);
        \node at (0,2.75) [circle,fill=black,minimum size = 2mm,#3!40] {};            
	\end{scope}
}
\newcommand{\drawRec}[7]
{
    \begin{scope}   
    
     \begin{pgfscope}
		\pgftransformcm{1}{0}{0.35}{0.5}{\pgfpoint{#1 cm}{#2 cm}}    
    
		
		\node (h0) at (0, #5)[minimum size = 0mm] {};  
		
		\node (h1) at ($0*(h0)$)[minimum size = 0mm] {};    
		\node (h2) at ($1*(h0)$)[minimum size = 0mm] {};
		\node (h3) at ($3.6*(h0)$)[minimum size = 0mm] {};		
		\node (h4) at ($1.6*(h0)$)[minimum size = 0mm] {};	
		\node (h5) at ($3*(h0)$)[minimum size = 0mm] {};	
	\end{pgfscope}
		
	\begin{pgfscope}
		\draw [black!40] (h3) -- ($(h3) + (#3, 0)$);    
		\draw [black!40] ($(h3) + (#3,0)$) -- ($(h3) + (#3, -#4)$); 
		\draw [black!40] ($(h3) + (0,-#4)$) -- ($(h3) + (#3, -#4)$);
		\draw [black!40] (h3) -- ($(h3) + (0, -#4)$);
		
		\draw [loosely dotted,very thick,black!60] ($(h4) + (#3, -#4)$)--($(h5) + (#3, -#4)$);
		\draw [loosely dotted,very thick,black!60] ($(h4) + (#3, 0)$)--($(h5) + (#3, 0)$);
		\ifthenelse{\NOT 0 = #6}{
			\draw [loosely dotted,very thick,black!60] ($(h4) + (0, -#4)$)--($(h5) + (0, -#4)$);
			\draw [loosely dotted,very thick,black!60] ($(h4) + (0,0)$)--($(h5) + (0,0)$);
		}{}

		\draw [black!80] (h2) -- ($(h2) + (#3, 0)$);    
		\draw [black!80] ($(h2) + (#3,0)$) -- ($(h2) + (#3, -#4)$); 
		\draw [black!80] ($(h2) + (0,-#4)$) -- ($(h2) + (#3, -#4)$);
		\draw [black!80] (h2) -- ($(h2) + (0, -#4)$);

		\draw [black!100] (h1) -- ($(h1) + (#3, 0)$);    
		\draw [black!100] ($(h1) + (#3,0)$) -- ($(h1) + (#3, -#4)$); 
		\draw [black!100] ($(h1) + (0,-#4)$) -- ($(h1) + (#3, -#4)$);
		\draw [black!100] (h1) -- ($(h1) + (0, -#4)$);
		
		\ifthenelse{ 1 = #7}{
			\draw [->,thick] (h1)--($(h1) + (-#5, #5)$);
		}{}		
		\ifthenelse{ 2 = #7}{
			\draw [->,thick] ($(h1) + (0,-#4)$)--($(h1) + (-#5, -#4-#5)$);
		}{}	
		
		\ifthenelse{ 3 = #7}{
			\draw [->,thick] ($(h1) + (#3, -#4)$)--($(h1) + (#3+#5, -#4-#5)$);
		}{}	
			
		\ifthenelse{ 4 = #7}{
			\draw [->,thick] ($(h1) + (#3,0)$)--($(h1) + (#3+#5, #5)$);
		}{}

    \end{pgfscope}
	\end{scope}
}
\newcounter{posX}
\newcommand{\drawTanh}[3]
{
	\setcounter{posX}{#1}
	\pgfmathparse{\theposX -1.25}
	\drawRec{\pgfmathresult}{#2}{2.5}{0.5}{0.25}{1}{0}

		\draw [black!100,dotted] ($(#1-6.25,#2-0.85)$) -- ($(#1+8,#2-0.85)$);   
		\draw [black!100,dotted] ($(#1-6.25,#2-0.85)$) -- ($(#1-6.25,#2+0.7)$);  				
		\draw [black!100,dotted] ($(#1+8,#2+0.7)$) -- ($(#1+8,#2-0.85)$);  				 
		\draw [black!100,dotted] ($(#1+8,#2+0.7)$) -- ($(#1-6.25,#2+0.7)$);   
		\node (ta) at ($(#1+5.85,#2-0.65)$) [right] {\(0\)D LAYER #3};
}
\newcommand{\drawMDRNN}[3]
{
	
	\setcounter{posX}{#1}
	\pgfmathparse{\theposX -4.5-1.25}
	\drawRec{\pgfmathresult}{#2}{2.5}{0.5}{0.25}{1}{1}
	\setcounter{posX}{#1}
	\pgfmathparse{\theposX -1.5-1.25}
	\drawRec{\pgfmathresult}{#2}{2.5}{0.5}{0.25}{1}{2}
	\setcounter{posX}{#1}
	\pgfmathparse{\theposX 1.5-1.25}
	\drawRec{\pgfmathresult}{#2}{2.5}{0.5}{0.25}{1}{3}
	\setcounter{posX}{#1}
	\pgfmathparse{\theposX 4.5-1.25}
	\drawRec{\pgfmathresult}{#2}{2.5}{0.5}{0.25}{1}{4}

		\draw [black!100,dotted] ($(#1-6.25,#2-0.85)$) -- ($(#1+8,#2-0.85)$);   
		\draw [black!100,dotted] ($(#1-6.25,#2-0.85)$) -- ($(#1-6.25,#2+0.7)$);  				
		\draw [black!100,dotted] ($(#1+8,#2+0.7)$) -- ($(#1+8,#2-0.85)$);  				 
		\draw [black!100,dotted] ($(#1+8,#2+0.7)$) -- ($(#1-6.25,#2+0.7)$);   
		\node (ta) at ($(#1+5.85,#2-0.65)$) [right] {\(2\)D LAYER #3};
}
\newcommand{\drawOutput}[2]
{
\draw [black!100,dotted] ($(#1-2,#2-0.6)$) -- ($(#1+3.5,#2-0.6)$);   
\draw [black!100,dotted] ($(#1-2,#2-0.6)$) -- ($(#1-2,#2+0.6)$);  				
\draw [black!100,dotted] ($(#1+3.5,#2+0.6)$) -- ($(#1+3.5,#2-0.6)$);  				 
\draw [black!100,dotted] ($(#1+3.5,#2+0.6)$) -- ($(#1-2,#2+0.6)$);   
\node (ta) at ($(#1+2.25,#2-0.4)$) [right] {Output};
\node (h1) at (-1.5,#2) [minimum size=5mm,circle,fill=white,draw] {};
\node (h2) at (-0.75,#2) [minimum size=5mm,circle,fill=white,draw] {};
\draw [dotted,very thick] (0.0,#2)--(0.75,#2);
\node (h5) at (1.5,#2) [minimum size=5mm,circle,fill=white,draw] {};
\begin{scope}[on background layer]
\node (hA) [fit=(h1)(h2)(h5), minimum width=2cm, inner sep=2pt, fill=white,draw,rounded corners=5pt] {};
\end{scope}
}
\newcommand{\drawOutB}[4]
{
		\draw [black!100,dotted] ($(#1-6.25,#2-0.85)$) -- ($(#1+8,#2-0.85)$);   
		\draw [black!100,dotted] ($(#1-6.25,#2-0.85)$) -- ($(#1-6.25,#2+0.7)$);  				
		\draw [black!100,dotted] ($(#1+8,#2+0.7)$) -- ($(#1+8,#2-0.85)$);  				 
		\draw [black!100,dotted] ($(#1+8,#2+0.7)$) -- ($(#1-6.25,#2+0.7)$);   
		\node (ta) at ($(#1+6.05,#2-0.65)$) [right] {OUTPUT};
		\node (h1B) at ($(#3-1.5,#2+#4)$) [minimum size=5mm,circle,draw=black!40] {};
		\node (h2B) at ($(#3-0.75,#2+#4)$) [minimum size=5mm,circle,draw=black!40] {};
		\draw [dotted,very thick,black!40] ($(#3,#2+#4)$)--($(#3+0.75,#2+#4)$);
		\node (h5B) at ($(#3+1.5,#2+#4)$) [minimum size=5mm,circle,draw=black!40] {};
		\begin{scope}[on background layer]
		\node (hAB) [fit=(h1B)(h2B)(h5B), minimum width=2cm, inner sep=2pt,rounded corners=5pt,draw=black!40] {};
		\end{scope}	
		\node (h1) at (-1.5,#2) [minimum size=5mm,circle,fill=white,draw] {};
		\node (h2) at (-0.75,#2) [minimum size=5mm,circle,fill=white,draw] {};
		\draw [dotted,very thick] (0.0,#2)--(0.75,#2);
		\node (h5) at (1.5,#2) [minimum size=5mm,circle,fill=white,draw] {};
		\begin{scope}[on background layer]
		\node (hA) [fit=(h1)(h2)(h5), minimum width=2cm, inner sep=2pt,draw,rounded corners=5pt] {};
		\end{scope}		
		\draw [loosely dotted,very thick,black!60] ($(hAB) + (-1.85, 0.35)$)--($(hA) + (-1.85, 0.35)$);
		\draw [loosely dotted,very thick,black!60] ($(hAB) + (1.75, 0.3)$)--($(hA) + (1.75, 0.3)$);
		\draw [loosely dotted,very thick,black!60] ($(hAB) + (-1.85, -0.35)$)--($(hA) + (-1.85, -0.35)$);
		\draw [loosely dotted,very thick,black!60] ($(hAB) + (1.75, -0.3)$)--($(hA) + (1.75, -0.3)$);
		\node (h11) at (-1.5,#2) [minimum size=5mm,circle,fill=white,draw] {};
		\node (h21) at (-0.75,#2) [minimum size=5mm,circle,fill=white,draw] {};
		\draw [dotted,very thick] (0.0,#2)--(0.75,#2);
		\node (h51) at (1.5,#2) [minimum size=5mm,circle,fill=white,draw] {};		
}
\newcommand{\drawOutC}[4]
{
		\draw [black!100,dotted] ($(#1-6.25,#2-0.65)$) -- ($(#1+8,#2-0.65)$);   
		\draw [black!100,dotted] ($(#1-6.25,#2-0.65)$) -- ($(#1-6.25,#2+1.1)$);  				
		\draw [black!100,dotted] ($(#1+8,#2+1.1)$) -- ($(#1+8,#2-0.65)$);  				 
		\draw [black!100,dotted] ($(#1+8,#2+1.1)$) -- ($(#1-6.25,#2+1.1)$);   
		\node (ta) at ($(#1+4.95,#2-0.45)$) [right] {OUTPUT LAYER};	
		\draw [very thick,black] ($(#1,#2) + (-1.5, -0.25)$)--($(#1,#2) + (1.5, -0.25)$);	
		\draw [very thick,black!80] ($(#1+#3,#2+#4) + (-1.5, -0.25)$)--($(#1+#3,#2+#4) + (1.5, -0.25)$);
		\draw [loosely dotted,very thick,black!60] ($(#1+1.5*#3,#2+1.5*#4) + (-1.5, -0.25)$)--($(#1+3.5*#3,#2+3.5*#4) + (-1.5, -0.25)$);
		\draw [loosely dotted,very thick,black!60] ($(#1+1.5*#3,#2+1.5*#4) + (1.5, -0.25)$)--($(#1+3.5*#3,#2+3.5*#4) + (1.5, -0.25)$);
		\draw [very thick,black!40] ($(#1+4*#3,#2+4*#4) + (-1.5, -0.25)$)--($(#1+4*#3,#2+4*#4) + (1.5, -0.25)$);		
}
\begin{figure}[ht]\label{fig:hybrid}
\centering
\resizebox{12cm}{12cm}{%
\begin{tikzpicture}[minimum size=7.5mm,inner sep=0pt]
\node (pic) at (0,-0.2) {\framebox{\includegraphics[width=6cm]{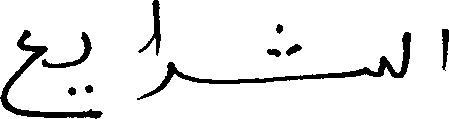}}};
		\draw [black!100,dotted] (-6.25,-1.5) -- (8,-1.5);   
		\draw [black!100,dotted] (-6.25, 1.1) -- (8,1.1);   
		\draw [black!100,dotted] (-6.25,-1.5) -- (-6.25,1.1);   
		\draw [black!100,dotted] (8,-1.5) -- (8,1.1);   
		\node (ta) at (5.35,-1.25) [right] {INPUT LAYER};	
\drawMDRNN{0}{3.6}{1}
\drawTanh{0}{6.85}{2}
\drawMDRNN{0}{9.15}{2}
\drawTanh{0}{12.45}{3}
\drawMDRNN{0}{14.75}{3}
\drawOutC{0}{16.85}{0.2}{0.2}
\draw [->,thick] (0,1.1) -- (0,2.75);
\node (s1) at (0.75,1.85) [rectangle,fill=white,draw,minimum width=0.5cm, minimum height=1.0cm] {};
\node (s1t) at (1.25,1.85) [right] {Subsample 1};	
\draw [->,thick] (0,4.3) -- (0,6.0);
\node (s2) at (0.75,5.15) [rectangle,fill=white,draw,minimum width=0.5cm, minimum height=1.0cm] {};
\node (s2t) at (1.25,5.15) [right] {Subsample 2};	
\draw [->,thick] (0,7.55) -- (0,8.3);
\draw [->,thick] (0,9.85) -- (0,11.6);
\node (s3) at (0.75,10.725) [rectangle,fill=white,draw,minimum width=0.5cm, minimum height=1.0cm] {};
\node (s3t) at (1.25,10.725) [right] {Subsample 3};	
\draw [->,thick] (0,13.15) -- (0,13.9);
\draw [->,thick] (0,15.45) -- (0,16.2);
\end{tikzpicture}
}\caption{Architecture of the hierarchical MDRNN used for the experiments: It is equivalent to \citet[Figure 2]{ag08}. A \(2\)D layer contains \(2^2\) distinct layers (for each combination of scanning direction left/right and up/down one layer). To reduce the number of weights between two \(2\)D layers, a \(0\)D layer is inserted, which contains units with \(\tanh\) as activation function. They have dimension \(0\) because they have no recurrent connections. These layers can be seen as feed-forward or convolutional layer. Each \(2\)D layer (or its allocated \(0\)D layer) reduces its size in in x and y dimension using a two-dimensional subsampling. Simultaneously the number of feature maps (z-dimension) increases to have no bottleneck between input and output layer.}
\label{Fig:MDRNN}
\end{figure}
To compare the different cell types in RNNs with each other we take \(10\) RNNs with different weight initializations of each cell type and calculate the minimum, the maximum and the median of the best label error rate (LER) on a validation set of these \(10\) RNNs. In all tables we present these three LERs to compare the cell types.\\
We think it is more important to have stable cells in the lower MD layers because of two reasons: First, when we have just a few cells in a layer, the saturation of one cell has a greater effect on the performance of the network. Second, in lower layers there are longer time series so having an unstable cell in such a layer, it has time to saturate. So our first experiment compares the recognition results when we substitute the LSTM cells in the lowest layer (which is ``2D layer 1'' in Figure \ref{Fig:MDRNN}) by the newly developed cells.\\
In the second experiment we compare the LSTM cell and the LeakyLP cell also in the higher \multiD layers (``2D layer 2 and 3 in Figure \ref{Fig:MDRNN}), to evaluate if the LeakyLP cell work better also in long time series.\\
In \citet[3.1.1]{jb12} it is mentioned, that an important hyper parameter for a training is the learning rate, so another experiment is to train all networks with stochastic gradient decent with different learning rates \(\delta\in\left\{1\cdot10^{-3},5\cdot10^{-4},2\cdot10^{-4},1\cdot10^{-4}\right\}\) and compare the best LER according a fixed learning rate.
\subsection{The IFN/ENIT Database}
\label{S:IFN_ENIT_database}
This database contains handwritten names of towns and villages of Tunisia. The set is divided into 7 (a-f,s) sets, where 5 (a-e) are available for training and validation \citep[for details see][]{pe02}. With all information we got from A. Graves, we were able to get comparable results to \citet{ag08}. Therefor we divide the sets a-e into \(30000\) training samples and \(2493\) validation samples. All network are trained 100 epochs with a fixed learning rate \(\delta=1\cdot10^{-4}\). The LER is calculated on the validation set.
\subsubsection{Different Cells in the Lowest MD Layer}
In our first experiment we substitute the LSTM cell in the lowest MD layer. We take some of the cells described in this paper. In Table \ref{Tabular:arabic_firstlayer} the results are shown. The first row is the same RNN layout used in \citet{ag08}.
\begin{table}[t]
\centering
\begin{tabular}{llll}\hline\hline
 & \mc{3}{>{\columncolor{colorb}}c}{Label-Error-Rate in Percent}\\
\mc{1}{>{\columncolor{colorb}}c}{Celltype}& \mc{1}{>{\columncolor{colorb}}c}{min   } & \mc{1}{>{\columncolor{colorb}}c}{max    } & \mc{1}{>{\columncolor{colorb}}c}{median }\\\hline
\mc{1}{>{\columncolor{colorb}}r}{LSTM }   & \mc{1}{>{\columncolor{colorb}}r}{8,58\%} & \mc{1}{>{\columncolor{colorb}}r}{14,73\%} & \mc{1}{>{\columncolor{colorb}}r}{10,58\%}\\\hline
\mc{1}{>{\columncolor{colorb}}r}{Stable}  & \mc{1}{>{\columncolor{colorb}}r}{8,78\%} & \mc{1}{>{\columncolor{colorb}}r}{11,75\%} & \mc{1}{>{\columncolor{colorb}}r}{ 9,55\%}\\
\mc{1}{>{\columncolor{colorb}}r}{Leaky }  & \mc{1}{>{\columncolor{colorb}}r}{8,87\%} & \mc{1}{>{\columncolor{colorb}}r}{10,47\%} & \mc{1}{>{\columncolor{colorb}}r}{ 9,10\%}\\
\mc{1}{>{\columncolor{colorb}}r}{LeakyLP} & \mc{1}{>{\columncolor{color1}}r}{8,24\%} & \mc{1}{>{\columncolor{color1}}r}{ 9,40\%} & \mc{1}{>{\columncolor{color1}}r}{ 8,93\%}\\\hline\hline
\end{tabular}
\caption{Different cell types in the lowest MD layer}
\label{Tabular:arabic_firstlayer}
\end{table}
We can see, that the LeakyLP cell performs the best. Nevertheless the worst RNN with LeakyLP cells in the lowest MD layer performs worth than the best RNN with LSTM cells. So we cannot say, that LeakyLP is always better. But it can be observed that the variance of the RNN performance is very high with LSTM cells in the lowest MD layer. Our interpretation is that LSTM cells have a comparable performance like the LeakyLP cells in the lowest layer, when they do not saturate. Note, that the Leaky cell has one gate less, so they are faster and have less trainable weights. 
\subsubsection{Different Cells in Other MD Layers} Now we want to compare the best new developed cell---the LeakyLP cell---with the LSTM cell in the other MD layers. So we also substitute the LSTM cell in the upper MD layers. We enumerate the \(2\)D layers like shown in Figure \ref{Fig:MDRNN}.
\begin{table}
\centering
\begin{tabular}{cccccc}\hline\hline
\mc{3}{>{\columncolor{colorb}}c}{Celltype in 2D layer}& \mc{3}{>{\columncolor{colorb}}c}{Label-Error-Rate in Percent}\\
1 & 2 & 3   & \mc{1}{>{\columncolor{colorb}}c}{min   } & \mc{1}{>{\columncolor{colorb}}c}{max    } & \mc{1}{>{\columncolor{colorb}}c}{median }\\\hline
LSTM & LSTM & LSTM   & \mc{1}{>{\columncolor{colorb}}r}{8,58\%} & \mc{1}{>{\columncolor{colorb}}r}{14,73\%} & \mc{1}{>{\columncolor{colorb}}r}{10,58\%}\\
LeakyLP & LSTM & LSTM   & \mc{1}{>{\columncolor{color1}}r}{8,24\%} & \mc{1}{>{\columncolor{color1}}r}{ 9,40\%} & \mc{1}{>{\columncolor{colorb}}r}{ 8,93\%}\\
LeakyLP & LeakyLP & LSTM & \mc{1}{>{\columncolor{colorb}}r}{8,35\%} & \mc{1}{>{\columncolor{colorb}}r}{11,27\%} & \mc{1}{>{\columncolor{color1}}r}{ 8,91\%}\\
LeakyLP & LeakyLP & LeakyLP &  \mc{1}{>{\columncolor{colorb}}r}{8,92\%} & \mc{1}{>{\columncolor{colorb}}r}{11,69\%} & \mc{1}{>{\columncolor{colorb}}r}{ 9,74\%}\\\hline\hline
\end{tabular}
\caption{Different cells in other layers}
\label{Tabular:arabic_otherlayers}
  \end{table}
In Table \ref{Tabular:arabic_otherlayers} we can see that substituting the LSTM cells only in the lowest or in the both lowest layer perform slightly better. The best results can be achieved when we use LeakyLP cells in \(2\)D layer 1 and LSTM cells in \(2\)D layer 3. Using LSTM in the middle layer seems to work slightly better than using the LeakyLP cells instead. This fits to our intuition mentioned before that the LSTM cells perform better when they do not have a too long time series and when there are enough cells in one layer which do not saturate.
\subsubsection{Performance of Cells Regarding Learning-Rate} When we take a look at the update equations and the proofs of the \NGEC it can be assumed, that the gradient going through the cells is lower for LeakyLP cells in contrast to LSTM cells. So we think the learning rate have to be larger for LeakyLP cells. In Table \ref{Tabular:arabic_learningrate} we compare the networks with either only LSTM or LeakyLP cells. There we can see that the learning rate have to be much higher for the LeakyLP cells. In addition, the RNNs with LeakyLP cells are more robust to the choice of the learning rate.  
\begin{table}[t]
\centering
\begin{tabular}{lllll}\hline\hline
 & & \mc{3}{>{\columncolor{colorb}}c}{Label-Error-Rate in Percent}\\
\mc{1}{>{\columncolor{colorb}}c}{Celltype}    & \mc{1}{>{\columncolor{colorb}}c}{BP-delta}    & \mc{1}{>{\columncolor{colorb}}c}{min   } & \mc{1}{>{\columncolor{colorb}}c}{max    } & \mc{1}{>{\columncolor{colorb}}c}{median }\\\hline
\mc{1}{>{\columncolor{colorb}}c}{LSTM   } &\(1\cdot10^{-4}\) & \mc{1}{>{\columncolor{color1}}r}{ 8,58\%} & \mc{1}{>{\columncolor{color1}}r}{14,73\%} & \mc{1}{>{\columncolor{colorb}}r}{10,58\%}\\
\mc{1}{>{\columncolor{colorb}}c}{LSTM   } &\(2\cdot10^{-4}\) & \mc{1}{>{\columncolor{colorb}}r}{ 9,15\%} & \mc{1}{>{\columncolor{colorb}}r}{16,86\%} & \mc{1}{>{\columncolor{color1}}r}{10,51\%}\\
\mc{1}{>{\columncolor{colorb}}c}{LSTM   } &\(5\cdot10^{-4}\) & \mc{1}{>{\columncolor{colorb}}r}{ 9,03\%} & \mc{1}{>{\columncolor{colorb}}r}{21,77\%} & \mc{1}{>{\columncolor{colorb}}r}{11,44\%}\\
\mc{1}{>{\columncolor{colorb}}c}{LSTM   } &\(1\cdot10^{-3}\) & \mc{1}{>{\columncolor{colorb}}r}{10,21\%} & \mc{1}{>{\columncolor{colorb}}r}{30,20\%} & \mc{1}{>{\columncolor{colorb}}r}{11,44\%}\\
\hline
\mc{1}{>{\columncolor{colorb}}c}{LeakyLP} &\(1\cdot10^{-4}\) & \mc{1}{>{\columncolor{colorb}}r}{ 8,92\%} & \mc{1}{>{\columncolor{colorb}}r}{11,69\%} & \mc{1}{>{\columncolor{colorb}}r}{ 9,74\%}\\
\mc{1}{>{\columncolor{colorb}}c}{LeakyLP} &\(2\cdot10^{-4}\) & \mc{1}{>{\columncolor{colorb}}r}{ 8,38\%} & \mc{1}{>{\columncolor{colorb}}r}{ 9,09\%} & \mc{1}{>{\columncolor{colorb}}r}{ 8,81\%}\\
\mc{1}{>{\columncolor{colorb}}c}{LeakyLP} &\(5\cdot10^{-4}\) & \mc{1}{>{\columncolor{color1}}r}{ 8,25\%} & \mc{1}{>{\columncolor{color1}}r}{ 8,95\%} & \mc{1}{>{\columncolor{color1}}r}{ 8,78\%}\\
\mc{1}{>{\columncolor{colorb}}c}{LeakyLP} &\(1\cdot10^{-3}\) & \mc{1}{>{\columncolor{colorb}}r}{ 8,29\%} & \mc{1}{>{\columncolor{colorb}}r}{ 9,20\%} & \mc{1}{>{\columncolor{colorb}}r}{ 8,88\%}\\
\mc{1}{>{\columncolor{colorb}}c}{LeakyLP} &\(2\cdot10^{-3}\) & \mc{1}{>{\columncolor{colorb}}r}{ 8,95\%} & \mc{1}{>{\columncolor{colorb}}r}{12,81\%} & \mc{1}{>{\columncolor{colorb}}r}{ 9,55\%}\\
\hline\hline
\end{tabular}
\caption{Performance of cells regarding learning-rate}
\label{Tabular:arabic_learningrate}
  \end{table}
\subsection{The Rimes Database}
One task of the Rimes database is the handwritten word recognition \citep[for more details see][]{grosicki08,grosicki11}. It contains \(59292\) images of french single words. It is divided into distinct subsets; a training set of \(44196\) samples, a validation set of \(7542\) samples and a test set of \(7464\) samples. We train the MD RNNs by using the training set for training and calculate the LER over the validation set, so the network is trained on \(44196\) training samples each epoch. The network used in this section differs only in the subsampling rate between two layers from the network used in \citet{ag08}. When there is a subsampling between layers, the factors are \(3\times2\) instead of \(4\times3\) or \(4\times2\). The rest of the experiment is the same like described in Section \ref{S:IFN_ENIT_database}.
\subsubsection{Different Cells in the Lowest MD Layer}
In Table \ref{Tabular:french_firstlayer} we can see that substituting the LSTM in the lowest layer by one of the three cells improves the performance of the network, even the Leaky cell with one gate less. 
\begin{table}[t]
\centering
\begin{tabular}{llll}\hline\hline
 & \mc{3}{>{\columncolor{colorb}}c}{Label-Error-Rate in Percent}\\
\mc{1}{>{\columncolor{colorb}}c}{Celltype}& \mc{1}{>{\columncolor{colorb}}c}{min   } & \mc{1}{>{\columncolor{colorb}}c}{max    } & \mc{1}{>{\columncolor{colorb}}c}{median }\\\hline
\mc{1}{>{\columncolor{colorb}}r}{LSTM }   & \mc{1}{>{\columncolor{colorb}}r}{14,96\%} & \mc{1}{>{\columncolor{colorb}}r}{17,63\%} & \mc{1}{>{\columncolor{colorb}}r}{16,50\%}\\\hline
\mc{1}{>{\columncolor{colorb}}r}{Stable}  & \mc{1}{>{\columncolor{color1}}r}{14,45\%} & \mc{1}{>{\columncolor{colorb}}r}{16,02\%} & \mc{1}{>{\columncolor{color1}}r}{ 15,11\%}\\
\mc{1}{>{\columncolor{colorb}}r}{Leaky }  & \mc{1}{>{\columncolor{colorb}}r}{14,77\%} & \mc{1}{>{\columncolor{colorb}}r}{16,39\%} & \mc{1}{>{\columncolor{colorb}}r}{ 15,85\%}\\
\mc{1}{>{\columncolor{colorb}}r}{LeakyLP} & \mc{1}{>{\columncolor{colorb}}r}{14,63\%} & \mc{1}{>{\columncolor{color1}}r}{15,78\%} & \mc{1}{>{\columncolor{colorb}}r}{ 15,30\%}\\\hline\hline
\end{tabular}
\caption{Different cell types in the lowest MD layer}
\label{Tabular:french_firstlayer}
\end{table}
\subsubsection{Different Cells in Other MD Layers} We want to see the effect of the substitution of the LSTM cell by the LeakyLP cell in the upper MD layers. In Table \ref{Tabular:french_otherlayers} we can see that using LeakyLP cells in both lowest layers perform very well. So we also take this setup to try different learning rates.
\begin{table}[t]
\centering
\begin{tabular}{cccccc}\hline\hline
\mc{3}{>{\columncolor{colorb}}c}{Celltype in 2D layer}& \mc{3}{>{\columncolor{colorb}}c}{Label-Error-Rate in Percent}\\
1 & 2 & 3   & \mc{1}{>{\columncolor{colorb}}c}{min   } & \mc{1}{>{\columncolor{colorb}}c}{max    } & \mc{1}{>{\columncolor{colorb}}c}{median }\\\hline
LSTM & LSTM & LSTM   & \mc{1}{>{\columncolor{colorb}}r}{14,96\%} & \mc{1}{>{\columncolor{colorb}}r}{17,63\%} & \mc{1}{>{\columncolor{colorb}}r}{16,50\%}\\\hline
LeakyLP & LSTM & LSTM   & \mc{1}{>{\columncolor{colorb}}r}{14,63\%} & \mc{1}{>{\columncolor{colorb}}r}{15,78\%} & \mc{1}{>{\columncolor{colorb}}r}{ 15,30\%}\\
LeakyLP & LeakyLP & LSTM & \mc{1}{>{\columncolor{color1}}r}{14,21\%} & \mc{1}{>{\columncolor{color1}}r}{15,57\%} & \mc{1}{>{\columncolor{color1}}r}{14,92\%}\\
LeakyLP & LeakyLP & LeakyLP &  \mc{1}{>{\columncolor{colorb}}r}{14,94\%} & \mc{1}{>{\columncolor{colorb}}r}{16,18\%} & \mc{1}{>{\columncolor{colorb}}r}{15,52\%}\\\hline\hline
\end{tabular}
\caption{Different cells in other layers}
\label{Tabular:french_otherlayers}
  \end{table}
\paragraph{Performance of Cells Regarding Learning-Rate.}
Using different learning rates we can see that the RNN with LeakyLP cells in the both lowest layers and the LSTM cells in the top layer can significantly improve the performance . Even the maximal LER of this RNN works better than the best network with LSTM cells in each layer.
\begin{table}[t]
\centering
\begin{tabular}{lllll}\hline\hline
 & & \mc{3}{>{\columncolor{colorb}}c}{Label-Error-Rate in Percent}\\
\mc{1}{>{\columncolor{colorb}}c}{Celltype}    & \mc{1}{>{\columncolor{colorb}}c}{BP-delta}    & \mc{1}{>{\columncolor{colorb}}c}{min   } & \mc{1}{>{\columncolor{colorb}}c}{max    } & \mc{1}{>{\columncolor{colorb}}c}{median }\\\hline
\mc{1}{>{\columncolor{colorb}}c}{LSTM   } &\(1\cdot10^{-4}\) & \mc{1}{>{\columncolor{colorb}}r}{14,96\%} & \mc{1}{>{\columncolor{colorb}}r}{17,63\%} & \mc{1}{>{\columncolor{colorb}}r}{16,50\%}\\
\mc{1}{>{\columncolor{colorb}}c}{LSTM   } &\(2\cdot10^{-4}\) & \mc{1}{>{\columncolor{color1}}r}{14,41\%} & \mc{1}{>{\columncolor{colorb}}r}{16,88\%} & \mc{1}{>{\columncolor{colorb}}r}{15,61\%}\\
\mc{1}{>{\columncolor{colorb}}c}{LSTM   } &\(5\cdot10^{-4}\) & \mc{1}{>{\columncolor{colorb}}r}{15,05\%} & \mc{1}{>{\columncolor{color1}}r}{16,27\%} & \mc{1}{>{\columncolor{color1}}r}{15,47\%}\\
\hline
\mc{1}{>{\columncolor{colorb}}c}{LeakyLP} &\(1\cdot10^{-4}\) & \mc{1}{>{\columncolor{colorb}}r}{14,94\%} & \mc{1}{>{\columncolor{colorb}}r}{16,18\%} & \mc{1}{>{\columncolor{colorb}}r}{15,52\%}\\
\mc{1}{>{\columncolor{colorb}}c}{LeakyLP} &\(5\cdot10^{-4}\) & \mc{1}{>{\columncolor{color1}}r}{12,68\%} & \mc{1}{>{\columncolor{color1}}r}{13,95\%} & \mc{1}{>{\columncolor{color1}}r}{13,57\%}\\
\hline
\mc{1}{>{\columncolor{colorb}}c}{LeakyLP in 2D layer 1 \& 2} &\(2\cdot10^{-4}\) & \mc{1}{>{\columncolor{colorb}}r}{13,26\%} & \mc{1}{>{\columncolor{colorb}}r}{14,04\%} & \mc{1}{>{\columncolor{colorb}}r}{13,65\%}\\
\mc{1}{>{\columncolor{colorb}}c}{LeakyLP in 2D layer 1 \& 2} &\(5\cdot10^{-4}\) & \mc{1}{>{\columncolor{color1}}r}{12,08\%} & \mc{1}{>{\columncolor{color1}}r}{13,42\%} & \mc{1}{>{\columncolor{color1}}r}{12,87\%}\\
\hline\hline
\end{tabular}
\caption{Performance of cells regarding learning-rate}
\label{Tabular:french_learningrate}
\end{table}
\end{section}

\begin{section}{Conclusion}
In this paper we took a look at the one-dimensional LSTM cell and discussed the benefits of this cell. We found two properties, that probably make these cells so powerful in the one dimensional case. Expanding these properties to the multi dimensional case, we saw that the LSTM does not fulfill one of these properties any more. We solved this problem by changing the architecture of the cell. In addition we presented a more general idea how to create one dimensional or multi dimensional cells. We compare some newly developed cells with the LSTM cell on two data sets and we can improve the performance using the new cell types. Due to this we think that substituting the multi-dimensional LSTM cells by the multi-dimensional \emph{LeakyLP cell} could improve the performance of many system working with a multi-dimensional space.
\end{section}

\appendix

\begin{section}{Proofs}
\subsection{Proof of \ref{theo:ltd_1D}}
\label{proof:theo:ltd_1D}
\begin{proof}
   Let \(c\) be a \oneD LSTM cell. To get the \der \(\frac{\partial s_c\left(t_2\right)}{\partial s_c\left(t_{1}\right)}\) according the truncated gradient between two time steps \(t_1,t_2\in\IN\) we have to take a look at \(\frein{}\).
\begin{align}
\label{E:05_LTD_1}
   \frac{\partial s_c\left(t_2\right)}{\partial s_c\left(t_{1}\right)}&=\frac{\partial \frein{\fy(t_2),\yin(t_2),s_c(t_2-1)}}{\partial s_c\left(t_{1}\right)}\\&=\underbrace{\frac{\partial \left(\yin(t_2)y_\iota(t_2)\right)}{\partial s_c\left(t_{1}\right)}}_{\eqtr0}+\frac{\partial s_c(t_2-1)}{\partial s_c\left(t_{1}\right)}y_\phi(t_2)+s_c(t_2-1)\underbrace{\frac{\partial y_\phi(t_2)}{\partial s_c\left(t_{1}\right)}}_{\eqtr0}\notag\\
&\eqtr \frac{\partial s_c(t_2-1)}{\partial s_c\left(t_{1}\right)}y_\phi(t_2)\notag\\
\label{E:05_LTD_2}
&\eqtr \prod_{t=t_{1}+1}^{t_2} y_\phi(t)
\end{align}
In addition, \(\forall t\in\IN\) we have
\begin{align}
\label{E:05_LSTM_derivation}
   \frac{\partial s_c\left(t\right)}{\partial \yin\left(t\right)}=y_\iota(t)\qquad\text{and}\qquad\frac{\partial y_c\left(t\right)}{\partial s_c\left(t\right)}=\foutder{s_c(t)}y_\omega(t).
\end{align}
We will prove the properties successively.\\
{\bf \NGEC:}
For the LSTM cell the FG \(f_\phi=\fl\) ensures \(y_\phi(t)\in\left(0,1\right)\), so using these bounds in \eqref{E:05_LTD_2} with
\begin{align*}
   \frac{\partial s_c\left(t\right)}{\partial s_c\left(t_{in}\right)}\eqtr \prod_{t'=t_{in}+1}^{t} y_\phi(t')\in\left(0,1\right)
\end{align*}
the LSTM cell has an \NGEC.\\
{\bf \LTD:} Therefore, we choose
\begin{align*}
y_\iota\left(t\right)&\in\left\{ \begin{array}{ccl} [1-\eps,1) &\text{ if } &t=t_{in} \\ (0,\eps]&\text{otherwise}& \end{array} \right.,\\
y_\phi\left(t\right)&\in\left\{ \begin{array}{ccl} [1-\eps,1) &\text{ if }& t_{in}<t\leq t_{out} \\ (0,\eps]&\text{otherwise}& \end{array} \right.,\\
\end{align*}
with a later chosen \(\eps>0\). Let \(t_1,t_2\in\IN,t_1\leq t_2\) be two arbitrary dates, where we want to calculate the gradient \(\frac{\partial s_c\left(t_2\right)}{\partial \yin\left(t_1\right)}\). First, we want to show that the LSTM cell allows \LTD for \(t_1=t_{in}\text{ and } t_{in}\leq t_2\leq t_{out}\):\\
We have \(y_\iota(t_1)\in[1-\eps,1)\) and \(\forall t=t_{in}+1,\ldots,t_{out}:y_\phi(t)\in[1-\eps,1)\).
Then, we can estimate the \der from \eqref{E:05_LTD_1} and \eqref{E:05_LSTM_derivation} by
\begin{align*}
\frac{\partial s_c\left(t_2\right)}{\partial \yin\left(t_1\right)}=
\frac{\partial s_c\left(t_2\right)}{\partial s_c\left(t_1\right)}
\frac{\partial s_c\left(t_1\right)}{\partial \yin\left(t_1\right)}
&\eqtr y_\iota\left(t_{1}\right)\prod_{t=t_{1}+1}^{t_2} y_\phi\left(t\right)\\
&\intr\Bigg[\left(1-\eps\right)\prod_{t=t_{1}+1}^{t_2}\left(1-\eps\right),1\Bigg)\subseteq\Bigg[\left(1-\eps\right)^{t_{out}-t_{in}+1},1\Bigg).
\end{align*}
To fulfill the equation for \LTD we choose \(\eps\) depending on \(\delta\) such that
\begin{align*}
   1-\delta&\leq\left(1-\eps\right)^{t_{out}-t_{in}+1}\\
\Leftrightarrow\quad \eps&\leq 1-\left(1-\delta\right)^{\frac{1}{t_{out}-t_{in}+1}}
\end{align*}
holds. Second, we have to show, that the \der is in \([0,\delta]\), when \(t_1=t_{in}\text{ and } t_{in}\leq t_2\leq t_{out}\) is not fulfilled.\\
In the case of \(t_1\neq t_{in}\) when \(\eps\leq\delta\) we can use the \NGEC which leads to
\begin{align*}
\frac{\partial s_c\left(t_2\right)}{\partial \yin\left(t_1\right)}=
\underbrace{\frac{\partial s_c\left(t_2\right)}{\partial s_c\left(t_1\right)}}_{\intr [0,1]}
\underbrace{\frac{\partial s_c\left(t_1\right)}{\partial \yin\left(t_1\right)}}_{\in (0,\eps]}
\subseteq [0,\eps]\subseteq [0,\delta].
\end{align*}
When \(t_1=t_{in}\) we have two cases: \(t_2<t_{in}\) or \(t_2>t_{out}\).
For the case \(t_2<t_{in}\) the \der is zero \((\subset [0,\delta])\), because the cell is causal. For \(t_2>t_{out}\) we can split the \der at \(t_{out}\) and get
\begin{align*}
\frac{\partial s_c\left(t_2\right)}{\partial \yin\left(t_1\right)}&\eqtr
\underbrace{y_\iota(t_1) \prod_{t=t_1+1}^{t_{out}} y_\phi(t)}_{\in(0,1)}
\prod_{t=t_{out}+1}^{t_2}\underbrace{y_\phi(t)}_{\in(0,\eps]}\\
&\intr\left(0,\eps^{t_2-t_{out}}\right]\subset [0,\eps]\subseteq [0,\delta].
\end{align*}
For \(\eps\leq\min\left\{\delta,1-\left(1-\delta\right)^{\frac{1}{t_{out}-t_{in}+1}}\right\}\) the LSTM cell allows \LTD.\\
{\bf \COD:} To prove that the LSTM cell has no \COD, we show that there are gate activations such that in Definition \ref{defi:COG} we get \(\delta_2>\delta_1\). Therefore, we assume that all gate activations are arbitrary (\(y_\gamma(t)\in(0,1)\)), closed (\(y_\gamma(t)\in(0,\eps]\)) or opened (\(y_\gamma(t)\in[1-\eps,1)\)) with a later chosen \(\eps>0\). We take a look at the right side of \eqref{E:05_LSTM_derivation}. For \(s_c(t)=0\) we get \(\foutder{s_c(t)}=1\). In Definition \ref{defi:COG} we have to satisfy \(\exists \fy(t):\frac{\partial y_c\left(t\right)}{\partial s_c\left(t\right)}\in\left[0,\delta_2\right]\) an choose the OG \(y_\omega(t)\in(0,\eps]\) with
\begin{align}
\label{E:05_COG_proof}
\eps\leq\delta_2.   
\end{align}
But then for \(t'=1,\ldots,t-1\) we can choose the IG and FG open with the same \(\eps\) so that 
\begin{align*}
y_\phi(t'),y_\iota(t')\in\left[1-\eps,1\right).
\end{align*}
When for all time steps \(t'=1,\ldots,t\) there is a positive input \(\yin(t')\in[c,1),c\in(0,1)\subset\IR\) and an internal state \(s_c(t'-1)< c\frac{(1-\eps)}\eps\), the internal state is growing over time, because
\begin{align*}
   s_c(t')&=\yin(t')y_\iota(t')+s_c(t'-1)y_\phi(t')\\
&\geq c(1-\eps)+s_c(t'-1)(1-\eps)\\
&\geq s_c(t'-1) + c(1-\eps)- s_c(t'-1)\eps\\
&> s_c(t'-1) + c(1-\eps)- c\frac{(1-\eps)}{\eps}\eps\\
&> s_c(t'-1).
\end{align*}
For large \(s_c(t)\geq c\frac{(1-\eps)}\eps\gg1\) we can estimate
\begin{align*}
   \underbrace{\tanh(s_c(t))}_{\approx\exp\left(-2s_c(t)\right)} \leq\exp\left(-s_c(t)\right)\leq\exp\left(-c\frac{(1-\eps)}\eps\right).
\end{align*}
This yields in \eqref{E:05_LSTM_derivation} to the bound
\begin{align}
   \left|\frac{\partial y_c\left(t\right)}{\partial s_c\left(t\right)}\right|&=\left|\foutder{s_c(t)}y_\omega(t)\right|\\
&\leq \exp\left(-c\frac{(1-\eps)}\eps\right)
\end{align}
so in Definition \ref{defi:COG} we get
\begin{align}
\label{E:05_COG_proof_1}
   \delta_1\leq\exp\left(-c\frac{(1-\eps)}\eps\right).
\end{align}
But when we combine \eqref{E:05_COG_proof}, \eqref{E:05_COG_proof_1} and the restriction in Definition \ref{defi:COG}, we have
\begin{align*}
   \eps\leq\delta_2<\delta_1\leq\exp\left(-c\frac{(1-\eps)}\eps\right),
\end{align*}
but there exist \(\eps,c\), such that the inequality is not fulfilled, which is a contradiction.\\
Summarized, the \oneD LSTM cell allows an \LTD and has an \NGEC, but does not allow \COD.
\end{proof}
\subsection{Proof of \ref{theo:07_LTD_MD_LSTM}}
\label{proof:theo:07_LTD_MD_LSTM}
\begin{proof}
   Let \(c\) be an \multiD LSTM cell of dimension \(D\), \(\fp,\fp_1,\fp_2,\fp_{in},\fp_{out}\in\IN^D, \fp_{in}\leq \fp_{out}\) arbitrary dates and \(\foutshort=\tanh\) the sigmoid function. Besides \(\eps>0\) is a later chosen value. In the first step we want to show that there are activations of the forget gates, so that
\begin{align}
\label{E:07_LTD_MD_proof}
\frac{\partial s_c^{\fp}}{\partial s_c^{\fp_{in}}}\intr\left\{ \begin{array}{ccl}
 \left[(1-\eps)^{\left\|\fp-\fp_{in}\right\|_1},1\right]&\text{for}&\fp_{in}\leq \fp\leq \fp_{out}\\
\left[0,D\eps\right]&\text{otherwise}&\quad
\end{array}\right.
\end{align}
is fulfilled. The prove is done using induction over \(k=\|\fp-\fp_{in}\|_1\) with \(\fp\geq \fp_{in}\). The base \(k=0\) is clear. Let be \(k\geq1\). We define
\begin{align*}
   P_\fp:=\left\{d\in\left\{1,\ldots,D\right\}\;|\;\fp_d^-\geq\fp_{in}\right\}
\end{align*}
the set of dimensions \(d\), in which are \(\fp_{in}\)-\(\fp_d^-\)-paths. Note, that this set cannot be empty, because \(\fp>\fp_{in}\) for \(k\geq1\). When we have a dimension \(d\in P_\fp\) then \(\left\|\fp_d^- -\fp_{in}\right\|_1=k-1\) and we assume
\begin{align}
\label{E:induction_case}
   \frac{\partial s_c^{\fp_d^-}}{\partial s_c^{\fp_{in}}}\intr\left[\left(1-\eps\right)^{\left\|\fp_d^- -\fp_{in}\right\|_1},1\right]=\left[\left(1-\eps\right)^{k-1},1\right].
\end{align}
Then we choose the activations of the FG to be
\begin{align}
\label{E:values_fg_}
   y_{\phi,d}^\fp\in\left\{ \begin{array}{ccl}
 \left[\frac{1-\eps}{\left|P_\fp\right|},\frac{1}{\left|P_\fp\right|}\right)&\text{for}& d\in P_\fp\text{ and }\fp_{in}< \fp\leq \fp_{out}\\
\left[0,\eps\right]&\text{otherwise}&\quad
\end{array}\right..
\end{align}
Then we can estimate the \der for \(\fp_{in}\leq\fp\leq\fp_{out}\) using \eqref{E:induction_case} and \eqref{E:values_fg_} to
\begin{align}
\label{E:07_LTD_MD_proof_1}
   \frac{\partial s_c^\fp}{\partial s_c^{\fp_{in}}}&\eqtr\sum_{d\in P_\fp} \frac{\partial s_c^{\fp_d^-}}{\partial s_c^{\fp_{in}}}y_{\phi,d}^\fp\in\left[\sum_{d\in P_\fp} \frac{\partial s_c^{\fp_d^-}}{\partial s_c^{\fp_{in}}}\frac{1-\eps}{\left|P_\fp\right|},\sum_{d\in P_\fp} \frac{\partial s_c^{\fp_d^-}}{\partial s_c^{\fp_{in}}}\frac{1}{\left|P_\fp\right|}\right)\notag\\
\Rightarrow\frac{\partial s_c^\fp}{\partial s_c^{\fp_{in}}}&\intr\left[\left|P_\fp\right|\left(1-\eps\right)^{k-1}\frac{1-\eps}{\left|P_\fp\right|},\left|P_\fp\right|\frac{1}{\left|P_\fp\right|}\right)\notag\\
\Leftrightarrow\frac{\partial s_c^\fp}{\partial s_c^{\fp_{in}}}&\intr\left[\left(1-\eps\right)^{\|\fp-\fp_{in}\|_1},1\right),
\end{align}
so \eqref{E:07_LTD_MD_proof} is fulfilled for \(\fp_{in}\leq\fp\leq\fp_{out}\).\\
If we have \(\fp<\fp_{in}\) in \eqref{E:07_LTD_MD_proof}, the \der is \(0\), because we have a causal system.\\ For \(\fp>\fp_{out}\) in \eqref{E:07_LTD_MD_proof}, we choose \(\eps\leq\frac{1}{D}\leq\frac{1}{\left|P_\fp\right|}\) in \eqref{E:values_fg_} to ensure \(\forall\fp\in\IN^D\):\(\left|\frac{\partial s_c^\fp}{\partial s_c^{\fp_{in}}}\right|\leq1\) (see \eqref{E:07_LTD_MD_proof_1}) and we get 
\begin{align}
\label{E:07_LTD_MD_proof_2}
   \frac{\partial s_c^\fp}{\partial s_c^{\fp_{in}}}&\eqtr\sum_{d=1,\ldots,D} \frac{\partial s_c^{\fp_d^-}}{\partial s_c^{\fp_{in}}}y_{\phi,d}^\fp
\in\left(0,D\eps \max_{d=1,\ldots,D} \frac{\partial s_c^{\fp_d^-}}{\partial s_c^{\fp_{in}}}\right]\subseteq\left(0,D\eps\right],
\end{align}
and \eqref{E:07_LTD_MD_proof} is fulfilled.\\
In the second step let \(\fp_1\leq\fp_2\) be the date, for which we want to calculate the truncated gradient \(\frac{\partial s_c^{\fp_2}}{\partial \yin^{\fp_1}}\).
We choose the IG activation as 
\begin{align}
\label{E:07_LTD_MD_proof_3}
y_\iota^\fp&\in\left\{ \begin{array}{ccl} [1-\eps,1) &\text{ if }& \fp=\fp_{in} \\ (0,\eps]&\text{otherwise}&\end{array} \right.
\end{align}
and we get \(\frac{\partial s_c^{\fp}}{\partial \yin^{\fp}}=y_\iota^\fp\). Using \eqref{E:07_LTD_MD_proof_1}, \eqref{E:07_LTD_MD_proof_2} and \eqref{E:07_LTD_MD_proof_3}, we can estimate the partial \der by
\begin{align*}
   \frac{\partial s_c^{\fp_2}}{\partial \yin^{\fp_1}}&=
\frac{\partial s_c^{\fp_2}}{\partial s_c^{\fp_{1}}}
\frac{\partial s_c^{\fp_{1}}}{\partial \yin^{\fp_{1}}}\\
\Rightarrow\frac{\partial s_c^{\fp_2}}{\partial s_c^{\fp_{1}}}&\intr\left\{ \begin{array}{ccl}
 \left[(1-\eps)(1-\eps)^{\left\|\fp_2-\fp_{in}\right\|_1},1\right]&\text{for}&\fp_1=\fp_{in}\text{ and }\fp_{in}\leq \fp_2\leq \fp_{out}\\
\left[0,D\eps\right]&\text{otherwise}&\quad\\
\end{array} \right..
\end{align*}
and setting
\begin{align*}
\eps:=\min\left\{\frac{\delta}{D},1-\left(1-\delta\right)^{\frac{1}{\left\|\fp_{in}-\fp_{out}\right\|_1+1}}\right\}
\end{align*}
the conditions of Definition \ref{defi:LTD_MD} are fulfilled.
\end{proof}
\subsection{Proof of \ref{theo:07_NGEC_MD_LSTM}}
\label{proof:theo:07_NGEC_MD_LSTM}
\begin{proof}
    Let \(c\) be an \multiD cell of dimension \(D\) with the internal state \(s_c\) and \(\fp_{in},\fp_k\in\IN^D, \fp_{in}\leq \fp_k\) two dates. Let \(\fp_k\)  be a date \(k\) steps further in each dimension than a fixed date \(\fp_{in}\). So the distance between them is \(\left\|\fp_{in}-\fp_k\right\|_1=Dk\). Let \(\Pi\) be the set of all \(\fp_{in}\)-\(\fp_k\)-paths, then there exist \(\left|\Pi\right|=\#\{\overrightarrow{\fp_{in}\fp_k}\}\) paths (see Definition \ref{defi:md_path}). We assume
\begin{align*}
   y_{\phi,d}^\fp\in\left[\eps,1-\eps\right]
\end{align*}
with \(\eps\in(0,0.5)\) and we can estimate the partial \der, using the truncated gradient, with
\begin{align*}
   \frac{\partial s_c^{\fp_k}}{\partial s_c^{\fp_{in}}}&\eqtr\sum_{\pi\in\Pi}\prod_{i=1}^k y_{\phi,d}^{\pi_i}\\
&\intr\left[\eps^k\#\{\overrightarrow{\fp_{in}\fp_k}\},\left(1-\eps\right)^k\#\{\overrightarrow{\fp_{in}\fp_k}\}\right].
\end{align*}
For \(D=1\) we get \(\left|\Pi\right|=1\) and the cell has a NGEC. When \(D\geq2\) we can count the number of paths using the Stirling's approximation and we can estimate the number of paths with
\begin{align*}
 \#\{\overrightarrow{\fp_{in}\fp_k}\}=\frac{\left(\sum\limits_{i=1}^D\left(\overrightarrow{\fp_{in}\fp_k}\right)_i\right)!}{\prod\limits_{i=1}^D\left(\overrightarrow{\fp_{in}\fp_k}\right)_i!}=\frac{\left(Dk\right)!}{\left(k!\right)^D}
\xrightarrow{k\gg1}\frac{\sqrt{2\pi Dk}\left(\frac{Dk}{e}\right)^{Dk}}{\left(\sqrt{2\pi k}\left(\frac{k}{e}\right)^k\right)^D}
=\frac{\sqrt{D}D^{Dk}}{\sqrt{2\pi k}^{D-1}}.
\end{align*}
When we combine it with the FG activations we can estimate the \der for great \(k\) with the Stirling's approximation and get 
\begin{align}
\label{E:partial_derivation_dim_m}
   \frac{\partial s_c^{\fp_k}}{\partial s_c^{\fp_{in}}}&\intr\left[\eps^{Dk}\#\{\overrightarrow{\fp_{in}\fp_k}\},\left(1-\eps\right)^{Dk}\#\{\overrightarrow{\fp_{in}\fp_k}\}\right]\\
\stackrel{k\gg 1}{\Rightarrow}&\intr\left[\frac{\sqrt{D}}{\sqrt{2\pi k}^{D-1}}\left(D\eps\right)^{Dk},\frac{\sqrt{D}}{\sqrt{2\pi k}^{D-1}}\left(D\left(1-\eps\right)\right)^{Dk}\right].\notag
\end{align}
The upper bound of this interval can grow for great \(k\), if \(\left[D\left(1-\eps\right)\right]>1\) and this is the case for \(D\geq2\).
So the \multiD LSTM cell can have an exploding gradient for \(D\geq2\). When the weights to the FGs are initialized with small values, we have \(y_{\phi,d}^\fp\approx0.5\). Then we have an exploding gradient when \(D\geq3\), when the training is starting. In the worst case we have \(y_{\phi,d}^\fp\approx1\) and the \der in \eqref{E:partial_derivation_dim_m} goes for great \(k\) to
 \begin{align*}
   \frac{\partial s_c^{\fp_k}}{\partial s_c^{\fp_{in}}}\approx\frac{\sqrt{D}}{\sqrt{2\pi}^{D-1}}k^{\frac{1-D}{2}}\left(D\right)^{Dk}.
\end{align*}
\end{proof}
\subsection{Proof of \ref{theo:MD_LTD_Stable}}
\label{proof:theo:MD_LTD_Stable}
\begin{proof}
Let \(c\) be an \multiD LSTM Stable cell of dimension \(D\geq2\) (for \(D=1\) the proof is equivalent to the \oneD case of the LSTM cell), \(\fp,\fp_1,\fp_2,\fp_{in},\fp_{out}\in\IN^D, \fp_{in}\leq \fp_{out}\) arbitrary dates and \(\foutshort=\tanh\) the sigmoid function. Besides \(\eps>0\) is a later chosen value.\\
In the first step we want to show that there are activations of the forget gates, so that
\begin{align}
\label{E:09_LTD_MD_proof}
\frac{\partial s_c^{\fp}}{\partial s_c^{\fp_{in}}}\intr\left\{ \begin{array}{ccl}
 \left[(1-(D-1)\eps)^{2\left\|\fp-\fp_{in}\right\|_1},1\right]&\text{for}&\fp_{in}\leq \fp\leq \fp_{out}\\
\left[0,\eps\right]&\text{otherwise}&\quad
\end{array}\right.
\end{align}
is fulfilled. The prove is done using induction over \(k=\|\fp_-\fp_{in}\|_1\). The base \(k=0\) is clear. Let be \(k\geq1\). We define
\begin{align*}
   P_\fp:=\left\{d\in\left\{1,\ldots,D\right\}\;|\;\fp_d^-\geq\fp_{in}\right\}
\end{align*}
the set of dimensions \(d\), in which are \(\fp_{in}\)-\(\fp_d^-\)-paths. Note, that this set cannot be empty, because \(\fp>\fp_{in}\) for \(k\geq1\). When we have a dimension \(d\in P_\fp\) then \(\left\|\fp_d^- -\fp_{in}\right\|_1=k-1\) and we assume
\begin{align}
\label{E:09_induction_case}
   \frac{\partial s_c^{\fp_d^-}}{\partial s_c^{\fp_{in}}}\intr\left[\left(1-(D-1)\eps\right)^{2\left\|\fp_d^- -\fp_{in}\right\|_1},1\right]=\left[\left(1-(D-1)\eps\right)^{2(k-1)},1\right].
\end{align}
When we choose the activations of the LGs to be
\begin{align*}
   y_{\lambda,d}^\fp&\in\left\{ \begin{array}{ccl}
 \left[1-\eps,1\right)&\text{for}& d\in P_\fp\text{ and }\fp_{in}< \fp\leq \fp_{out}\\
\left(0,\eps\right]&\text{otherwise}&\quad
\end{array}\right.,
\end{align*}
we can estimate \(\frac{\sum_{d\in P_\fp} y_{\la,d}^\fp}{\sum_{d'=1}^D y_{\la,d'}^\fp}\in\left(1-(D-1)\eps,1\right]\), because
\begin{align}
\label{E:09_values_lg}
1\geq\frac{\sum_{d\in P_\fp} y_{\la,d}^\fp}{\sum_{d'=1}^D y_{\la,d'}^\fp}
&=\frac{\sum_{d\in P_\fp} y_{\la,d}^\fp}{\sum_{d\in P_\fp} y_{\la,d}^\fp+\sum_{d\in\{1,\ldots,D\}\setminus P_\fp} y_{\la,d'}^\fp}\\
&\geq\frac{\left|P_\fp\right|(1-\eps)}{\left|P_\fp\right|(1-\eps)+\underbrace{\left(D-\left|P_\fp\right|\right)}_{\leq D-1}\eps}\notag\\
&\geq\frac{\left|P_\fp\right|(1-(D-1)\eps)}{\left|P_\fp\right|(1-(D-1)\eps)+\left(D-1\right)\eps}\notag\\
&\geq(1-(D-1)\eps)\frac{\left|P_\fp\right|}{\left|P_\fp\right|-\eps(D-1)\left(\left|P_\fp\right|-1\right)}\notag\\
&\geq(1-(D-1)\eps)\notag.
\end{align}
Setting the FG to
\begin{align}
\label{E:09_values_fg_}
   y_\phi^\fp&\in\left\{ \begin{array}{ccl}
 \left[1-\eps,1\right)&\text{for}& \fp_{in}< \fp\leq \fp_{out}\\
\left(0,\eps\right]&\text{otherwise}&\quad
\end{array}\right.
\end{align}
we can estimate the \der for \(\fp_{in}\leq\fp\leq\fp_{out}\) using \eqref{E:09_induction_case},\eqref{E:09_values_lg} and \eqref{E:09_values_fg_} to
\begin{align}
\label{E:09_LTD_MD_proof_1}
   \frac{\partial s_c^\fp}{\partial s_c^{\fp_{in}}}
&\eqtr y_\phi^\fp\left(\sum\limits_{d\in P_\fp} \frac{\partial s_c^{\fp_d^-}}{\partial s_c^{\fp_{in}}}\frac{y_{\la,d}^\fp}{\sum_{d'=1}^D y_{\la,d'}^\fp}+\underbrace{\sum_{d\in\{1,\ldots,D\}\setminus P_\fp} \frac{\partial s_c^{\fp_d^-}}{\partial s_c^{\fp_{in}}} \frac{y_{\la,d}^\fp}{\sum_{d'=1}^D y_{\la,d'}^\fp}}_{=0}\right)\notag\\
&\intr\left((1-\eps)\left(1-(D-1)\eps\right)^{2(k-1)}(1-(D-1)\eps),1\right)\notag\\
\Rightarrow\quad\frac{\partial s_c^\fp}{\partial s_c^{\fp_{in}}}&\intr\left(\left(1-(D-1)\eps\right)^{2k},1\right)
\end{align}
so \eqref{E:09_LTD_MD_proof} is fulfilled for \(\fp_{in}\leq\fp\leq\fp_{out}\).\\
If we have \(\fp<\fp_{in}\) in \eqref{E:09_LTD_MD_proof}, the \der is \(0\), because we have a causal system.\\
For \(\fp>\fp_{out}\) the FG is closed (see \eqref{E:09_values_fg_}), and using the upper bounds of \eqref{E:09_induction_case} and \eqref{E:09_values_lg} we get
\begin{align}
\label{E:09_LTD_MD_proof_2}
   \frac{\partial s_c^\fp}{\partial s_c^{\fp_{in}}}
&\eqtr y_\phi^\fp\left(\sum\limits_{d=1}^D \frac{\partial s_c^{\fp_d^-}}{\partial s_c^{\fp_{in}}}\frac{y_{\la,d}^\fp}{\sum_{d'=1}^D y_{\la,d'}^\fp}\right)\\
&\intr(0,\eps]\notag
\end{align}
and \eqref{E:09_LTD_MD_proof} is fulfilled.\\
In the second step let \(\fp_1\leq\fp_2\) be the date, for which we want to calculate the truncated gradient \(\frac{\partial s_c^{\fp_2}}{\partial \yin^{\fp_1}}\). We choose the IG activation as 
\begin{align}
\label{E:09_LTD_MD_proof_3}
y_\iota^\fp&\in\left\{ \begin{array}{ccl} [1-\eps,1) &\text{ if }& \fp=\fp_{in} \\ (0,\eps]&\text{otherwise}&\end{array} \right.
\end{align}
and we get \(\frac{\partial s_c^{\fp}}{\partial \yin^{\fp}}=y_\iota^\fp\).
Using \eqref{E:09_LTD_MD_proof_1}, \eqref{E:09_LTD_MD_proof_2} and \eqref{E:09_LTD_MD_proof_3}, we can estimate the partial \der by
\begin{align*}
   \frac{\partial s_c^{\fp_2}}{\partial \yin^{\fp_1}}&=
\frac{\partial s_c^{\fp_2}}{\partial s_c^{\fp_{1}}}
\frac{\partial s_c^{\fp_{1}}}{\partial \yin^{\fp_{1}}}\\
\Rightarrow\frac{\partial s_c^{\fp_2}}{\partial s_c^{\fp_{1}}}&\intr\left\{ \begin{array}{ccl}
 \left[(1-\eps)(1-(D-1)\eps)^{2\left\|\fp_2-\fp_{in}\right\|_1},1\right]&\text{for}&\fp_1=\fp_{in}\text{ and }\fp_{in}\leq \fp_2\leq \fp_{out}\\
\left[0,\eps\right]&\text{otherwise}&\quad\\
\end{array} \right..
\end{align*}
and setting
\begin{align*}
\eps:=\min\left\{\delta,\left(1-\left(1-\delta\right)^{\frac{1}{2\left\|\fp_{in}-\fp_{out}\right\|_1+1}}\right)\frac{1}{D-1}\right\}
\end{align*}
the conditions of Definition \ref{defi:LTD_MD} are fulfilled.
\end{proof}
\subsection{Proof of \ref{theo:MD_NGEC_Stable}}
\label{proof:theo:MD_NGEC_Stable}
\begin{proof}
    Let \(c\) be a \multiD LSTM Stable cell of dimension \(D\) with the internal state \(s_c\) and \(\fp_{in},\fp\in\IN^D, \fp_{in}\leq \fp\) two arbitrary dates and \(\left\|\fp_{in}-\fp\right\|_1=k\). Let all gate activations be arbitrary in \(\left[0,1\right]\). We show that
\begin{align}
\label{E:induction_1}
   \frac{\partial s_c^{\fp}}{\partial s_c^{\fp_{in}}}&\intr\left[0,1\right]
\end{align}
is fulfilled \(\forall k\in\IN\) using induction over \(k\). For the base case \(k=0\) we get \(\frac{\partial s_c^{\fp}}{\partial s_c^{\fp_{in}}}=\frac{\partial s_c^{\fp_{in}}}{\partial s_c^{\fp_{in}}}=1\).\\
Let \eqref{E:induction_1} be fulfilled for \(k-1\). That means if \( \fp_d^-\geq \fp_{in}\) we have \(\left\|\fp_d^- -\fp_{in}\right\|_1=k-1\) and this leads to \(\frac{\partial s_c^{\fp_d^-}}{\partial s_c^{\fp_{in}}}\intr\left[0,1\right]\). If \( \fp_d^-\ngeq \fp_{in}\) then there is no \(\fp_{in}\)-\(\fp_d^-\)-path and we have \(\frac{\partial s_c^{\fp_d^-}}{\partial s_c^{\fp_{in}}}=0\) for this dimension. Then we can calculate the \der
\begin{align*}0
\frac{\partial s_c^\fp}{\partial s_c^{\fp_{in}}}&\eqtr
y_\phi^\fp\sum_{d=1}^D \frac{\partial s_c^{\fp_d^-}}{\partial s_c^{\fp_{in}}}\frac{y_{\lambda,d}^\fp}{\sum\limits_{d'=1}^D y_{\la,d'}^\fp} \in\Bigg[0,\max_{d\in P_\fp}\Bigg\{ \frac{\partial s_c^{\fp_d^-}}{\partial s_c^{\fp_{in}}}\underbrace{\frac{y_{\lambda,d}^\fp}{\sum\limits_{d'=1}^D y_{\la,d'}^\fp}}_{\leq1}\Bigg\}\Bigg]\\
&\intr\left[0,1\right],
\end{align*}
which gives us the desired interval.
\end{proof}
\subsection{Proof of \ref{theo:11_Leaky}}
\label{proof:theo:11_Leaky}
\begin{proof}\\
{\bf\NGEC:} The cell has an \NGEC, because all gates have the same bounds as the \multiD Stable cell.\\
{\bf\LTD:} To prove the \LTD, we use the proof of Theorem \ref{theo:MD_LTD_Stable}. The difference between the \multiD Stable cell and the \multiD Leaky cell is that the activations of the FG and IG are dependent on each other for the Leaky cell. Let \(\fp_{in},\fp\in\IN^D,\fp_{in}\leq\fp\) be two arbitrary dates like in Theorem \ref{theo:MD_LTD_Stable}. The IG has just the a restriction that for \(\fp=\fp_{in}\) it has to hold \(y_\iota^\fp\in\left[1-\eps,1\right)\) . Here, the FG can have an arbitrary activation, so we chose \(y_\phi^\fp=1-y_\iota^\fp\). For all \(\fp>\fp_{in}\) the FG have to be in the ranges, shown in \eqref{E:09_values_fg_}, while the IG has no restriction and we choose \(y_\iota^\fp=1-y_\phi^\fp\), so the \multiD Leaky cell has the \LTD.\\
{\bf\COD:} The proof that the \multiD Leaky cell allows \COD can be done by estimating the bounds of \(s_c^\fp\). From the update equations of the cell we get
\begin{align*}
   \left|s_c^{\fp^-}\right|\leq\max_{i=1,\ldots,D}\left|s_c^{\fp_d^-}\right|.
\end{align*}
Now we can estimate the internal state using the ranges \(\yin^\fp\in\left[-1,1\right]\), recursion over \(\fp\)
\begin{align*}
   \left|s_c^\fp\right|=\left|\left(1-y_\phi^\fp\right) \yin^\fp + y_\phi^\fp s_c^{\fp^-}\right|\leq\max\left\{\left|\yin^\fp\right|,\left|s_c^{\fp^-_1}\right|,\ldots,\left|s_c^{\fp^-_D}\right|\right\}\leq\max_{\fq<\fp}\left\{\left|\yin^\fq\right|\right\}\leq 1
\end{align*}
and get \(s_c^\fp\in\left[-1,1\right]\). To fulfill the derivatives in Definition \ref{defi:COG_MD}, for \(\delta_1\) we choose \(y_\omega^\fp\in\left[1-\eps,1\right)\) and get
\begin{align}
\label{E:19_COG_LEAKY_OPEN}
\delta_1\leq\min\limits_{s_c^\fp}\left\{\foutder{s_c^\fp}\right\}(1-\eps)=\foutder{1}(1-\eps).
\end{align}
For \(\delta_2\) we choose \(y_\omega^\fp\in(0,\eps]\) and get 
\begin{align}
\label{E:19_COG_LEAKY_CLOSE}
\delta_2\geq\max\limits_{s_c^\fp}\left\{\foutder{s_c^\fp}\right\}\eps=\eps.
\end{align}
To fulfill the derivatives in Definition \ref{defi:COG_MD} we use \eqref{E:19_COG_LEAKY_OPEN}, \eqref{E:19_COG_LEAKY_CLOSE} and \(\foutder{1}>\frac13\) and with
\begin{align*}
   \eps\leq \delta_2<\delta_1&\leq\foutder{1}\left(1-\eps\right)\\
\Rightarrow \eps&\leq \frac14< \frac{\foutder{1}}{\foutder{1}+1}
\end{align*}
the \COD is proven.
\end{proof}
\end{section}

\begin{section}{Theory to Create First Order \multiD Cells}
\label{ss:theory_1st_order}
If one wants to take a closer look at the theory of linear shift invariant (LSI)-systems and their frequency analysis and analyse a first order LSI-system regarding its free selectable parameters using the \(\FF\)- and \(\ZZ\)-transform, it is highly recommended to be familar with these theories \citep[for a good overview and more details see][]{ap00,ds00}.
Adding the knowledge of reducing the \multiD case to the \oneD case (see Section \ref{S:Stable}) we create new cell types for the \multiD case.\\
\subsection{Analysing a First Order LSI-System}

The update equations of a first order LSI-system with one input \(u\), one internal state \(x\) and one output \(y\) can be written as
\begin{align}
\label{E:13_lsi_x}
x[n]&=h_1(u[n],x[n-1])=\alpha_0u[n]+\alpha_1x[n-1],\\
\label{E:13_lsi_y}
y[n]&=h_2(x[n],x[n-1])=b_0x[n]+b_1x[n-1]
\end{align}
with the free selectable coefficients \(\alpha_0,\alpha_1,b_0,b_1\in \IR\). Let \(U(z)=\ZZ\left\{u[n]\right\}\) be the \(\ZZ\)-transformed signal of \(u[n]\) and \(X(z),Y(z)\) respectively. Then we can write the so called \emph{transfer functions}
\begin{align*}
   H_1(z)&:=\frac{X(z)}{U(z)}=\frac{\alpha_0}{1-\alpha_1 z^{-1}},\\
   H_2(z)&:=\frac{Y(z)}{X(z)}=b_0+b_1 z^{-1},\\
	H(z)&:=\frac{Y(z)}{U(z)}=H_2(z)H_1(z).
\end{align*}
To analyse \eqref{E:13_lsi_x} and \eqref{E:13_lsi_y} according their \emph{frequency response} we use the relationship between the \(\FF\)-transform and the \(\ZZ\)-transform:
\begin{remark}
\label{rem:transferfunction}
Let \(u[n]=e^{j\omega n}\) be a harmonic input sequence with the imaginary number \(j^2=-1\) and \(H(z)=\frac{Y(z)}{U(z)}\) be a transfer functions of an LSI-system. When the poles of \(H(z)\) are inside the circle \(|z|=1\), we can  change from \(\ZZ\)- to \(\FF\)-transform using the substitution
\begin{align*}
   H(\omega)=H(z)\biggr\rvert_{z=e^{j\omega}}
\end{align*}
with the harmonic sequence \(y[n]=H(\omega)u[n]=H(\omega)e^{j\omega n}\) with the same frequency \(\omega\) but with a different amplitude and a different phase dependent on the frequency \(\omega\).
\end{remark}
We only want to analyse the amplitude of this harmonic sequence
\begin{align*}
\left|y[n]\right|=\left|H(z)e^{j\omega n}\right|=\left|H(z)\right|=\left|H_2(z)\right|\left|H_1(z)\right|
\end{align*}
and do that by analysing both transfer functions \(H_1(z)\) and \(H_2(z)\) separately.\\
The amplitude of \(H_1(\omega)=\left.H_1(z)\right|_{z=e^{j\omega}}\) is calculated by
\begin{align*}
   \left|H_1(\omega)\right|=\frac{\left|\alpha_0\right|}{\sqrt{\left(1-\alpha_1 \cos(\omega)\right)^2+\alpha_1^2 \sin^2(\omega)}}.
\end{align*}
Like mentioned before, in many tasks, the information signal has a low frequency. To have the largest amplitude at \(\omega=0\) we have to choose \(\alpha_1\geq0\).
As mentioned in Remark \ref{rem:transferfunction} the poles of \(H_1(z)=\frac{\alpha_0}{1-\alpha_1 z^{-1}}=\frac{z\alpha_0}{z-\alpha_1}\) have to be in the circle \(|z|=1\), so we have the additional constraint \(|\alpha_1|<1\).
This leads to the bounds \(\alpha_1\in[0,1)\). But for \(\alpha_1\to1\) we have \(H_1\left(0\right)\to\infty\), so we have to choose \(\alpha_0\) dependent on \(\alpha_1\). We set a maximum gain of \(\max\limits_{\omega}\left|H_1(\omega)\right|=\left|H_1(0)\right|=1\), so we get the constraint
\begin{align}
\label{E:leakyLP_alphabound}   
\left|\alpha_0\right|\leq1-\alpha_1.
\end{align}
In the same way we analyse \(H_2(z)\):
\begin{align*}
   \left|H_2(\omega)\right|&=\left|b_0+b_1 e^{-j\omega}\right|=\sqrt{\left(b_0+b_1\cos(\omega)\right)^2+b_1^2\sin^2(\omega)}
\end{align*}
To get the maximal gain at low frequency the parameters \(b_0\) and \(b_1\) must have the same sign.
\subsection{Creating a First Order Cell}
\label{S:13_leakyLP_creating_a_first_order_cell}
{
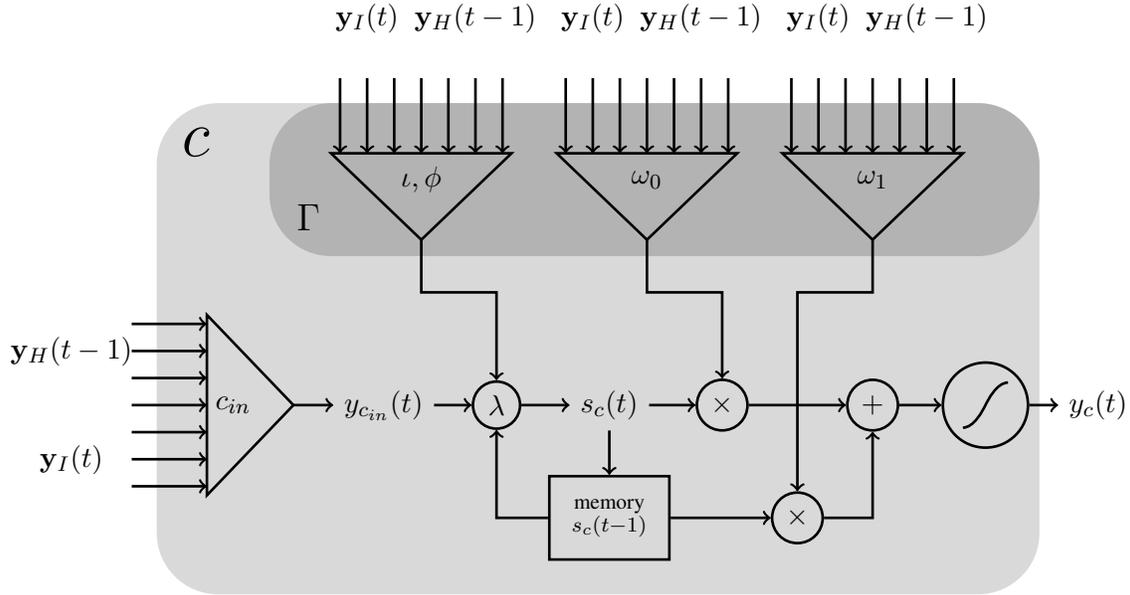
\begin{figure}
\centering
\begin{tikzpicture}[scale=1,line width =1pt]
\draw[fill,color=black!15,rounded corners=8mm] (1.5,-2.5) rectangle  (13.2,4);
\draw[fill,color=black!30,rounded corners=8mm] (3,2) rectangle  (13.2,4);
\node at (3.5,2.5) {\Large{$\Gamma$}};
\node at (2,3.5) {\Huge{$c$}};
\drawuniteasy{2.5}{0}{$c_{in}$}{0};
\drawuniteasy{5}{3}{$\iota,\phi$}{-90};
\drawuniteasy{8}{3}{$\omega_0$}{-90};
\drawuniteasy{11}{3}{$\omega_1$}{-90};
 \node (times_in) at (6,0) [circle,draw,inner sep=1mm] {$\lambda$};
  \node (past_2_plus) at (9,0) [circle,draw,inner sep=1mm] {$\times$};
 \node (currstate) at (7.5,0) {$s_c(t)$};
 \node (input) at (4.5,0) {$\yin(t)$};
 \node (squash_state) at (12.5,0) [circle,draw,inner sep=4mm] {};
 \sigmoid[(12.5,0)]
 \node (times_og) at (10,-1.5) [circle,draw,inner sep=1mm] {$\times$};
 \node (add) at (11,0) [circle,draw,inner sep=1mm] {$+$};
 \node (out) at (14,0) {$y_c(t)$};
\node (state1) at (7.5,-1.5)  [rectangle,draw,inner sep=3mm] {$\substack{\text{memory}\\s_c(t-1)}$};
 
\draw[->] (2.5+0.8,0) -- (input);
\draw[->] (input) -- (times_in);
\draw[->] (5,3-0.8) -- (5,1.5) -- (6,1.5) -- (times_in);
\draw[->] (8,3-0.8) -- (8,1.5) -- (9,1.5)-- (past_2_plus);
\draw[->] (11,3-0.8) -- (11,1.5) -- (10,1.5) -- (times_og);
\draw[->] (times_og) -- (11,-1.5) -- (add);
\draw[->] (past_2_plus) -- (add);
\draw[->]  (add) -- (squash_state);
\draw[->]  (squash_state) -- (out);

\draw[->]  (times_in) -- (currstate); 
 \draw[->] (currstate) -- (past_2_plus); 
 \draw[->] (currstate) -- (state1);
 \draw[->] (state1) -- (6,-1.5) -- (times_in);
 \draw[->] (state1) -- (times_og);
\end{tikzpicture}
\caption{{Schematic diagram of a one-dimensional LeakyLP cell:} The internal state is a convex combination of the new input \(c_{in}\) and the previous state \(s_c(t-1)\). The previous state \(s_c(t-1)\) and the current state \(s_c(t)\) are gated (\(\omega_0\) and \(\omega_1\)) and accumulated afterwards. The output is squashed by \(\tanh\) into the interval \([-1,1]\).}
\label{Fig:LeakyLP_1d}
\end{figure}
}
With these constraints for the parameters we now can define a new cell type. The parameters \(\alpha_0,\alpha_1,b_0,b_1\) should be activations of gates like in LSTM cells.
We have to find the right activation functions to fulfill the inequalities above.
Using the weight-space symmetries in a network with at least one hidden layer \citep[5.1.1]{cb06}, without loss of generality we set \(\alpha_0,\alpha_1,b_0,b_1\geq0\). To fulfill the bounds for \(H_1\), we set \(\alpha_1\) as activation of a gate with activation function \(\fl\). So we have \(\alpha_1\in\left(0,1\right)\). This is comparable with the FG in the previous sections.
To select the \(\alpha_0\) we choose \(\alpha_0:=1-\alpha_1\) (see \eqref{E:leakyLP_alphabound}). So the value of \(\alpha_0\) is comparable with the activation of the IG.
For \(H_2\) we set both values \(b_0,b_1\) as activations of a gate with activation function \(\fl\) which leads to \(\max\limits_\omega \left|H_2(\omega)\right|=\max\left\{b_0+b_1\right\}=2\), so the amplitude response is bounded by \(2\).\\

With these bounds we can define a cell with a cell input \(\yin^\fp=u[n]\), a previous internal state \(s_c^{\fp^-}=x[n-1]\), an internal state \(s_c^\fp=x[n]\) and a cell output \(y_c^\fp=y[n]\). We substitute the coefficients by time dependent gate activations
\begin{center}
\begin{tabular}{ll}
\(\alpha_0:=1-y_\phi^\fp=y_\iota^\fp\)&IG\\
\(\alpha_1:=y_\phi^\fp\)&FG\\
\(b_0:=y_{\omega_0}^\fp\)&OG\\
\(b_1:=y_{\omega_1}^\fp\)&OG of the previous internal state
\end{tabular}
\end{center}
which leads to the transfer functions
\begin{align}
   H_1^{y_\phi^\fp}(z)&=\frac{\alpha_0}{1-\alpha_1z^{-1}}=\frac{1-y_\phi^\fp}{1-y_\phi^\fp z^{-1}},\notag\\
   H_2^{y_{\omega_0}^\fp;y_{\omega_1}^\fp}(z)&=b_0+b_1 z^{-1}=y_{\omega_0}^\fp+y_{\omega_1}^\fp z^{-1},\notag\\
\label{E:H_z_filter}
   H(z)=H^{y_\phi^\fp;y_{\omega_0}^\fp;y_{\omega_1}^\fp}(z)&=\frac{Y(z)}{U(z)}=\alpha_0\frac{b_0+b_1 z^{-1}}{1-\alpha_1z^{-1}}=\frac{1-y_\phi^\fp}{1-y_\phi^\fp z^{-1}}\left(y_{\omega_0}^\fp+y_{\omega_1}^\fp z^{-1}\right).
\end{align}
and the update equations
\begin{align}
x[n]=&\alpha_0 u[n]+ \alpha_1 x[n-1]&\Leftrightarrow s_c^\fp&=(1-y_\phi^\fp)\yin^\fp+y_\phi^\fp s_c^{\fp^-},\notag\\
\label{E:update_equation_y}
y[n]=& b_0 x[n]+b_1 x[n-1]&\Leftrightarrow y_c^\fp&= y_{\omega_0}^\fp s_c^\fp+y_{\omega_1}^\fp s_c^{\fp^-}.
\end{align}
\begin{figure}[t]
\centering
\begin{tabular}{lr}
\begin{tikzpicture}
		\begin{axis}[width=0.45\textwidth,height=0.25\textheight,
			x tick label style={/pgf/number format/1000 sep=},
			y tick label style={/pgf/number format/1000 sep=},
			ylabel={frequency response},
			xlabel={frequency \(f\)},
 			xmin = 0.0,
 			xmax = 0.5,
 			ymin = 0,
 			ymax = 1,
ytick={0,0.5,1,1.5,2,2.5,3},
xtick={0,0.1,0.2,0.3,0.4,0.5},
legend style= {at={(0.5,-0.32)},anchor= north,legend columns=3},
ymajorgrids,
xmajorgrids,
			enlarge x limits=0.0]
			\addplot[line width=1pt,mark=none,black,dash pattern=on 4pt off 2pt] table[x=f, y=H_1_01] \tablefrequencyresponse; \addlegendentry{\(H_1^{0.1}\)}
			\addplot[line width=1pt,mark=none,gray,dash pattern=on 4pt off 2pt] table[x=f, y=H_1_05] \tablefrequencyresponse; \addlegendentry{\(H_1^{0.5}\)}
			\addplot[line width=1pt,mark=none,lightgray,dash pattern=on 4pt off 2pt] table[x=f, y=H_1_09] \tablefrequencyresponse; \addlegendentry{\(H_1^{0.9}\)}
		\end{axis}
\end{tikzpicture}&
\begin{tikzpicture}
		\begin{axis}[width=0.45\textwidth,height=0.25\textheight,
			x tick label style={/pgf/number format/1000 sep=},
			y tick label style={/pgf/number format/1000 sep=},
			ylabel={frequency response},
			xlabel={frequency \(f\)},
 			xmin = 0.0,
 			xmax = 0.5,
 			ymin = 0,
 			ymax = 1,
ytick={0,0.5,1,1.5,2,2.5,3},
xtick={0,0.1,0.2,0.3,0.4,0.5},
legend style= {at={(0.5,-0.32)},anchor= north,legend columns=3},
ymajorgrids,
xmajorgrids,
			enlarge x limits=0.0]			
			\addplot[line width=1pt,mark=none,black,dotted] table[x=f, y=H_2_01] \tablefrequencyresponse; \addlegendentry{\(H_2^{0.9;0.1}\)}
			\addplot[line width=1pt,mark=none,gray,dotted] table[x=f, y=H_2_02] \tablefrequencyresponse; \addlegendentry{\(H_2^{0.8;0.2}\)}
			\addplot[line width=1pt,mark=none,lightgray,dotted] table[x=f, y=H_2_05] \tablefrequencyresponse; \addlegendentry{\(H_2^{0.5;0.5}\)}
		\end{axis}
\end{tikzpicture}\\
\begin{tikzpicture}
		\begin{axis}[width=0.45\textwidth,height=0.25\textheight,
			x tick label style={/pgf/number format/1000 sep=},
			y tick label style={/pgf/number format/1000 sep=},
			ylabel={frequency response},
			xlabel={frequency \(f\)},
 			xmin = 0.0,
 			xmax = 0.5,
 			ymin = 0,
 			ymax = 1,
ytick={0,0.5,1,1.5,2,2.5,3},
xtick={0,0.1,0.2,0.3,0.4,0.5},
legend style= {at={(0.5,-0.32)},anchor= north,legend columns=2},
ymajorgrids,
xmajorgrids,
			enlarge x limits=0.0]			
			\addplot[line width=1pt,mark=none,black,dash pattern=on 4pt off 2pt] table[x=f, y=H_1_05] \tablefrequencyresponse; \addlegendentry{\(H_1^{0.5}\)}
			\addplot[line width=1pt,mark=none,black,dotted] table[x=f, y=H_2_05] \tablefrequencyresponse; \addlegendentry{\(H_2^{0.5;0.5}\)}
			\addplot[line width=1pt,mark=none,black] table[x=f, y=H_05_05] \tablefrequencyresponse; \addlegendentry{\(H^{0.5;0.5;0.5}\)}
		\end{axis}
\end{tikzpicture}&
\begin{tikzpicture}
		\begin{axis}[width=0.45\textwidth,height=0.25\textheight,
			x tick label style={/pgf/number format/1000 sep=},
			y tick label style={/pgf/number format/1000 sep=},
			ylabel={frequency response},
			xlabel={frequency \(f\)},
 			xmin = 0.0,
 			xmax = 0.5,
 			ymin = 0,
 			ymax = 1,
ytick={0,0.5,1,1.5,2,2.5,3},
xtick={0,0.1,0.2,0.3,0.4,0.5},
legend style= {at={(0.5,-0.32)},anchor= north,legend columns=2},
ymajorgrids,
xmajorgrids,
			enlarge x limits=0.0]			
			\addplot[line width=1pt,mark=none,black,dash pattern=on 4pt off 2pt] table[x=f, y=H_1_01] \tablefrequencyresponse; \addlegendentry{\(H_1^{0.1}\)}
			\addplot[line width=1pt,mark=none,black,dotted] table[x=f, y=H_2_02] \tablefrequencyresponse; \addlegendentry{\(H_2^{0.8;0.2}\)}
			\addplot[line width=1pt,mark=none,black] table[x=f, y=H_01_02] \tablefrequencyresponse; \addlegendentry{\(H^{0.1;0.8;0.2}\)}
		\end{axis}
\end{tikzpicture}
\end{tabular}
\caption{Frequency response of \(H_1\) (dashed), \(H_2\) (dotted) and \(H\) (solid) for special parameters. Top-left: The frequency response of an IIR filter is able to block even low frequency signals, but it cannot be zero at \(f=0.5\). Top-right: The frequency response of an FIR filter cannot be lower than the lightgray dotted line, but for \(f=0.5\) it can be zero. Bottom: When these both filters are concatenated, the resulting frequency response can combine the benefits of each filter.}
\label{Fig:frequency_response}
\end{figure}
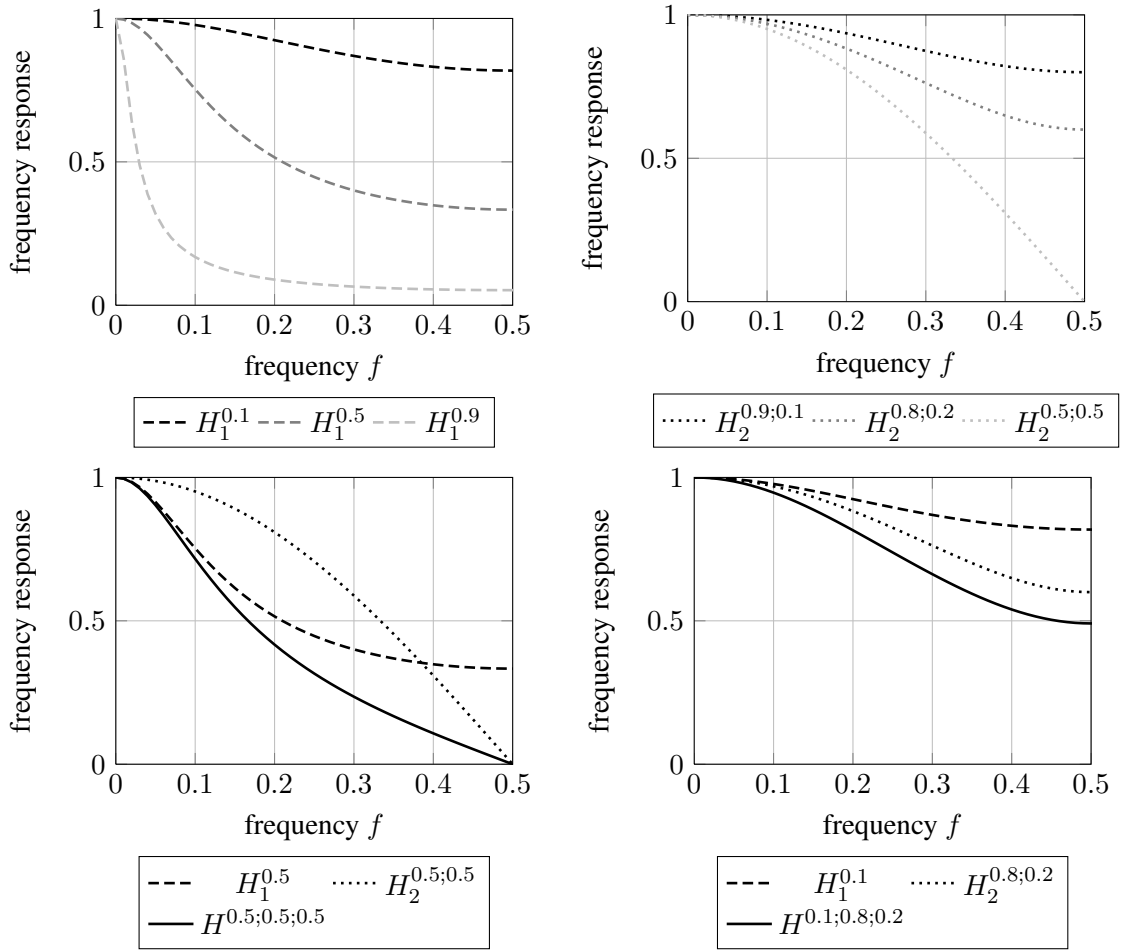
The output of the cell is already bounded in \([-2,2]\), but to fulfill Definition \ref{Def:md_cell} we change \eqref{E:update_equation_y} to
\begin{align}
\label{E:13_leakyLP_output_equation}
   y_c^\fp=\fout{y_{\omega_0}^\fp s_c^\fp+y_{\omega_1}^\fp s_c^{\fp^-}}
\end{align}
with \(\foutshort=\tanh\) to ensure \(y_c^\fp\in\left[-1,1\right]\). This additional non-linearity is not necessary but leads to a better performance. This new cell type called \emph{\multiD Leaky lowpass (LeakyLP) cell} is defined in Definition \ref{defi:md_leakylp_cell}. A block diagram of a \oneD LeakyLP cell is shown in Figure \ref{Fig:LeakyLP_1d} and different frequency responses in Figure \ref{Fig:frequency_response}.
\subsection{General First Order \multiD Cells}
With the theory of this section we can easily create new cell types. In general, a cell has a number of gates \(\gamma_1,\gamma_2,\ldots \in\Gamma_c\). For \(D=1\) a previous state \(s_c^{\fp^-}\) is given directly. Otherwise the previous state is calculated as trainable convex combination of \(D\) previous states, like described in Section \ref{S:Stable}.
In Table \ref{T:cells} cell layouts are depicted whereby type A is the cell developed in Section \ref{S:13_leakyLP} (compare to \eqref{E:H_z_filter}).
For the other types we briefly want to describe the main ideas.
\subsubsection{The \multiD Butterworth Lowpass Filter}
The cell of type B is a special case of the LeakyLP cell. When we set \(y_{\omega_0}^\fp=y_{\omega_1}^\fp=0.5\) there is a direct relation between the cutoff frequency of a discrete Butterworth lowpass filter and the activation of \(y_\phi^\fp\): Let \(f_\text{cutoff}\) be the frequency, where amplitude response is reduced to \(\frac{1}{\sqrt{2}}\) of the maximal gain. We can calculate \(f_\text{cutoff}\) by
\begin{align}
\label{E:f_cutoff}
   f_\text{cutoff}&=\frac{1}{\pi}\arctan\left(\frac{1-y_\phi^\fp}{1+y_\phi^\fp}\right)\\
\Leftrightarrow y_\phi^\fp&=\frac{1+\tan(\pi f_\text{cutoff})}{1-\tan(\pi f_\text{cutoff})}\notag
\end{align}
with the bounds \(f_\text{cutoff}\in(0,0.5)\) and \(y_\phi^\fp\in(-1,1)\) \citep[for more details see][2.2;6.4.2]{ds00}. For \(y_\phi^\fp\in\left(0,1\right)\) we get \(f_\text{cutoff}\in\left(0,0.25\right)\). In Figure \ref{Fig:frequency_response2} (left) we can see, that even for a negative value of \(y_\phi^\fp\) and a highpass characteristic of \(H_1(z)\) the impulse response \(H(z)\) has a lowpass characteristic.
\subsubsection{Adding an Additional State Gate}
In \ref{S:13_leakyLP_creating_a_first_order_cell} we fulfilled \eqref{E:leakyLP_alphabound} for the \multiD LeakyLP cell by setting \(\alpha_0:=1-\alpha_1\), so \(\alpha_0\) is directly connected with \(\alpha_1\). Another solution would be to add an additional value \(\gamma\in\left(0,1\right)\) and choose \(\alpha_0:=\gamma\left(1-\alpha_1\right)\).
So we can extend the \multiD LeakyLP cell by adding an additional gate \(\gamma_4\) for the previous state (see type C). Unfortunately this does not lead to a better performance and one more gate has to be calculated.
\subsubsection{Another Solution for the Output}
The cell of type D is another solution to choose \(b_0\) and \(b_1\) in Section \ref{S:13_leakyLP_creating_a_first_order_cell}. For the LeakyLP cell we calculate the output as described in \eqref{E:13_leakyLP_output_equation}. Now we set \(b_0=\gamma_2^\fp\gamma_3^\fp\) and \(b_1=\left(1-
\gamma_2^\fp\right)\gamma_3^\fp\), and get
\begin{align*}
y_c^\fp=\gamma_3^\fp\left(\gamma_2^\fp s_c^\fp+\left(1-\gamma_2^\fp\right)s_c^{\fp^-}\right).   
\end{align*}
This cell actually works as well as the \multiD LeakyLP cell and has the same number of gates. In this case we do not need a squashing function \(\foutshort\), because we already have \(y_c^\fp\in[-1,1]\).
\subsubsection{An \multiD Cell as \multiD PID-Controller}
Type E has a completely different interpretation: In controlling engineering a PID-controller gets an error as input.
In our case the gate activations have to decide, if the proportional (P), the integral (I) or the derivative (D) term of the error is important for the output.
When \(\gamma_1^\fp\approx 0\) we have \(\yin^\fp\approx s_c^\fp\) so the internal state is proportional to the input.
Then \(\gamma_2^\fp\) gates the proportional part (P) of the input.
The second gate \(\gamma_3^\fp\) gates the difference between the last and the current input, which can be seen as a discrete \der (D).
If \(\gamma_1^\fp\approx 1\) the internal state is an exponential moving average of \(\yin^\fp\) which is an integral term.
So \(\gamma_2^\fp\) gates a mainly integral part of the input (I), whereas \(\gamma_3^\fp\) gates a mainly proportional part of the input (P).
Dependent on \(\gamma_1^\fp\) type E can be a PD-controller, a PI-controller or a mix of these both.
In Figure \ref{Fig:frequency_response2}(right) can be seen the frequency response of this cell for different gate activations. 
\begin{table}
\begin{center}
\resizebox{\textwidth}{!}{
\begin{tabular}{clcc}
\hline\hline
Type&\(\frein{\cdot}\)&\(\fraus{\cdot}\) &\(H(z)\) for \(\fout{x}=x\) \\ \hline
A&\(\left(1-\gamma_1^\fp\right)\yin^\fp+\gamma_1^\fp s_c^{\fp^-}\)&\(\fout{\gamma_2^\fp s_c^\fp+\gamma_3^\fp s_c^{\fp^-}}\)&\(\frac{(1-\gamma_1^\fp)}{1-\gamma_1^\fp z^{-1}}\left(\gamma_2^\fp + \gamma_3^\fp z^{-1}\right)\)\\
B&\(\left(1-\gamma_1^\fp\right)\yin^\fp+\gamma_1^\fp s_c^{\fp^-}\)&\(\frac{s_c^\fp+s_c^{\fp^-}}2\)&\(\frac{(1-\gamma_1^\fp)}{1-\gamma_1^\fp z^{-1}}\frac{1+z^{-1}}{2}\)\\
C&\(\left(1-\gamma_1^\fp\right)\yin^\fp+\gamma_1^\fp\gamma_4^\fp s_c^{\fp^-}\)&\(\fout{\gamma_2^\fp s_c^\fp+\gamma_4^\fp s_c^{\fp^-}}\)&\(\frac{(1-\gamma_1^\fp)}{1-\gamma_1^\fp\gamma_4^\fp z^{-1}}\left(\gamma_2^\fp + \gamma_3^\fp z^{-1}\right)\)\\
D&\(\left(1-\gamma_1^\fp\right)\yin^\fp+\gamma_1^\fp s_c^{\fp^-}\)&\(\gamma_3^\fp\left(\gamma_2^\fp s_c^\fp+\left(1-\gamma_2^\fp\right) s_c^{\fp^-}\right)\)&\(\frac{(1-\gamma_1^\fp)}{1-\gamma_1^\fp z^{-1}}\gamma_3^\fp\left(\gamma_2^\fp+\left(1-\gamma_2^\fp\right) z^{-1}\right)\)\\
E&\(\left(1-\gamma_1^\fp\right)\yin^\fp+\gamma_1^\fp s_c^{\fp^-}\)&\(\fout{\gamma_2^\fp s_c^\fp+\gamma_3^\fp\left( s_c^\fp-s_c^{\fp^-}\right)}\)&\(\frac{(1-\gamma_1^\fp)}{1-\gamma_1^\fp z^{-1}}\left(\gamma_2^\fp+\gamma_3^\fp\left(1- z^{-1}\right)\right)\)\\\hline\hline
\end{tabular}}
\caption{Update equations and transfer function for different cell layouts. The column \(s_c^\fp\) contains the update equations to calculate the internal state  and column \(y_c^\fp\) contains the update equation for the output. These equations lead to the transfer function \(H(z)=H^{\gamma_1^\fp,\gamma_2^\fp,\ldots}(z)\).}
\label{T:cells}
\end{center}
\end{table}
\begin{figure}[t]
\centering
\begin{tabular}{lr}

\begin{tikzpicture}
		\begin{axis}[width=0.45\textwidth,height=0.25\textheight,
			x tick label style={/pgf/number format/1000 sep=},
			y tick label style={/pgf/number format/1000 sep=},
			ylabel={frequency response},
			xlabel={frequency \(f\)},
 			xmin = 0.0,
 			xmax = 0.5,
 			ymin = 0,
 			ymax = 3,
ytick={0,0.5,1,1.5,2,2.5,3},
xtick={0,0.1,0.2,0.3,0.4,0.5},
legend style= {at={(0.5,-0.32)},anchor= north,legend columns=2},
ymajorgrids,
xmajorgrids,
			enlarge x limits=0.0]			
			\addplot[line width=1pt,mark=none,black,dash pattern=on 4pt off 2pt] table[x=f, y=H_1_m05] \tablefrequencyresponse; \addlegendentry{\(H_1^{-0.5}\)}
			\addplot[line width=1pt,mark=none,black,dotted] table[x=f, y=H_2_05] \tablefrequencyresponse; \addlegendentry{\(H_2^{0.5;0.5}\)}
			\addplot[line width=1pt,mark=none,black,solid] table[x=f, y=H_m05_05] \tablefrequencyresponse; \addlegendentry{\(H^{-0.5;0.5;0.5}\)}
			\addplot[line width=1pt,mark=none,lightgray,dotted] coordinates{(0,0.7071) (0.5,0.7071)};
			\addplot[line width=1pt,mark=none,lightgray,dotted] coordinates{(0.3975,0) (0.3975,3)};
		\end{axis}
\end{tikzpicture}&
\begin{tikzpicture}
		\begin{axis}[width=0.45\textwidth,height=0.25\textheight,
			x tick label style={/pgf/number format/1000 sep=},
			y tick label style={/pgf/number format/1000 sep=},
			ylabel={frequency response},
			xlabel={frequency \(f\)},
 			xmin = 0.0,
 			xmax = 0.5,
 			ymin = 0,
 			ymax = 1,
ytick={0,0.25,0.5,0.75,1},
xtick={0,0.1,0.2,0.3,0.4,0.5},
legend style= {at={(0.5,-0.32)},anchor= north,legend columns=2},
ymajorgrids,
xmajorgrids,
			enlarge x limits=0.0]
			\addplot[line width=1pt,mark=none,black,dash pattern=on 4pt off 2pt] table[x=f, y=H_1_05] \tablefrequencyresponse; \addlegendentry{\(H_1^{0.5}\)}
			\addplot[line width=1pt,mark=none,black,solid] table[x=f, y=PID_05_01] \tablefrequencyresponse; \addlegendentry{\(H^{0.5;0.9;0.1}\)}
			\addplot[line width=1.5pt,mark=none,gray,solid] table[x=f, y=PID_05_05] \tablefrequencyresponse; \addlegendentry{\(H^{0.5;0.5;0.5}\)}
			\addplot[line width=1pt,mark=none,lightgray,solid] table[x=f, y=PID_05_09] \tablefrequencyresponse; \addlegendentry{\(H^{0.5;0.1;0.9}\)}
		\end{axis}
\end{tikzpicture}
\end{tabular}
\caption{Frequency response of \(H_1\) (dashed), \(H_2\) (dotted) and \(H\) (solid) for special layouts and parameters of  Table \ref{T:cells}. Left (type B): A butterworth lowpass filter with a negative gate activation \(\gamma_0^\fp=-0.5\) leads to the cutoff frequency \(f_\text{cutoff}=\frac{1}{\pi}\arctan\left(\frac{1+0.5}{1-0.5}\right)\approx0.3976\). Right (type E): Different frequency responses of a PID controller. Having a fixed \(\gamma_0^\fp=0.5\) the frequency response is dependent on the activations of \(\gamma_1^\fp\) and \(\gamma_2^\fp\) and can have lowpass (black), allpass (gray) and highpass (lightgray) characteristic.}
\label{Fig:frequency_response2}
\end{figure}
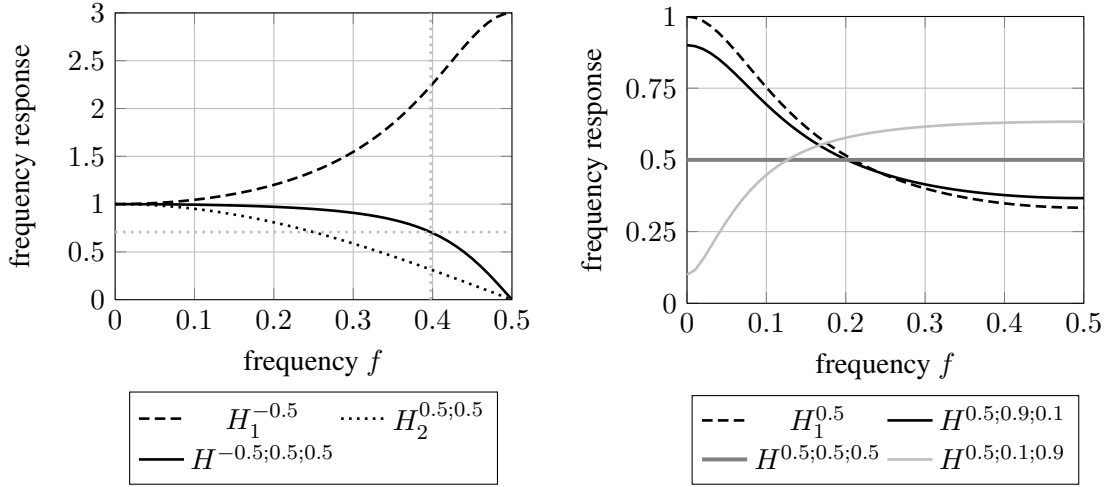
\end{section}

\bibliographystyle{apalike}
\bibliography{bib}

\end{document}